\newtheorem{assumption}{Assumption}
\newtheorem{theorem}{Theorem}
\newtheorem{definition}{Definition}
\newtheorem{proposition}{Proposition}
\pgfplotsset{width=8cm,height=5cm,compat=1.9}
\newlength\figureheight
\newlength\figurewidth
\newcommand*{\addFileDependency}[1]{
  \typeout{(#1)}
  \@addtofilelist{#1}
  \IfFileExists{#1}{}{\typeout{No file #1.}}
}
\newcommand*{\myexternaldocument}[1]{%
    \externaldocument{#1}%
    \addFileDependency{#1.tex}%
    \addFileDependency{#1.aux}%
}
\crefname{appsec}{Appendix}{Appendices}
\newcommand{\answerTODO}[1][]{\textcolor{red}{\bf [TODO]}}
\definecolor{mygreen}{rgb}{0.2, 0.7, 0.2}
\definecolor{myorange}{rgb}{0.9, 0.5, 0.0}
\definecolor{blue}{HTML}{5DA5DA}
\definecolor{orange}{HTML}{FAA43A} 
\definecolor{green}{HTML}{60BD68} 
\definecolor{pink}{HTML}{F17CB0} 
\definecolor{brown}{HTML}{B2912F} 
\definecolor{purple}{HTML}{B276B2} 
\definecolor{yellow}{HTML}{DECF3F} 
\definecolor{red}{HTML}{F15854} 
\definecolor{gray}{HTML}{4D4D4D} 
\newcommand{\name}[1]{{\textsc{#1}}\xspace}
\newcommand{\bnn}{\name{bnn}}
\newcommand{\bnns}{\textsc{bnn}s\xspace}
\newcommand{\relu}{\name{ReLU}}
\newcommand{\mcmc}{\name{mcmc}}
\newcommand{\sde}{\name{sde}}
\newcommand{\sdes}{\textsc{sde}s\xspace}
\newcommand{\ode}{\name{ode}}
\newcommand{\odes}{\textsc{ode}s\xspace}
\newcommand{\sg}{\name{sg}}
\newcommand{\sgd}{\name{sgd}}
\newcommand{\sgmcmc}{\name{sg-mcmc}}
\newcommand{\sgld}{\name{sgld}}
\newcommand{\sghmc}{\name{sghmc}}
\newcommand{\sdehmc}{\name{shmc}}
\newcommand{\hmc}{\name{hmc}}
\newcommand{\dataset}{\mathrm{D}}
\newcommand{\boston}{\name{boston}}
\newcommand{\concrete}{\name{concrete}}
\newcommand{\energy}{\name{energy}}
\newcommand{\yacht}{\name{yacht}}
\newcommand{\iono}{\name{ionosphere}}
\newcommand{\vehicle}{\name{vehicle}}
\newcommand{\mnll}{\name{mnll}}
\newcommand{\rmse}{\name{rmse}}
\newcommand{\acf}{\name{acf(1)}}
\newcommand{\thetadim}{d}
\newcommand{\zvect}{\mathbf{z}}
\newcommand{\dzvect}{d\mathbf{z}}
\newcommand{\euler}{\name{euler}}
\newcommand{\rktwo}{\name{rk2}}
\newcommand{\lietrotter}{\name{lie-trotter}}
\newcommand{\leapfrog}{\name{leapfrog}}
\newcommand{\symmetric}{\name{symmetric}}
\newcommand{\mtthree}{\name{mt3}}
\newcommand{\xvect}{\mathbf{x}}
\newcommand{\wvect}{\mathbf{w}}
\newcommand{\zerovect}{\mathbf{0}}
\newcommand{\thetavect}{\boldsymbol{\theta}}
\newcommand{\dthetavect}{d\boldsymbol{\theta}}
\newcommand{\Cfrict}{C\boldsymbol{I}}
\newcommand{\rvect}{\mathbf{r}}
\newcommand{\p}{\partial}
\newcommand{\eye}{\boldsymbol{I}}
\begin{document}

%

%

\twocolumn[

\aistatstitle{Revisiting the Effects of Stochasticity for Hamiltonian Samplers}

\aistatsauthor{ Giulio Franzese \And Dimitrios Milios \And  Maurizio Filippone \And Pietro Michiardi}

\aistatsaddress{ EURECOM Data Science Department   Biot (France) } ]


\begin{abstract}
We revisit the theoretical properties of Hamiltonian stochastic differential equations (\sdes) for Bayesian posterior sampling, and we study the two types of errors that arise from numerical \sde simulation: the discretization error and the error due to noisy gradient estimates in the context of data subsampling. 
Our main result is a novel analysis for the effect of mini-batches through the lens of differential operator splitting, revising previous literature results. 
The stochastic component of a Hamiltonian \sde is decoupled from the gradient noise, for which we make no normality assumptions.
This leads to the identification of a convergence bottleneck: when considering mini-batches, the best achievable error rate is $\mathcal{O}(\eta^2)$, with $\eta$ being the integrator step size.
Our theoretical results are supported by an empirical study on a variety of regression and classification tasks for Bayesian neural networks.

\end{abstract}

\section{INTRODUCTION}
\label{sec:introduction}


Hamiltonian Monte Carlo (\hmc) is a popular approach to obtain samples from intractable distributions \citep{neal1996bayesian,neal2011mcmc,Hoffman2014}. 
It presents, however, significant computational challenges for large datasets, as it requires access to the full gradient of the associated Hamiltonian system.
Stochastic-Gradient \hmc (\sghmc) \citep{chen2014stochastic} was proposed as a scalable alternative to \hmc, by admitting noisy estimates of the gradient using mini-batching. In \sghmc, the Hamiltonian dynamics is modified so as to include a friction term that counteracts the effects of the gradient noise.
This approach has proven effective in dealing with the difficulties in sampling from the posterior distribution over model parameters of Bayesian Neural/Convolutional Networks (\bnns) \citep{wenzel2020good,tran2020need}.


Stochastic gradient (\sg) methods have been extensively studied as a means for Markov chain Monte Carlo (\mcmc)-based algorithms to scale to large data. 
Variants of \sgmcmc algorithms have been studied through the lenses of first \citep{welling2011bayesian,ahn2012bayesian,NIPS2013_4883} or second-order \citep{chen2014stochastic,ma2015complete} Langevin dynamics; these are mathematically convenient continuous-time processes which correspond to discrete-time gradient methods with and without momentum, respectively.
Langevin dynamics are formally captured by an appropriate set of stochastic differential equations (\sdes),
whose theoretical properties have been extensively studied \citep{kloeden2013numerical,debussche2012weak} with a particular emphasis on the stationary property of these processes \citep{abdulle2014high,MILSTEIN200781}.
As in \citet{abdulle2014high,abdulle2015long}, we are interested in the asymptotic (in time) performance of such sampling schemes. The reader is referred to \citet{vollmer2016exploration,gao2018global,gao2018breaking,futami2020accelerating,xu2018global} for additional insights into the non-asymptotic behavior of these methods.

In this work, we seek to re-evaluate the connections between \sg and stochastic Hamiltonian dynamics (also known as second order or underdamped Langevin dynamics) with the aim of improving our current understanding of the goodness of sampling from intractable distributions when considering mini-batching.
We consider a system with potential $U(\thetavect)$ which is the negative of the logarithm of the density function associated with the distribution we aim to sample from.
We then introduce position variables $\thetavect\in \mathbb{R}^{\thetadim}$ (i.e.,\ parameters) and momentum variables $\rvect\in \mathbb{R}^{\thetadim}$ obeying the following \sde:
\begin{equation}\label{hamsde_exp}
\begin{split}
d\rvect(t)&=-\nabla_{\thetavect}U(\thetavect(t))dt - C\mathbf{M}^{-1}\mathbf{r}(t)dt+\sqrt{2C}d\mathbf{w}(t)\\
d\thetavect(t)&=\mathbf{M}^{-1}\mathbf{r}(t)dt.
\end{split}
\end{equation}
This is an extension of an Hamiltonian system with a friction term and an appropriately
scaled
Brownian
motion $\mathbf{w}(t)$,
where $C>0$\footnote{
    Here for simplicity we consider $C \in \mathbb{R}$, but in general $C$ can be a matrix.
}
and $\mathbf{M}$ is a symmetric, positive definite matrix (a.k.a.\ mass matrix).
A common assumption in the literature \citep{chen2014stochastic,
    ahn2012bayesian,
    ma2015complete}, is that the noise associated with stochastic estimates of $\nabla_{\thetavect}U(\thetavect(t))$
is normally distributed. This noise is linked with the additive Brownian motion term in
\cref{hamsde_exp}.


%
%
%
In this work we challenge this assumption and
we advocate that \sg should be completely decoupled from the \sde dynamics. More specifically, we show that the Brownian motion term is not a good model for the stochasticity of the gradient introduced by the mini-batching scheme (see also \cref{sec:sgdvssde} for an extended discussion).
In this sense, our framework is similar to \citet{chen2015convergence};
however we propose an interpretation of the effect of mini-batching through the lenses of  differential operator splitting, by leveraging the huge literature concerning the simulation of high dimensional Hamiltonian systems \citep{childs2019theory,suzuki1977convergence,hatano2005finding,childs2019nearly,low2019well}.
Earlier attempts \citep{pmlr-v37-betancourt15,shahbaba2014split} to use operator splitting as a tool to describe mini-batching focus on \hmc, whereas Hamiltonian \sde dynamics have never been studied under this formalism, to the best of our knowledge. Recently \citet{zou2021convergence} have studied non-asymptotic convergence for \hmc when considering mini-batches,  but these results are limited to strongly convex potentials.



\begin{figure}
    \centering
    \includegraphics[width=0.48\textwidth]{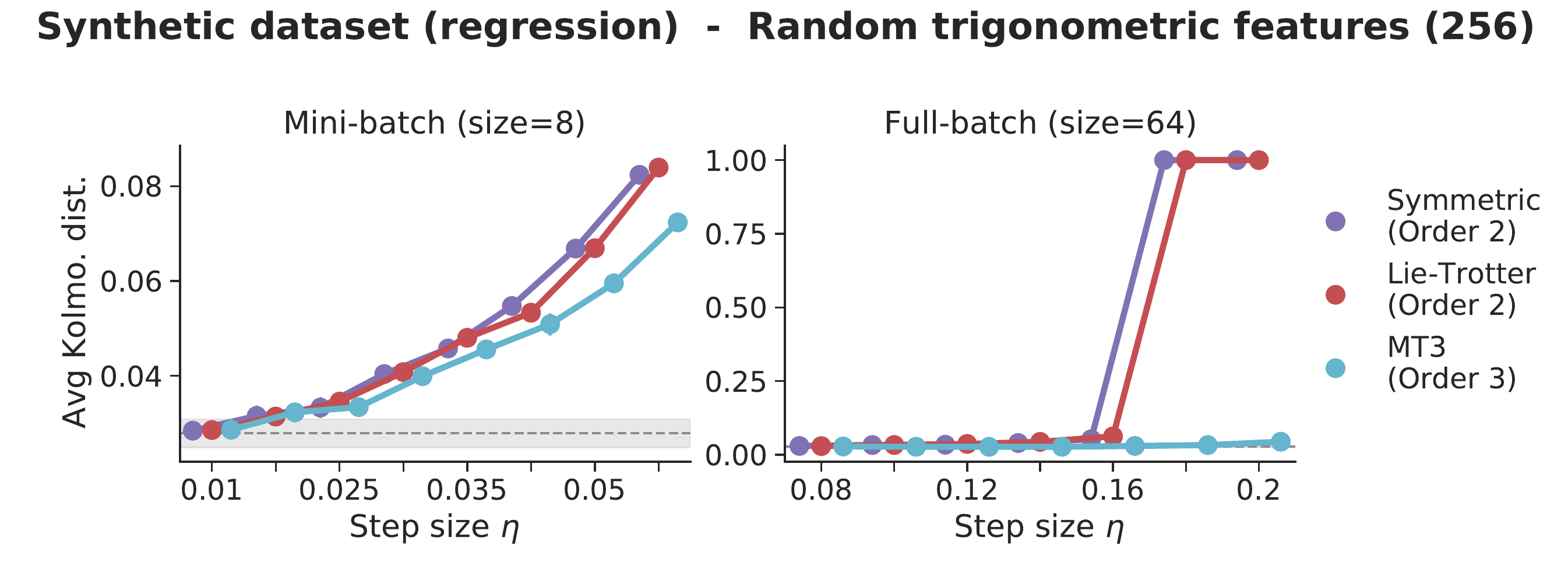}
    \caption{We evaluate integrators of different orders based on the distance from the true posterior distribution for a Gaussian-linear system. Left: for mini-batch size equal to $8$, there is an area for which the 3-rd order integrator does not perform any better than integrators of lower order. Right: There is no such bottleneck for the full-batch version.}\label{fig:minibottle} 
\end{figure}

Our main contribution can be summarized as the derivation of new convergence results considering the two types of error induced by the simulation of the proposed \sde scheme:
the discretization and the \sg errors.
The discretization of \sdes has been extensively studied in the literature \citep{kloeden2013numerical,debussche2012weak,abdulle2014high,MILSTEIN200781}. 
In \cref{sec:revisit}, we leverage these results to present quantitatively precise claims about sampling schemes in the context of Bayesian inference for modern machine learning problems.
In \cref{sec:minibatches}, our treatment of mini-batches in terms of differential operator splitting does not rely on Gaussian assumptions regarding the form of the noise.
Given a numerical integrator of order $p$ and step size $\eta$, we show that although the discretization error vanishes at a rate $O(\eta^p)$, the \sg error vanishes at rate $O(\eta^2)$.
We thus identify a convergence bottleneck, as demonstrated on an example (details in \cref{ssec:experiment_setup_appendix}) in \Cref{fig:minibottle}, introduced by the mini-batching. 
This result is in contrast with the current understanding of convergence properties of sampling algorithms \citep{chen2015convergence}.
Our work then updates the previous best result about convergence rates for a broad range of integrators and with minimal assumptions about the mini-batching process. 


Our second contribution (\cref{sec:hmcvssdehmc}) is a reinterpretation of the \hmc algorithm  with partial momentum refreshment \citep{neal2011mcmc,horowitz1991generalized} as an integrator for Hamiltonian \sdes \citep{abdulle2015long}. This connection provides insights into the behavior of classical \hmc schemes and extends the results of our theoretical analysis.


In \cref{sec:experiments}, we conduct an extensive experimental campaign that corroborates our theory on the convergence rate of various Hamiltonian-based \sde schemes, by exploring the behavior of step size and mini-batch size for a large number of models and datasets. 

\section{HAMILTONIAN SDES FOR SAMPLING}
\label{sec:revisit}



Given a dataset of observations $\dataset=\{\xvect_i\}_{i=1}^{N}$, the Bayesian treatment of machine learning models can be summarized as the combination of a prior belief $p(\thetavect)$ and a likelihood function $p(\dataset | \thetavect)$, into a posterior distribution $p(\thetavect|\dataset) \propto p(\dataset|\thetavect) p (\thetavect)$.
Since the posterior is analytically intractable for non-linear models such as \bnns \citep{Bishop06}, our goal is to draw samples from the density $p(\thetavect|\dataset)$, which is expressed implicitly in exponential form as $p(\thetavect|\dataset)\propto\exp(-U(\thetavect))$, where:
\begin{flalign}
 &  - U(\thetavect)=\sum\limits_{i=1}^N\log p(\xvect_i|\thetavect) + \log p(\thetavect),
    \qquad \nonumber\\&\text{and}
    \qquad p(\dataset| \thetavect) = \prod_{i=1}^N p(\xvect_i |\thetavect).
    \label{eq:potential}
\end{flalign}
In Hamiltonian systems, $U(\thetavect)$ is known as the
\emph{potential}, which,
    together
with
a \emph{kinetic energy} term, yields the Hamiltonian function
$H(\zvect)=U(\thetavect)+\nicefrac{1}{2}||\mathbf{M}^{-1}\mathbf{r}||^2$.
The vector $\zvect=[\mathbf{r}^\top,\thetavect^\top]^\top$ denotes the overall system state,
which includes the position $\thetavect$ (i.e.,\ parameters) and the conjugate momentum $\mathbf{r}$,
while $\mathbf{M}$ is a symmetric, positive definite mass matrix. 
Then, we consider the following \sde, which is a compact form of \cref{hamsde_exp}:
\begin{equation}\label{hamsde}
    \dzvect(t)=\begin{bmatrix} -\Cfrict & -\eye\\ \eye & \zerovect \end{bmatrix}\nabla H(\zvect(t))dt+\begin{bmatrix} \sqrt{2C} d\mathbf{w}(t) \\ \zerovect \end{bmatrix}.
\end{equation}
We study the stationary behavior of the process of \cref{hamsde}, defining 
$\nabla_\zvect=[\nabla_\mathbf{r}^\top,\nabla_{\thetavect}^\top]^\top$, 
through the following differential operator:
\begin{flalign}
    &\mathcal{L}=-\left(\nabla^\top_{\thetavect}U(\thetavect)\right)\nabla_{\mathbf{r}}+\left(\left(\mathbf{M}^{-1}\mathbf{r}\right)^\top\right)\nabla_{\thetavect}\nonumber\\& -C\left(\left(\mathbf{M}^{-1}\mathbf{r}\right)^\top\right)\nabla_{\mathbf{r}}+C\nabla_{\mathbf{r}}^\top\nabla_{\mathbf{r}},\label{eq:infgen}
\end{flalign}
which is known as the \emph{infinitesimal generator}.
The Fokker Planck equation, that can be used to obtain the stationary distribution $\rho_{ss}(\zvect)$ of the stochastic process writes as  $\mathcal{L}^\dagger\rho_{ss}=0$, where $^\dagger$ indicates the adjoint of the operator. Some sample paths for a simple two-parameter logistic regression can be seen in \cref{fig:sample_paths},
where we vary the constant $C$. Although the stationary distribution is independent of $C$, the transient dynamics change, resulting in paths of different form, as we see in the figure.
Essentially, $C$ is a user-defined parameter whose effect we explore in \cref{ssec:explore_C} for a wide range of machine learning models.
Then we have the following theorem:
\begin{theorem}
\label{th:stationary}
    For an ergodic stochastic process described by the \sde of \cref{hamsde} with stationary distribution $\rho_{ss}(\zvect)$ we have:
    \begin{equation}
        \rho_{ss}(\zvect) \propto \exp(-H(\zvect))= \exp(-U(\thetavect)-\nicefrac{1}{2}||\mathbf{M}^{-1}\mathbf{r}||^2)
    \end{equation}
\end{theorem}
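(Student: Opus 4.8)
The plan is to verify directly that the candidate density $\rho_{ss}(\zvect) \propto \exp(-H(\zvect))$ solves the stationary Fokker--Planck equation $\mathcal{L}^\dagger \rho_{ss} = 0$, and then to invoke the assumed ergodicity to conclude that this is the unique invariant density. The first step is to write the adjoint $\mathcal{L}^\dagger$ of the generator in \cref{eq:infgen} explicitly: every first-order (transport) term of the form $\mathbf{b}^\top\nabla$ contributes $-\nabla\cdot(\mathbf{b}\,\cdot)$ to $\mathcal{L}^\dagger$, while the second-order term $C\,\nabla_{\rvect}^\top\nabla_{\rvect}$ is formally self-adjoint. Collecting terms, $\mathcal{L}^\dagger\rho = \nabla_{\rvect}\cdot\big((\nabla_{\thetavect}U)\rho\big) - \nabla_{\thetavect}\cdot\big((\mathbf{M}^{-1}\rvect)\rho\big) + C\,\nabla_{\rvect}\cdot\big((\mathbf{M}^{-1}\rvect)\rho\big) + C\,\nabla_{\rvect}^\top\nabla_{\rvect}\rho$.

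I would then substitute $\rho_{ss} = Z^{-1}\exp(-H)$, using $\nabla_{\rvect}\rho_{ss} = -(\nabla_{\rvect}H)\rho_{ss}$ and $\nabla_{\thetavect}\rho_{ss} = -(\nabla_{\thetavect}H)\rho_{ss}$, and organise the computation by splitting the drift of \cref{hamsde} into its conservative part (the one governed by the skew-symmetric block of the drift matrix) and its dissipative part (the friction $-C\mathbf{M}^{-1}\rvect$ together with the diffusion $C\,\nabla_{\rvect}^\top\nabla_{\rvect}$). For the conservative part, the contribution to $\mathcal{L}^\dagger\rho_{ss}$ equals $\big(\nabla\cdot\mathbf{b}_H - \mathbf{b}_H^\top\nabla H\big)\rho_{ss}$, which vanishes identically: $\nabla\cdot\mathbf{b}_H = 0$ because a Hamiltonian vector field is divergence-free (Liouville's theorem, equivalently the skew-symmetry of the block), and $\mathbf{b}_H^\top\nabla H = 0$ because $H$ is conserved along the Hamiltonian flow, again by skew-symmetry. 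For the dissipative part, a short direct computation shows that the friction transport term and the diffusion term cancel when acting on $\exp(-H)$: expanding $C\,\nabla_{\rvect}\cdot\big((\mathbf{M}^{-1}\rvect)\exp(-H)\big)$ yields a $C\,\mathrm{tr}(\mathbf{M}^{-1})$ term and a $-C\,(\mathbf{M}^{-1}\rvect)^\top(\nabla_{\rvect}H)$ term (times $\exp(-H)$), while $C\,\nabla_{\rvect}^\top\nabla_{\rvect}\exp(-H)$ yields a $-C\,(\nabla_{\rvect}^\top\nabla_{\rvect}H)$ term and a $C\,\|\nabla_{\rvect}H\|^2$ term; using the quadratic form of the kinetic energy these four terms sum to zero.

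Summing the two contributions gives $\mathcal{L}^\dagger\rho_{ss} = 0$, so $\exp(-H)$ is an invariant density, and it is normalizable because $\exp(-U)$ is (up to a constant) a proper posterior density while the kinetic term is Gaussian in $\rvect$, so $Z<\infty$. Finally, ergodicity --- part of the hypothesis --- implies the invariant density is unique, hence $\rho_{ss}(\zvect)\propto\exp(-H(\zvect))$; in particular the answer does not depend on $C$, consistent with the remark preceding the theorem. I expect the main obstacle to be the dissipative cancellation in the second step: one must match the $\mathrm{tr}(\mathbf{M}^{-1})$ term coming from the divergence of the friction against the one from $\nabla_{\rvect}^\top\nabla_{\rvect}\exp(-H)$, and the quadratic-in-$\rvect$ terms against each other. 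This is exactly where the precise scaling $\sqrt{2C}$ of the Brownian motion relative to the friction coefficient $C$ is required --- the fluctuation--dissipation relation --- and everything else is routine bookkeeping.
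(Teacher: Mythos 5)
Your proposal is correct and follows essentially the same route as the paper: split $\mathcal{L}^\dagger$ into the Hamiltonian (skew-symmetric, divergence-free) part and the dissipative part, and verify that each annihilates $\exp(-H)$, with ergodicity giving uniqueness. The only cosmetic difference is that the paper factors the dissipative adjoint as $\nabla_{\rvect}^\top\bigl(C(\mathbf{M}^{-1}\rvect)\cdot + C\nabla_{\rvect}\bigr)$ and shows the inner bracket already vanishes on $\rho_{ss}$, whereas you expand into four terms and match the trace and quadratic contributions explicitly --- the same cancellation either way.
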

The proof can be found in \cref{sec:proofhamsde}.
A direct implication is that simulating the stochastic process allows us to compute ergodic averages of functions of
interest $\phi(\zvect)$, of the form $\int\phi(\zvect)\rho_{ss}(\zvect)\dzvect$ (details in \cref{sec2:appendix}). Since $\rho_{ss}(\zvect)=\rho_{ss}
(\thetavect)\rho_{ss}(\rvect)$, the procedure can be used to perform Bayesian averages of functions of $\thetavect$ only, $\phi(\thetavect)$ , as the following holds
$ \int \phi(\thetavect)p(\thetavect|\dataset)\dthetavect=\int \phi(\thetavect)\rho_{ss}(\thetavect)\dthetavect$. In this paper the theoretical derivations are carried out for generic functions of $\zvect$.

We shall refer to our scheme as \sde-based \hmc (\sdehmc).
This is different from \sghmc, for which \cref{hamsde} is modified so that the Brownian motion term has covariance $2 (\mathbf{C} - \tilde{\mathbf{V}})$, where $\tilde{\mathbf{V}}$ is an estimate of the covariance of the gradient.
This was done to counterbalance the effect of the stochastic gradient, which we believe is not necessary, as we discuss in \cref{sec:experiments}.

\begin{figure*}[t]
    \centering
    \includegraphics[width=\textwidth]{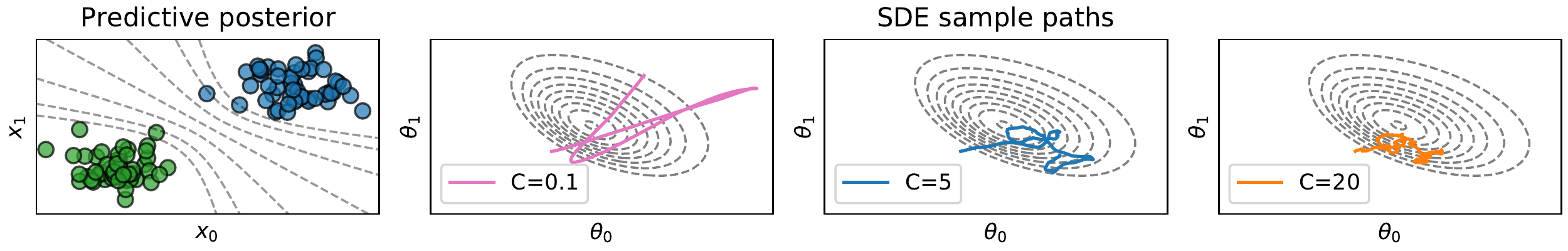}
    \caption{Sample paths for simple a two-parameter logistic regression problem (shown in the leftmost panel). We show \sde paths for different choices of friction coefficient $C$.}
    \label{fig:sample_paths}
\end{figure*}


\emph{Remark:}
In recent literature, \cref{hamsde} has been associated with normally-distributed estimates of the gradient \citep{chen2014stochastic,mandt2017stochastic}.
In our view, however, this connection is not well-justified, 
as for the Brownian motion term of \cref{hamsde} we have that $d\mathbf{w}(t) = \sqrt{dt} \mathcal{N}(\zerovect,\eye)$, while a (Gaussian) stochastic gradient term in the continuous limit becomes $dt \mathcal{N}(\zerovect,\eye)$.
A more detailed exposition can be found in \cref{sec:sgdvssde}.
Our alternative treatment for the study of the effect of mini-batches follows in \cref{sec:minibatches}.



\subsection{Ergodic errors of \sdes}

Except from a handful of cases, it is not possible to draw exact sample paths from arbitrary \sdes.
We consider a generic numerical integrator $\psi$, with step size $\eta$, whose purpose is to simulate the
stochastic evolution of the \sde of interest. 
Formally, we look for a stochastic mapping, that from a given initial condition $\zvect_0$, generates a new random variable $\zvect_{1}=\psi(\zvect_{0};\eta)$ by faithfully simulating the true continuous time stochastic dynamics $\zvect(\eta)|_{\zvect(0)=\zvect_{0}}$. 
%
Several quantitative metrics measuring the degree of accuracy of the simulation are available \citep{kloeden2013numerical}. The simplest is the strong error, which is the expected difference between true paths and simulated ones. Relaxing to the expected difference between functions of paths corresponds to quantifying the weak error:
\begin{definition}
\citep{debussche2012weak} Consider \cref{hamsde} with initial condition $\zvect_{0}$. The numerical integrator $\psi$ has weak order of convergence $p$ if
$
   | \mathbb{E}\left[\phi(\psi(\zvect_{0};\eta))\right]-\mathbb{E}\left[\phi(\zvect(\eta))\right]|=\mathcal{O}(\eta^{p+1})
$,
where $\zvect(\eta)$ is the value taken by \cref{hamsde} after a time $\eta$, the chosen step size.
\end{definition}

The transformation of the functions $\phi(\zvect)$ of the true stochastic dynamics $\zvect(t)$, starting from initial conditions $\zvect_0$ is described by the Kolmogorov differential equation $\mathbb{E}\left[\phi(\zvect(t))\right]=\exp{(t\mathcal{L})}\phi(\zvect)|_{\zvect=\zvect_0}$, that throughout the paper we indicate with abuse of notation as $\mathbb{E}\left[\phi(\zvect(t))\right]=\exp{(t\mathcal{L})}\phi(\zvect_0)$. The operator $\mathcal{U}$, defined as
$
    \mathcal{U}\phi(\zvect_{0})=\mathbb{E}\left[\phi(\psi(\zvect_{0};\eta))\right]
$, represents a $p_{\text{th}}$ order weak integrator if 
$\mathcal{U}=\exp{(\eta\mathcal{L})}+\mathcal{O}(\eta^{p+1})$.


Since we are only interested in samples from $p(\thetavect|D)$, it is sufficient to consider the even weaker \emph{ergodic average error}, defined hereafter. 
The numerical integrator iteratively induces a stochastic process $\zvect_{i}=\psi({\zvect_{i-1}};\eta),i=1,\dots N$. The ergodic average of a given function $\phi(\cdot)$
converges to the integral $\int \phi(\zvect)\rho^{\psi}_{ss}(\zvect) \dzvect$, where $\rho^{\psi}_{ss}$ is the stationary distribution of the stochastic process induced by the numerical integrator $\psi$. The ergodic average error is the difference between the ergodic average of the numerical integrator and the true average obtained with the stationary distribution of \cref{hamsde}:
\begin{equation}\label{ergerr}
    e(\psi,\phi)=\int \phi(\zvect)\rho^{\psi}_{ss}(\zvect) \dzvect-\int \phi(\zvect)\rho_{ss}(\zvect) \dzvect.
\end{equation}

We stress that, although outside the scope of this work, the results presented here can be extended to non-asymptotic (in time) settings. Indeed the error with respect to a finite time empirical average can be expressed as $
\frac{1}{N}\sum\limits_{n=0}^{N-1}\phi(\zvect_n)-\int \phi(\zvect)\rho_{ss}(\zvect) \dzvect    
$. By triangle inequality its absolute value can be bounded by $|
\frac{1}{N}\sum\limits_{n=0}^{N-1}\phi(\zvect_n)-\int \phi(\zvect)\rho^{\psi}_{ss}(\zvect) \dzvect| + |e(\psi,\phi)|   
|$. The first term can be bounded in expectation by considering the convergence speed (as a function of time) to the stationary distribution of the numerical integration scheme, either by classical arguments or attempting a weak backward error analysis characterization \citep{debussche2012weak}. In this work we consider only the quantity in \cref{ergerr}.

Our goal is to characterize the rate of convergence to zero for $e(\psi,\phi)$ as a function of the step size $\eta$, the most important free parameter. 
A sufficient condition for an integrator to be of a given ergodic order $p$, i.e. $e(\psi,\phi)=\mathcal{O}(\eta^p)$, is to have weak order $p$ \citep{abdulle2014high}.
See \cref{sec:abdulletheorem} for a more detailed exposition.


\subsection{Numerical integrators}

We explore the effect of a number of \sde integrators of weak order two and three; these schemes are outlined in detail in \cref{sec:numint}.
Hamiltonian systems evolve on manifolds with peculiar geometrical properties whose study is the subject of \emph{symplectic geometry}. In \cref{sec:proofquasi} we present the class of quasi-symplectic integrators \citep{milstein2003quasi}. For the purpose of understanding the main text it is sufficient to know that such integrators are empirically known to outperform their non-symplectic counterpart. 


The \leapfrog scheme is probably the simplest form of quasi-symplectic integrator,
which involves updating the position and momentum at interleaved time steps.
Importantly, while it has theoretically order-one of convergence both in weak and ergodic sense, we observe
empirically that its performance is very close to other quasi-symplectic order-two schemes. 
A similar phenomenon has been observed by \citet{milstein2003quasi}.

In the experimental section, we also examine the symmetric splitting integrator proposed in \cite{chen2015convergence} (\symmetric), which is also compatible with our \sdehmc scheme.
Finally we consider a \lietrotter splitting scheme \citep{abdulle2015long} that is of weak order-two, while it is also quasi-symplectic.
In \cref{sec:hmcvssdehmc} this scheme will be used to draw a theoretical connection between \hmc \citep{neal2011mcmc} and the proposed \sde framework in \cref{hamsde}.

In order to demonstrate the convergence bottleneck more clearly, we also include an integrator of third order in our comparisons.
We employ the 3-stage quasi-symplectic integrator of \citet{milstein2003quasi}.
This scheme might not be always practical, as it involves three stages and one computation of the Hessian.
However, it is sufficient to demonstrate the convergence bottleneck effect due to mini-batches.

\section{MINIBATCHES AS OPERATOR SPLITTING}
\label{sec:minibatches}




Traditionally, the effect of mini-batching has been modeled as a source of additional independent Gaussian noise \citep{chen2014stochastic} at every step of the simulated dynamics. 
We challenge this common modeling assumption by considering the geometrical perspective of differential operator splitting. 
We build the link between the true generator $\mathcal{L}$ and numerical integration performed using mini-batch subsets of the full potential.
Our analysis does not make any assumption about Gaussianity of \sg noise.
Our approach is rooted in the geometrical view of splitting schemes for high dimensional Hamiltonian systems \citep{childs2019theory,suzuki1977convergence,hatano2005finding,childs2019nearly,low2019well}. 
This allows us to derive \cref{genericintegratorerdogic}, which presents a result in terms of convergence rate to the desired posterior distribution.
A purely geometrical approach has not received much attention in the context of data subsampling for Bayesian inference, with the notable exceptions of \citet{pmlr-v37-betancourt15,shahbaba2014split} that explored related ideas for the case of \hmc only.

Without loss of generality, suppose that the dataset $\dataset$ is split into two mini-batches $\dataset_1,\dataset_2$. Following \cref{eq:infgen} we can define the infinitesimal generators $\mathcal{L}_1,\mathcal{L}_2$, for which we have $\mathcal{L}=\mathcal{L}_1+\mathcal{L}_2$.
Intuitively, given an operator in exponential form $\exp(\eta(\mathcal{L}_1+\mathcal{L}_2))$, we would like to determine under which conditions the following holds
\begin{equation}\label{eqappx}
   \exp(\eta(\mathcal{L}_1+\mathcal{L}_2))\simeq\exp(\eta\mathcal{L}_1)\exp(\eta\mathcal{L}_2),
\end{equation}
and to quantify the discrepancy error in a rigorous way. In the general case we consider splitting $\mathcal{L}$ into $K$ mini-batches of the form $\mathcal{L}=\sum\limits_{i=1}^K\mathcal{L}_i$ where
\begin{flalign}&\mathcal{L}_i=-\left(\nabla^\top_{\thetavect}U_i(\thetavect)\right)\nabla_{\rvect}+K^{-1}\left(\left(\mathbf{M}^{-1}\rvect\right)^\top\right)\nabla_{\thetavect}\nonumber\\&-K^{-1}C\left(\left(\mathbf{M}^{-1}\mathbf{r}\right)^\top\right)\nabla_{\mathbf{r}}+K^{-1}C\nabla_{\mathbf{r}}^\top\nabla_{\mathbf{r}},\end{flalign}
with $U_i(\thetavect)$ the potential computed using only the $i_{\text{th}}$ mini-batch.
The theorems presented in this section clarify how concatenations of the form $\prod_i \exp(\eta K\mathcal{L}_i)$ induce errors and clarify their relevance for the considered problem.

The following theorem characterizes the order of convergence of the randomized splitting scheme.
\begin{theorem}\label{randomorderop:specialcase}
Split the full dataset into $K$ mini-batches and consider numerical integrators $\psi_i$ of order $p$ obtained using the mini-batches, i.e. $\mathbb{E}\left[\phi_i(\psi(\zvect_{0};\eta))\right]=\exp(\eta K\mathcal{L}_i)\phi(\zvect_{0})+\mathcal{O}(\eta^{p+1})$.
 Extract uniformly $\boldsymbol{\pi}=\begin{bmatrix}\pi_1,\dots,\pi_K\end{bmatrix}\in\mathbb{P}$, the set of all the possible permutations of the indeces $\{1,K\}$. The scheme in which initial condition has stochastic evolution through the chain of integrators with order \begin{equation}
    \mathbf{z}_{fin}=\psi_{\pi_K}(\psi_{\pi_{K-1}}(\dots\psi_{\pi_1}(\mathbf{z}_0))),
\end{equation}
transforms the functions $\phi$ with an operator $\mathcal{U}$ that has the following expression
\begin{equation}
   \mathcal{U}=\exp(\eta K\mathcal{L})+\mathcal{O}(K\eta^{p+1})+\mathcal{O}(K\eta^{3})
\end{equation}
\end{theorem}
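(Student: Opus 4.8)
The plan is to split the operator error $\mathcal{U}-\exp(\eta K\mathcal{L})$ into a contribution from the finite accuracy of the mini-batch integrators and a contribution from the operator splitting itself, and to treat each by a Taylor/telescoping expansion, using the uniform random permutation to cancel the leading splitting error. All asymptotics are in $\eta\to0$; since each $\mathcal{L}_i$ carries a weight $K^{-1}$ (its drift $\nabla^\top_{\thetavect}U_i$ being the $i$-th chunk of $\nabla^\top_{\thetavect}U$), one has $\eta K\mathcal{L}_i=\mathcal{O}(\eta)$, so every exponential below is near-identity and a product of $K$ of them stays uniformly bounded whenever $\eta K=\mathcal{O}(1)$. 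Writing $\mathcal{U}_i\phi(\zvect)=\mathbb{E}[\phi(\psi_i(\zvect;\eta))]=\exp(\eta K\mathcal{L}_i)+\mathcal{R}_i$ with $\|\mathcal{R}_i\|=\mathcal{O}(\eta^{p+1})$, $\Psi_{\boldsymbol{\pi}}=\mathcal{U}_{\pi_1}\cdots\mathcal{U}_{\pi_K}$ and $\Phi_{\boldsymbol{\pi}}=e^{\eta K\mathcal{L}_{\pi_1}}\cdots e^{\eta K\mathcal{L}_{\pi_K}}$, one has $\mathcal{U}=\mathbb{E}_{\boldsymbol{\pi}}[\Psi_{\boldsymbol{\pi}}]$ and $\mathcal{U}-\exp(\eta K\mathcal{L})=\mathbb{E}_{\boldsymbol{\pi}}[\Psi_{\boldsymbol{\pi}}-\Phi_{\boldsymbol{\pi}}]+\big(\mathbb{E}_{\boldsymbol{\pi}}[\Phi_{\boldsymbol{\pi}}]-\exp(\eta K\mathcal{L})\big)$, so it suffices to bound the two terms.

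For the integrator term I would use the telescoping identity $\Psi_{\boldsymbol{\pi}}-\Phi_{\boldsymbol{\pi}}=\sum_{j=1}^{K}\big(\prod_{i<j}\mathcal{U}_{\pi_i}\big)\mathcal{R}_{\pi_j}\big(\prod_{i>j}e^{\eta K\mathcal{L}_{\pi_i}}\big)$; each partial product has norm $\exp(\mathcal{O}(K\eta))=\mathcal{O}(1)$, so $\|\Psi_{\boldsymbol{\pi}}-\Phi_{\boldsymbol{\pi}}\|\le K\cdot\mathcal{O}(\eta^{p+1})=\mathcal{O}(K\eta^{p+1})$ uniformly in $\boldsymbol{\pi}$, and this passes through $\mathbb{E}_{\boldsymbol{\pi}}$.

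The splitting term is the crux. Put $B_i=\eta K\mathcal{L}_i$ ($\sum_iB_i=\eta K\mathcal{L}$, $\|B_i\|=\mathcal{O}(\eta)$), expand every exponential in $\Phi_{\boldsymbol{\pi}}$, and collect the product by homogeneous degree in the $B_i$. Degrees $0$ and $1$ give $I$ and $\sum_iB_i$ for every $\boldsymbol{\pi}$. The degree-$2$ part of $\Phi_{\boldsymbol{\pi}}$ is $\tfrac12\sum_iB_{\pi_i}^2+\sum_{i<j}B_{\pi_i}B_{\pi_j}$; under a uniform permutation each ordered pair of distinct indices is equally likely at any two positions, so $\mathbb{E}_{\boldsymbol{\pi}}\big[\sum_{i<j}B_{\pi_i}B_{\pi_j}\big]=\tfrac12\sum_{a\ne b}B_aB_b$, which together with the diagonal reconstructs exactly $\tfrac12(\sum_iB_i)^2$: the degree-$2$ parts of $\mathbb{E}_{\boldsymbol{\pi}}[\Phi_{\boldsymbol{\pi}}]$ and $\exp(\sum_iB_i)$ coincide. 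This cancellation is precisely what randomizing the batch order buys, pushing the splitting error to third order. At degree $3$ the ``all distinct'' and ``all equal'' index contributions again match $\exp(\sum_iB_i)$ and drop, leaving a residual proportional to $\sum_{a\ne b}(B_a^2B_b+B_aB_b^2-2B_aB_bB_a)$; writing $B_a=\bar B+D_a$ with $\bar B=\tfrac1K\sum_iB_i=\eta\mathcal{L}$ and $\sum_aD_a=0$, this collapses to $K\sum_a[D_a,[D_a,\bar B]]$ with $D_a=\eta(K\mathcal{L}_a-\mathcal{L})$, a sum of nested commutators of the per-batch generators with $\mathcal{L}$. Because $K\mathcal{L}_a$ and $\mathcal{L}$ differ only in the bounded drift $-(K\nabla_{\thetavect}U_a-\nabla_{\thetavect}U)^\top\nabla_{\rvect}$ and these discrepancies sum to zero, the residual is $\mathcal{O}(K\eta^3)$, while the Taylor remainder (degree $\ge4$) is of strictly higher order in $\eta$. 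Combining with the integrator bound gives $\mathcal{U}=\exp(\eta K\mathcal{L})+\mathcal{O}(K\eta^{p+1})+\mathcal{O}(K\eta^3)$.

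The hard part is twofold: (i) proving that the degree-$2$ splitting term genuinely vanishes after averaging over $\boldsymbol{\pi}$ --- this is the mechanism separating the mini-batch $\mathcal{O}(\eta^2)$ bottleneck from a worse $\mathcal{O}(\eta)$ rate --- and (ii) controlling the $K$-dependence of the third-order residual: a crude triangle inequality over the $\mathcal{O}(K^2)$ ordered pairs only yields $\mathcal{O}(K^2\eta^3)$, and sharpening it to $\mathcal{O}(K\eta^3)$ relies on the commutator reorganization above together with $\sum_a(K\mathcal{L}_a-\mathcal{L})=0$. One also has to fix the functional setting --- the space of observables on which $\mathcal{L}$ and the $\mathcal{U}_i$ act, uniform-in-$\boldsymbol{\pi}$ bounds on the partial products, and the regime $\eta K=\mathcal{O}(1)$ --- for the remainder estimates to be legitimate.
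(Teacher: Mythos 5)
Your proposal is correct and follows essentially the same route as the paper: the same decomposition into an accumulated integrator error $\mathcal{O}(K\eta^{p+1})$ and a splitting error, with the key step being that averaging over the uniform random permutation symmetrizes the ordered product so that its degree-two part reconstructs $\tfrac{1}{2}(\eta K\mathcal{L})^2$ exactly, leaving only a cubic residual. The paper realizes this cancellation by pairing each permutation with its lexicographic reverse and invoking a ``back-and-forth'' averaging lemma (\cref{fixedorder}), whereas you compute the expectation of the degree-two coefficient directly --- an equivalent device --- and you are, if anything, more explicit than the paper about why the cubic remainder scales as $\mathcal{O}(K\eta^{3})$ rather than $\mathcal{O}(K^{2}\eta^{3})$, a point the paper asserts without detailed justification.
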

The proof can be found in \cref{secproof:randomorderop}. 
Particularly relevant to our discussion are works that explore randomized splitting schemes for Hamiltonian simulation, but not in a sampling context \citep{childs2019faster,zhang2012randomized}.

\begin{theorem}\label{genericintegratorerdogic}
Consider the settings described by \cref{randomorderop:specialcase}. Repeatedly apply the numerical integration scheme. Then the ergodic error has expansion 
\begin{equation}
    e(\psi,\phi)=\mathcal{O}( \eta^{\min{(p,2)}})
\end{equation}
\end{theorem}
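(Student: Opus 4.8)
The plan is to deduce the ergodic bound from the one-step operator expansion of \cref{randomorderop:specialcase}, combined with the weak-order-implies-ergodic-order principle recalled in \cref{sec:abdulletheorem}. First I would regard one full sweep over the $K$ mini-batches --- with a uniform permutation $\boldsymbol{\pi}$ and integrator randomness re-drawn independently at each sweep --- as a single macro-step of a time-homogeneous Markov chain on $\zvect$. Its one-step transition operator acting on test functions $\phi$ is, by construction, exactly the operator $\mathcal{U}$ of \cref{randomorderop:specialcase}, since the randomness internal to one sweep is already averaged out in $\mathcal{U}$:
\begin{equation*}
    \mathcal{U}=\exp(\eta K\mathcal{L})+\mathcal{O}(K\eta^{p+1})+\mathcal{O}(K\eta^{3}).
\end{equation*}
The point to exploit is that $\exp(\eta K\mathcal{L})$ is the exact evolution operator of \cref{hamsde} over a time interval of length $\tau:=K\eta$, so $\mathcal{U}$ should be read as a numerical scheme for \cref{hamsde} with effective step size $\tau$.

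Next I would re-express the remainder in powers of $\tau$. Since the number of mini-batches $K\ge 1$ is a fixed constant, $K\eta^{q+1}\le(K\eta)^{q+1}=\tau^{q+1}$ for every $q\ge 0$; applying this to the two exponents $p$ and $2$ and retaining the dominant (smaller-exponent) term gives
\begin{equation*}
    \mathcal{U}=\exp(\tau\mathcal{L})+\mathcal{O}\!\left(\tau^{\min(p,2)+1}\right),
\end{equation*}
that is, $\mathcal{U}$ is a weak integrator of order $q=\min(p,2)$ for \cref{hamsde} with respect to the step size $\tau$. Invoking the result of \citet{abdulle2014high} quoted in \cref{sec:abdulletheorem} --- weak order $q$ is sufficient for ergodic order $q$ --- then yields $e(\psi,\phi)=\mathcal{O}(\tau^{q})=\mathcal{O}\!\left((K\eta)^{\min(p,2)}\right)=\mathcal{O}\!\left(\eta^{\min(p,2)}\right)$, the last equality using once more that $K$ is constant.

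The step I expect to be the main obstacle is checking that the randomized-splitting chain genuinely satisfies the hypotheses behind the ergodic-order theorem, and not merely the order estimate on $\mathcal{U}$. Concretely one has to verify that (i) the chain generated by $\mathcal{U}$ is ergodic with a well-defined stationary law $\rho_{ss}^{\psi}$ --- which should follow from the quasi-symplecticity and geometric ergodicity of the base integrators together with a Lyapunov/perturbation argument, the friction term in \cref{hamsde} supplying the needed dissipation --- and (ii) the $\mathcal{O}$-remainders above hold uniformly on the relevant class of smooth, controlled-growth test functions, so that the backward-error / perturbation-series machinery of \citet{abdulle2014high} applies term by term. Both are inherited from the assumptions already placed on the $\psi_i$ and on the smoothness of the $U_i$ in \cref{randomorderop:specialcase}, but they are what makes the short computation above legitimate. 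I would also stress the structural origin of the $\min(\cdot,2)$: when $p\ge 2$ the governing error is the $\mathcal{O}(K\eta^{3})$ commutator remainder produced by the non-commuting mini-batch generators $\mathcal{L}_i$, and it is precisely the symmetrization over permutations in \cref{randomorderop:specialcase} that prevents it from degrading to $\mathcal{O}(K\eta^{2})$ --- which is the reason the convergence bottleneck sits at $\eta^{2}$ rather than at $\eta$.
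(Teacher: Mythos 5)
Your proposal is correct and follows essentially the same route as the paper: starting from the one-step operator expansion of \cref{randomorderop:specialcase}, rescaling to the effective step $h=K\eta$, and invoking the weak-order-implies-ergodic-order result of \citet{abdulle2015long} recalled in \cref{sec:abdulletheorem}. The only cosmetic difference is that the paper tracks the constant exactly as $\mathcal{O}(K^{-\min(p,2)}h^{\min(p+1,3)})$ whereas you use the cruder bound $K\eta^{q+1}\le(K\eta)^{q+1}$, which is harmless since $K$ is fixed.
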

See \cref{secproof:genericintegratorerdogic} for the proof.
This theorem tells us that the effect of mini-batches is an extra error of order two in the convergence rate. This
extra error is not due to an equivalent noise injection into the \sde dynamics, and attempting to counterbalance it, as suggested by the literature \citep{chen2014stochastic}, is irrelevant as we will show in \cref{sec:experiments}. Importantly, this error can become the bottleneck whenever $p>2$.

The experimental validation carried out in this work, of which \cref{fig:minibottle} is a prototypical example, confirms the presence of this bottleneck. 
We include a toy example taken from \citet{vollmer2016exploration}, where an analytical solution is available.
The complete specification can be found in \cref{sec:toyexapp}.
The results, reported in \cref{fig:toyminibach}, are obtained by comparing an order-2 integrator against an analytical solution, when considering or not mini-batches. 
Qualitatively, the presence of a bottleneck is clear: when considering mini-batches, the stationary distribution is not the desired one, even with a perfect integrator.

\begin{figure}
    \centering
    \includegraphics[width=0.5\textwidth]{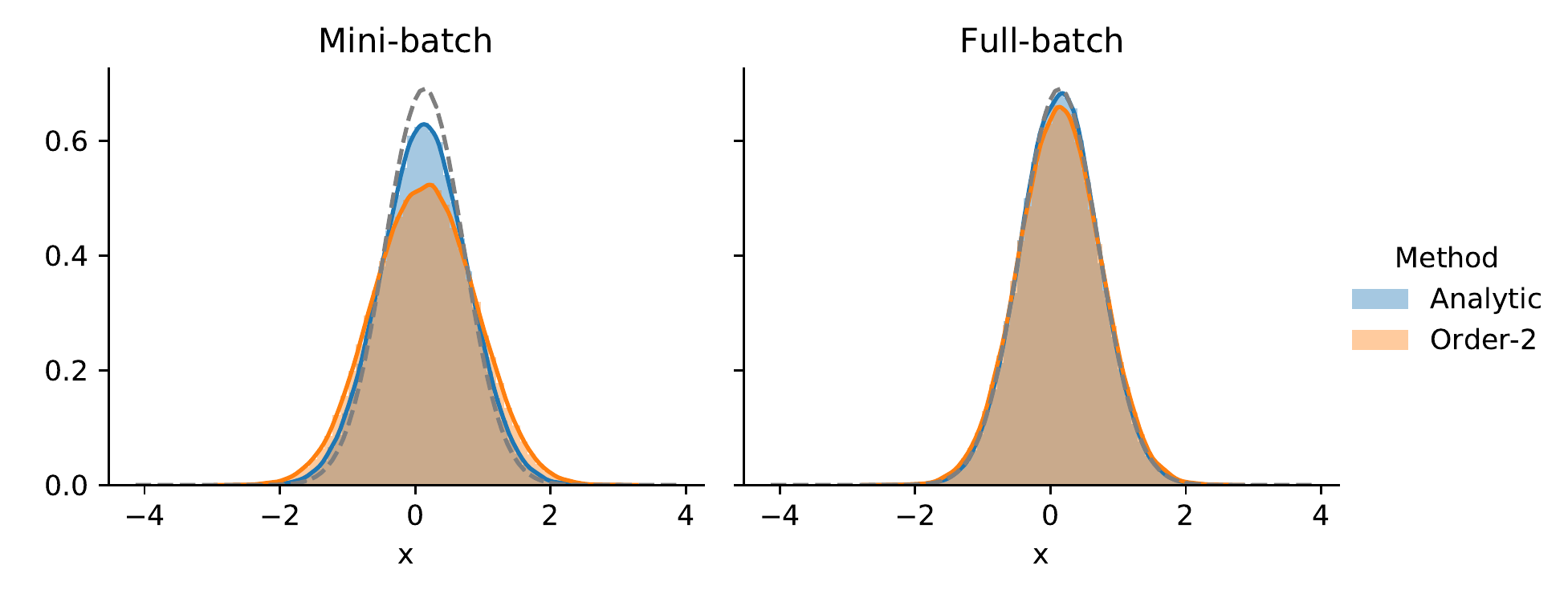}
    \caption{Histograms of stationary distributions. The grey dotted line denotes the true posterior density. Irrespectively of order, mini-batching prevents from convergence to true posterior.}
    \label{fig:toyminibach}
\end{figure}

Our main result, the presence of a bottleneck, is in direct disagreement with previous results presented in \citet{chen2015convergence}, where the mini-batching does not affect the ergodic average error. Instead, our theoretical and empirical discussions clearly show that the introduction of mini-batches impedes convergence to the desired posterior. 


\noindent \textbf{Details on the constants in the convergence results}.
Some clarifications are in order. First, the constants in the $\mathcal{O}$ notation of weak and ergodic errors, could be refined considering the geometry of the potentials and the norms inequality of differential operators. We leave such a possibility for future works and refer the interested reader to \citet{childs2019theory,childs2019faster} and in particular also to \citet{zhang2012randomized}, that proposes randomized schemes strictly related to the one discussed here. Second, we stress that when performing sampling for Bayesian inference problems, the full potential is divided into mini-batches, and each subset rescaled by the constant $K$.
This rescaling allows one to ``cover'' the same amount of distance per steps independently from $K$, given that with a pure splitting one would need $K$ steps to simulate $\exp(\eta\mathcal{L})$.

For the case of \hmc, this connection is acknowledged in \citet{pmlr-v37-betancourt15}, drawing an equivalence with the work of \citet{shahbaba2014split}.

\section{A LIE-TROTTER INTEGRATION SCHEME AND CONNECTIONS WITH HMC}
\label{sec:hmcvssdehmc}


We discuss a \sde integration scheme that relies on a \lietrotter splitting of the infinitesimal generator $\mathcal{L}$.
That allows us to draw a connection between Hamiltonian \sdes and the original \hmc family of algorithms, by showing that the latter can be interpreted as an integration scheme of the \sde dynamics.

The infinitesimal generator in \cref{eq:infgen} can be expressed as the sum $\mathcal{L} = \mathcal{H} + \mathcal{D}$, where: 
\begin{flalign*}
&\mathcal{H}=-\left(\nabla^\top_{\thetavect}U(\thetavect)\right)\nabla_{\mathbf{r}}+\left(\left(\mathbf{M}^{-1}\mathbf{r}\right)^\top\right)\nabla_{\thetavect}, \\
&\mathcal{D}=-C\left(\left(\mathbf{M}^{-1}\mathbf{r}\right)^\top\right)\nabla_{\mathbf{r}}+C\nabla_{\mathbf{r}}^\top\nabla_{\mathbf{r}}.
\end{flalign*}
In its general form \citep{abdulle2015long}, the \lietrotter scheme is derived as an application of the Baker–Campbell–Hausdorff formula \citep{dynkin1947calculation}, where the dynamics are approximated as follows:
\begin{equation*}
    \exp(\eta \mathcal{L}) = \exp(\nicefrac{\eta}{2} \mathcal{D}) \exp(\eta \mathcal{H}) \exp(\nicefrac{\eta}{2} \mathcal{D}) + \mathcal{O}(\eta^{3})
\end{equation*}
In practice, the scheme consists of alternating steps that solve the $\mathcal{H}$ and $\mathcal{D}$ parts of $\mathcal{L}$.
The design space is the one of deterministic integrators for the Hamiltonian part $\mathcal{H}$, as the term $\mathcal{D}$ can be solved exactly. Having chosen a deterministic integrator $\psi$, a single step of the \sde simulation has the following form:
\begin{flalign}
    \label{eq:lietr}
    & \zvect^* = \begin{bmatrix}{\rvect^*}^\top,{\thetavect^*}^\top \end{bmatrix}^\top=\psi(\zvect_0), \\
    & \zvect_1 =
    \begin{bmatrix}{\exp(-\eta C){\rvect^*}^\top+\sqrt{1-\exp(-2\eta C)}\mathbf{w}^\top,\thetavect^*}^\top
    \end{bmatrix}
\end{flalign}
It can be shown \citep{abdulle2015long} that whenever the numerical integrator $\psi$ for the Hamiltonian part $\mathcal{H}$ has order $p$, i.e. $\mathcal{U}\phi=\exp(\eta\mathcal{H})+\mathcal{O}(\eta^{p+1})$, the \lietrotter scheme has convergence order $\mathcal{O}(\eta^p)$. 

In the experiments of \cref{sec:experiments}, we consider $\psi$ to be the deterministic version of \leapfrog,
for which we have $p=2$.
Therefore, our practical application of \lietrotter is of order $\mathcal{O}(\eta^2)$.

\paragraph{\hmc with partial momentum refreshment}
We examine a generalized \hmc variant featuring partial momentum refreshment 
\citep{horowitz1991generalized,neal2011mcmc}, which consists of two (repeated) steps.
First, we have a numerical approximation of a Hamiltonian system by means of a (deterministic) integrator $\psi$:
\begin{equation}
    \label{eq:hmc1}
    \zvect^*=\begin{bmatrix}{\rvect^*}^\top,{\thetavect^*}^\top
    \end{bmatrix}^\top=\underbrace{\psi(\dots\psi(\psi(}_{N_l \text{times}}\zvect_0))),
\end{equation}
where $N_l$ denotes the number of integration steps and it is assumed to be finite. 
Then, we have a a partial momentum update as follows:
\begin{equation}
    \label{eq:hmc2}
    \zvect_1
    =\begin{bmatrix}{\alpha{\rvect^*}^\top+\sqrt{1-\alpha^2}\mathbf{w}^\top,\thetavect^*}^\top
    \end{bmatrix},
\end{equation}
where $\mathbf{w}\sim\mathcal{N}(\mathbf{0},\mathbf{I})$ and $\alpha>0$. 


The connection between \hmc and the \lietrotter is drawn by noticing that mechanically, the two schemes simulate \sdes (and thus transform functions) in a similar fashion. \lietrotter transforms functions after a step as $\mathbb{E}\left[\phi(\zvect_1)\right]=\mathcal{U}\exp(\eta \mathcal{D})$, while \hmc as $\mathbb{E}\left[\phi(\zvect_1)\right]=\left(\mathcal{U}\right)^{N_l}\exp(\eta N_l \mathcal{D})$. The claim is true when we consider the $\alpha$ of \hmc to be $\alpha=\exp(-\eta N_l C)$ (the full momentum resampling corresponds to $C\rightarrow\infty$).

By exploiting this connection to classical \sde integrators, we can thus study the ergodic error of \hmc scheme with and without mini-batches.
The ergodic error for the two cases is $\mathcal{O}(\eta^p)$ and $\mathcal{O}(\eta^{\min(p,2)})$ respectively, showing again the bottleneck introduced by mini-batches.
See \cref{sec4:appendix} for the derivation details.

In both cases, the orders of convergence are \textit{independent} on $N_l$. 
This is reflected in the experimental results in \cref{sec:experiments}. 
We note that, in practice, the approximation quality degrades by increasing $N_l$ for higher learning rates.

\section{EXPERIMENTS}
\label{sec:experiments}

\begin{figure*}[h]
    \centering
    \includegraphics[width=0.49\textwidth]{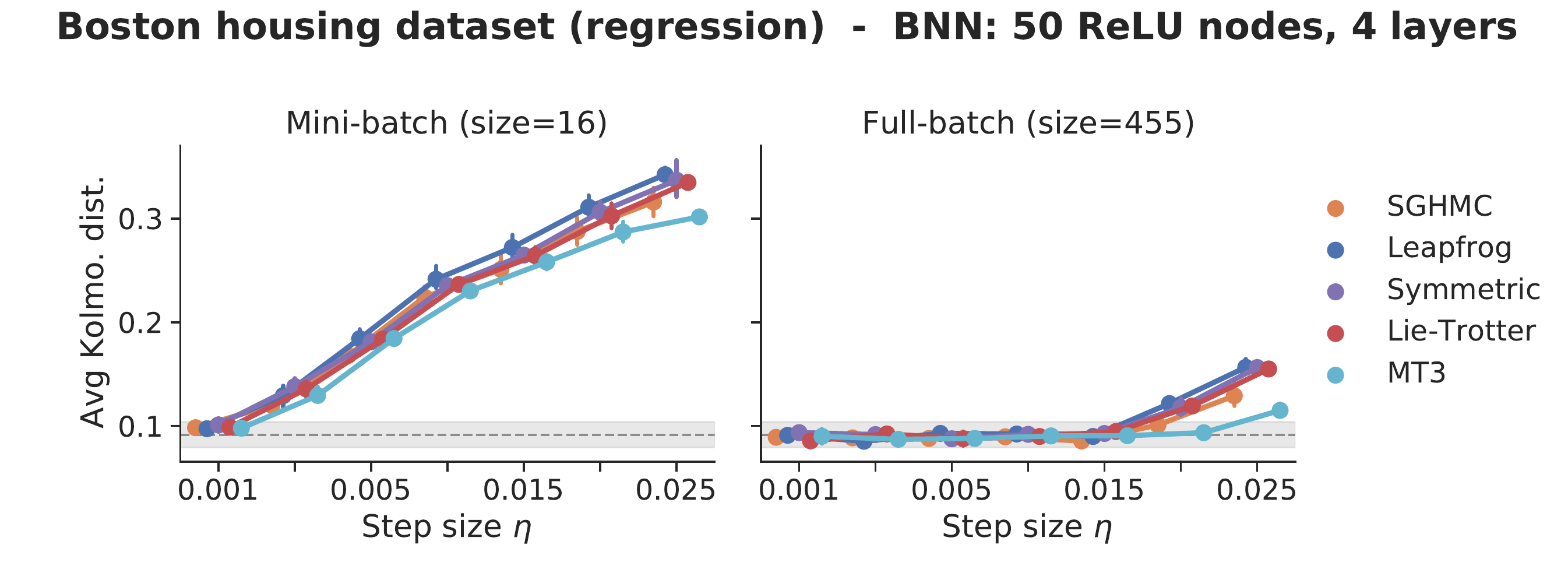}
    \includegraphics[width=0.49\textwidth]{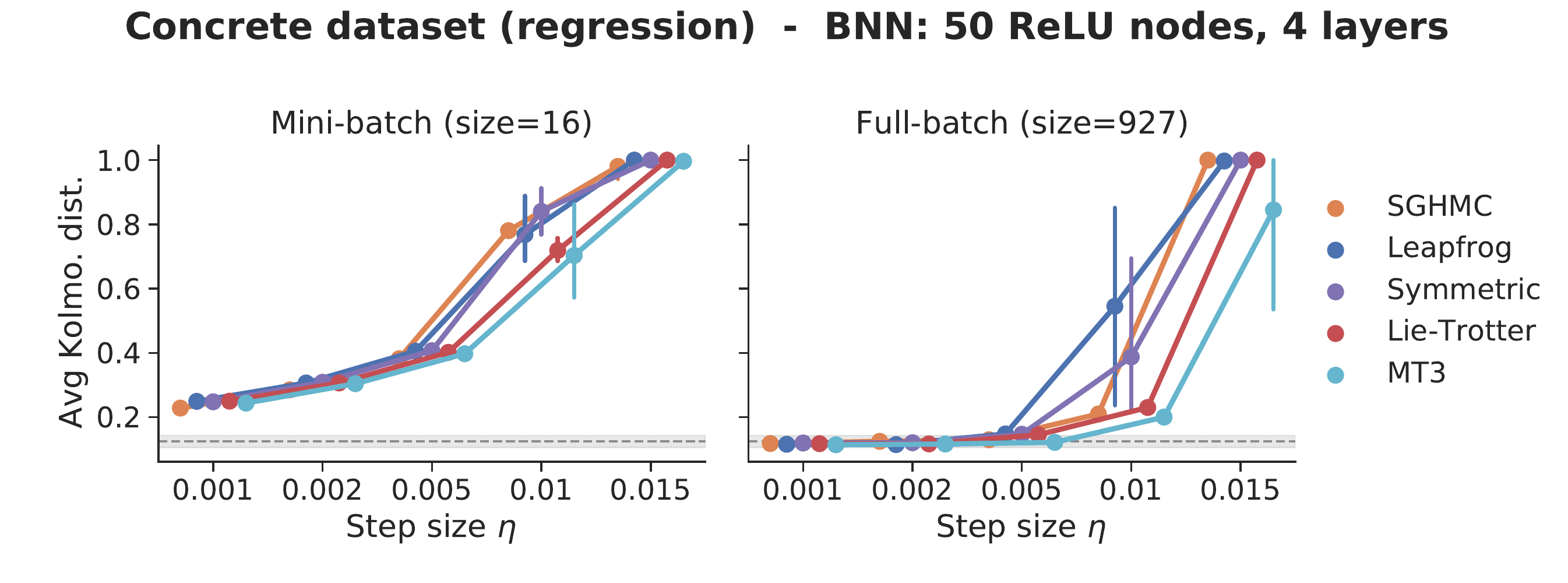}
    \includegraphics[width=0.49\textwidth]{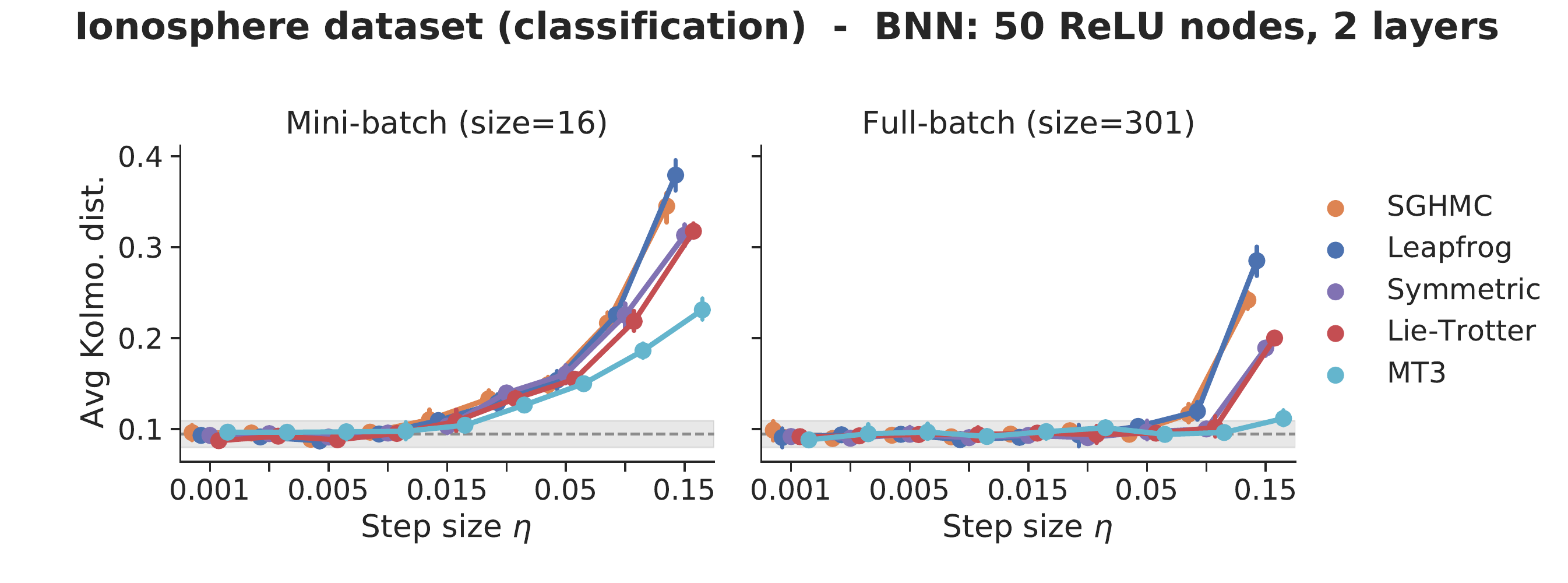}
    \includegraphics[width=0.49\textwidth]{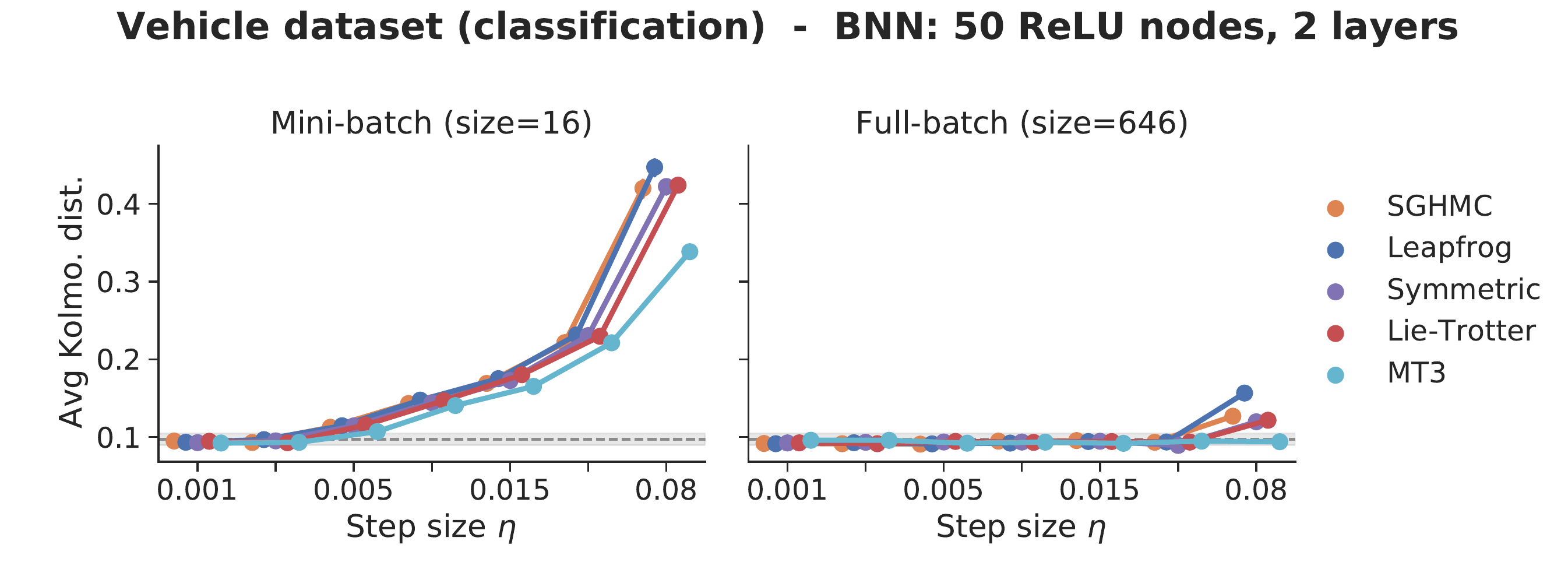}
    \caption{Exploration of step size and batch size for different Hamiltonian-based methods; the grey dotted line denotes the self-distance for the distribution of the oracle.}
    \label{fig:methods}
\end{figure*}

\textbf{Comparison framework.}
The main objective of the experiments is to investigate convergence to the true posterior distribution for a wide range of \bnn models and datasets.
Metrics that reflect regression and classification accuracy are not of main interest; nevertheless, some of these are reported in \cref{ssec:experiment_setup_appendix}.
Instead, we turn our attention to the quality of the predictive distribution.

We consider the true predictive posterior as the ground truth,
which is approximated by a careful application of \sghmc \citep{springenberg2016bayesian} featuring a very small step size and full-batch gradient calculation; this is referred to as the \emph{oracle}.
Any comparison between high-dimensional empirical distributions gives rise to significant challenges. Therefore, we resort to comparing one-dimensional predictive distributions by means of the 
Kolmogorov distance.
For a given test dataset, we explore the average Kolmogorov distance from the true posterior predictive distributions for different methods, step sizes and mini-batch sizes.
These should be compared with average Kolmogorov \emph{self-distance} for the oracle, which is marked as grey dotted lines in the figures that follow\footnote{This emulates the famous Kolmogorov-Smirnov test, but with no Gaussianity assumptions}.
In all cases we compare empirical distributions of 200 samples; the example of \cref{fig:minibottle} is an exception, where we compare distributions of 2000 samples.
See \cref{ssec:framework_appendix} for a complete account.

\begin{figure*}[h]
    \centering
    \includegraphics[width=0.49\textwidth]{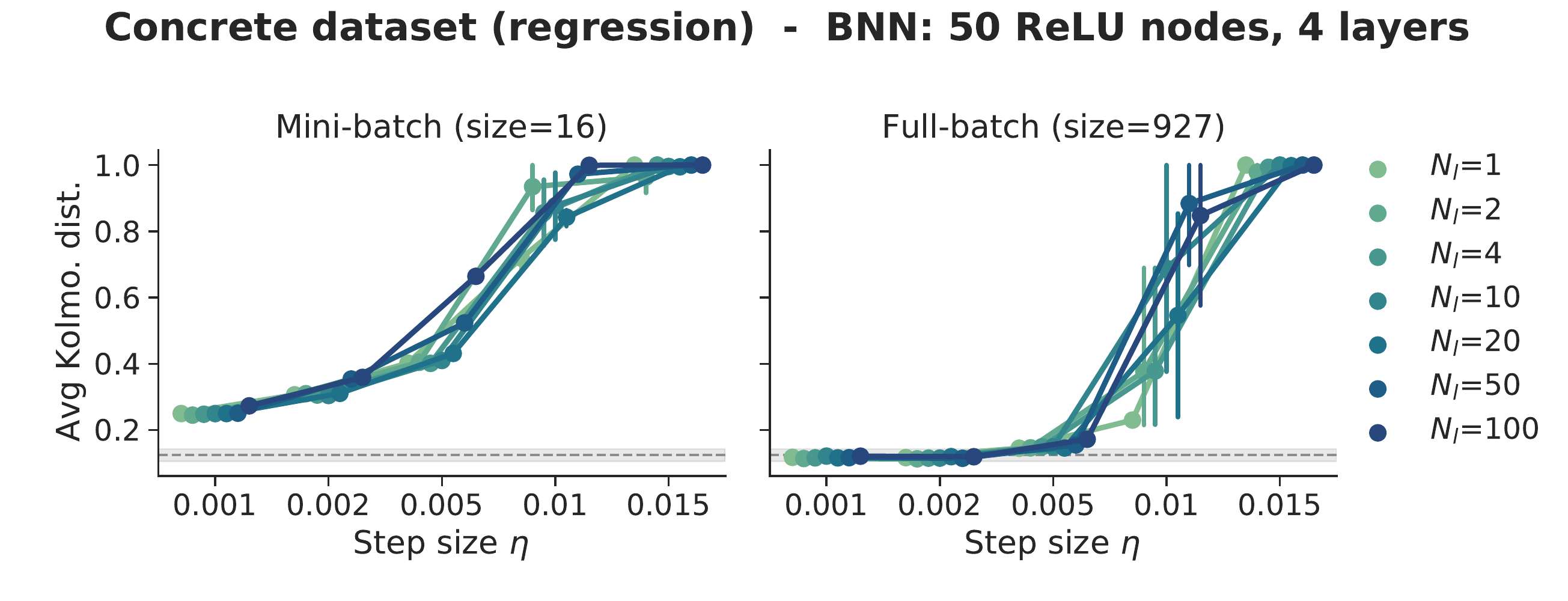}
    \includegraphics[width=0.49\textwidth]{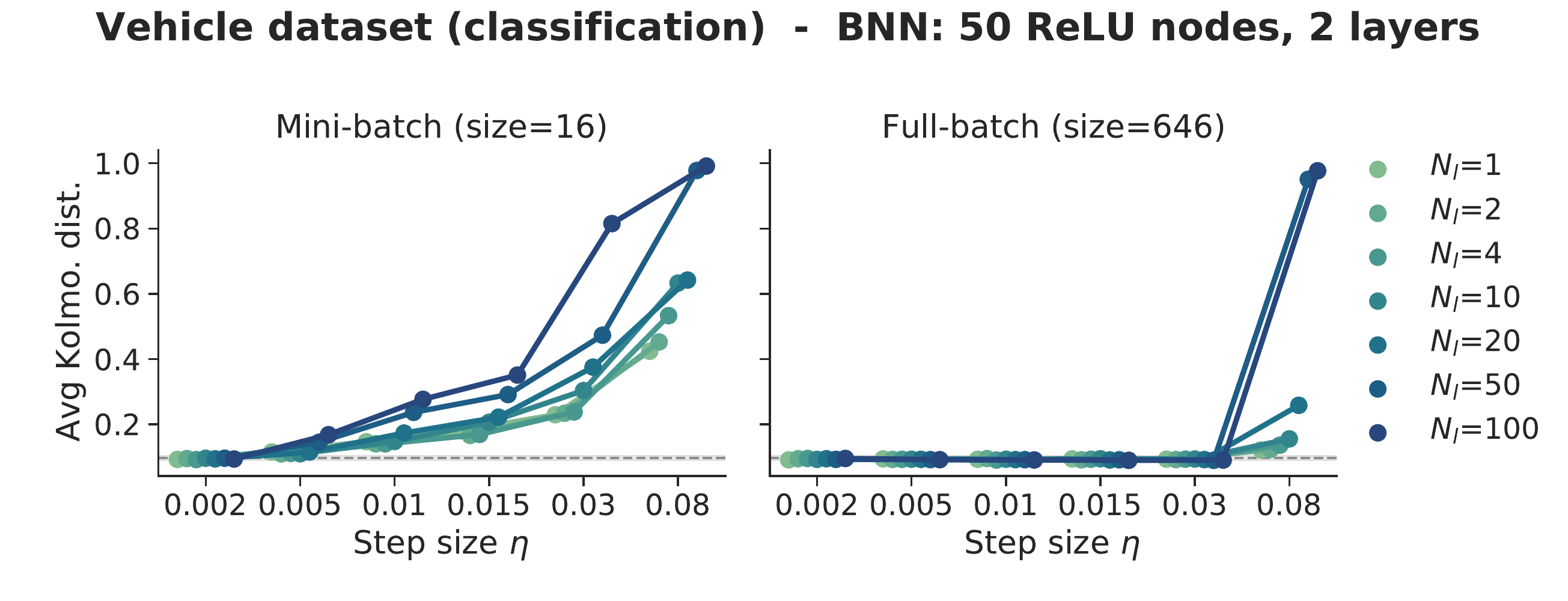}
    \caption{Generalized \lietrotter scheme: Exploration of step size and batch size for different values of the (deterministic) integration length $N_l$. The grey dotted line denotes the self-distance for the distribution of the oracle.}
    \label{fig:generalizedLT}
\end{figure*}


For \sdehmc, we examine integrators of different orders: \leapfrog, \lietrotter and \mtthree, and we investigate whether they differ from \sghmc \citep{chen2014stochastic}.
The \leapfrog scheme, which is provably of order 1 for \sdes, is also used in \sghmc.
The \lietrotter integrator introduced in \cref{sec:hmcvssdehmc} is of order 2, while \mtthree denotes the order-3 quasi-symplectic integrator of Milstein \& Tretyakov \citep{milstein2003quasi}.
We also compare against the symmetric splitting integrator proposed in \citet{chen2015convergence}, which we refer to as \symmetric.
In all cases we set $C=5$; we find this to be a reasonable choice, as can be seen in the exploration of \cref{ssec:explore_C}.

We consider four regression datasets (\boston, \concrete, \energy, \yacht) and two classification datasets (\iono, \vehicle) from the UCI repository, as well as a 1-D synthetic dataset, for which the regression result is shown in \cref{ssec:experiment_setup_appendix}.
Due to space limitations, here we only present a summary of the results in \cref{fig:methods,fig:generalizedLT}.
A more detailed exposition, including a more fine-grained exploration of the batch size, can be found in \cref{ssec:extended_results} and \cref{ssec:explore_C}.
In what follows, we summarize some findings that apply to all the datasets and models we have considered.

\textbf{Comparing with \sghmc.}
In \cref{fig:methods} we focus our attention on the comparison between \sdehmc (with \leapfrog integrator) and \sghmc \citep{chen2014stochastic}.
We note that \sdehmc (\leapfrog) is different from \sghmc in the sense that no counterbalancing of the noise is performed.
Nevertheless, we do not observe significant difference between the two approaches.
We argue that this result is compatible with our position that counterbalancing the gradient noise is not necessary to sample from the posterior.

\textbf{On the convergence bottleneck.}
As a general remark, we notice that for larger mini-batch sizes (or equivalently, for smaller gradient noise) the model can admit larger step sizes, while as the gradient noise becomes larger, then a smaller value for $\eta$ is required to guarantee a good approximation.

When comparing integrators of different order in \cref{fig:minibottle,fig:methods}, the most interesting finding is that all of our results appear to confirm the existence of the convergence bottleneck identified in \cref{sec:minibatches}.
This becomes particularly obvious in \cref{fig:minibottle}, where we use \sdehmc to draw $2000$ samples.
In the cases where mini-batches are used, there is a zone in which higher-order integrators do not deliver significant improvements.
Note that this zone is absent from the full-batch comparison, as higher order schemes are consistently better everywhere in the range of step size $\eta$.

\textbf{Comparison of integrators.}
In \cref{fig:minibottle,fig:methods}, although all of the integrators considered converge to the true posterior regardless of the mini-batch size given a sufficiently small step size, we observe that the methods behave differently depending on their theoretical properties.

The \symmetric and \lietrotter schemes, which are both of second order, respond similarly to the changes of step size
and batch size.
Although the \leapfrog scheme is provably of order 1, it has been mostly competitive to higher order schemes (\lietrotter, \symmetric, \mtthree) in our experiments; its performance deteriorates for larger values for $\eta$ only.
Lastly, the third order \mtthree scheme consistently outperforms the rest of the integrators in the full batch case, especially for larger step sizes; this has little effect from a practical point of view however, as \mtthree requires extra calculations per step (i.e.\ for 3 gradients and the Hessian).


\textbf{Exploring the connection with \hmc.}
As a last experiment, we consider a generalized \lietrotter scheme where we explore different values for the integration length $N_l$.
For $N_l=1$ we obtain the standard \lietrotter of \cref{sec:hmcvssdehmc}, while as $N_l$ grows we recover a \hmc algorithm with partial momentum refreshments.
We have claimed that regardless of the $N_l$ value, the scheme enjoys ergodic convergence of order $\mathcal{O}(\eta^2)$.
The results of the exploration can be seen in Figure \ref{fig:generalizedLT}.
For smaller step sizes $\eta$, the order of convergence is shown to be similar for all $N_l$, especially in the full-batch case.
Nevertheless, we observe larger errors as $\eta$ grows and $N_l$ approaches $100$.
This effect seems to be further pronounced by the introduction of mini-batches.
Note that divergence that we observe for larger step sizes does not contradict our claim for same ergodic convergence, as this is not the region of interest in terms of $\mathcal{O}(\cdot)$ notation.
This finding is not surprising nevertheless: it is well-established in the literature that \hmc is sensitive to the choice of integration length \citep{neal2011mcmc,Hoffman2014}.
We remark that a sensible choice is to use $N_l=1$, which is the most direct simulation of the Hamiltonian \sde system.

\section{CONCLUSIONS}
\label{sec:conclusions}

In this work, we revisited the connections between \sg and stochastic Hamiltonian dynamics to improve our current
understanding of the role played by mini-batching on the goodness of sampling from intractable distributions, by means
of simulating a Hamiltonian \sde.
We challenged the common assumption of associating stochastic gradient estimates that arise due to data
subsampling to the stochastic component of the \sde, arguing that the Brownian motion is a poor model of this kind of
gradient noise.

Our main contribution was to produce new convergence results for Hamiltonian \sde-based sampling methods, by
studying their properties through the lenses of differential operator splitting.
We found that, for an integrator of weak order $p$ and step size $\eta$, the discretization error may vanish at rate $\mathcal{O}(\eta^p)$, but the mini-batch error vanishes at rate $\mathcal{O}(\eta^2)$, which is a bottleneck that has been overlooked in the literature.

Using our theory, we also showed that \hmc with partial momentum refreshments can be interpreted as an
integration scheme for the same class
of \sdes.
Then, we showed that convergence rates are independent on the \hmc inner loop size $N_l$, as we confirmed experimentally.

We have demonstrated the validity of our theory on a wide range of experiments, where we have meticulously documented
the deviation from the true posterior distribution.
As a practical implication of our work, we recommended a straightforward simulation of the \sde in \cref{hamsde_exp}
by means of a quasi-symplectic second order integrator.
Indeed, the bottleneck introduced by mini-batching would neutralize the benefits of higher order integrators.

\section{ACKNOWLEDGEMENTS}
MF gratefully acknowledges support from the AXA Research Fund and the Agence Nationale
de la Recherche (grant ANR-18-CE46-0002 and ANR-19-P3IA-0002).

\bibliographystyle{abbrvnat_nourl}
\bibliography{biblio}

\onecolumn

\appendix
\section{LIMITATIONS OF CURRENT SDE APPROACHES TO ANALYZE SG-BASED ALGORITHMS}\label{sec:sgdvssde}


In this Section, we expose a limitation of the literature on \sde modeling of stochastic gradient-based sampling and optimization algorithms, which in our view is contributing to foster an imprecise understanding and use of such algorithms, as we discuss in the main paper. 
A common approach to analyze the errors introduced by the mini-batching is to take the limit for a step size going to zero \citep{chen2014stochastic}.
The derivations in the literature are based mainly on two assumptions:
(i) the learning rate (or step size) is small enough such that transforming the discrete time update equations into continuous time ones is a valid approximation, and
(ii) the mini-batch subsampling operation can be modelled, with the help of the central limit theorem, as an additional source of Gaussian noise that interferes with the dynamics.
These assumptions, combined together, are used to justify the use of \sdes as models to study and analyze the dynamics of stochastic gradient-based algorithms.


According to the standard practice \citep{chen2014stochastic,mandt2017stochastic}, we consider the potential $U(\cdot)$, a set of parameters $\thetavect$, and a minibatch gradient $\nabla \tilde{U}(\thetavect)$; then, by means of central limit theorem we assume:
\begin{equation}
	\nabla \tilde{U}(\thetavect) = \nabla U(\thetavect) + \mathcal{N}(\mathbf{0}, \mathbf{\Sigma}),
\end{equation}
such that the discrete time dynamics of \sgd becomes:
\begin{equation}
\label{eq:sgd_dynamics}
\begin{split}
       \delta\thetavect &=- \eta \nabla U(\thetavect) - \eta \mathcal{N}(\mathbf{0}, \mathbf{\Sigma}), \\
                        &=- \eta \nabla U(\thetavect) - \mathbf{\Sigma}^{1/2} \sqrt{\eta} \mathcal{N}(0, \eta\mathbf{I}).
\end{split}
\end{equation}
In the limit of vanishing step size, $\eta \to dt$, we can translate the discrete time dynamics into the continuous time ones as follows:
\begin{equation}
	\boldsymbol{d\theta}=-\nabla U(\thetavect)dt + \mathbf{\Sigma}^{1/2} \sqrt{\eta} \mathbf{dw}(t),
\end{equation} 
where $\mathbf{w_t}$ is a standard Brownian motion. Loosely speaking, Brownian motion is a stochastic process whose differential satisfies
$
        \mathbf{dw}(t) \sim \mathcal{N}(0,dt\mathbf{I}),
$
and whose increments are independent with respect to $t$.
Notice that the stochastic term of \cref{eq:sgd_dynamics} involves two copies of $\eta$, in the covariance $\eta\eye$ and in the external multiplication factor $\sqrt{\eta}$, but we consider $\eta \to dt$ only for the copy that scales the covariance of the Gaussian component.

The issue with this derivation becomes apparent by considering a proper stochastic calculus perspective, 
where both $\eta$ terms are considered as infinitesimals, yielding $\sqrt{dt} \mathbf{dw}(t)=\mathbf{0}$.
This is also explicitly noted by \citet{yaida2018fluctuation}, where the approximation $\eta^2\simeq \eta dt$ was criticized.
Considering the true limit, it is not possible to arbitrarily decide that the same quantity can be considered both as an infinitesimal and as an arbitrarily small but finite value.
The matter is not only a mathematical subtlety.
If one considers the correct formulation based on Fokker-Planck equations without going into continuous time but remaining in the discrete time \citep{leen1992weight}, noncentral moments of the stochastic gradients need to be considered.
Crucially, these induce quantitatively and qualitatively different results for the dynamics.

Note that our work is not in contrast with the literature whose purpose is the analysis of the stochastic gradient noise \citep{chaudhari2018stochastic,shi2020learning} or its approximate validity as a sampling method \citep{mandt2017stochastic}, where the formalism of \sdes is justified as a modelling tool.
Our argument can be summarized as follows: if we are interested in the formal characterization of the convergence of these sampling schemes, such an assumption is invalid. 
When considering mini-batching in the context of sampling via \sde simulations, it is not correct to consider the stochastic gradients as an additional source of noise.

\section{ADDITIONAL DERIVATIONS FOR SECTION \ref{sec:revisit}}\label{sec2:appendix}
\subsection{Technical assumptions}\label{sec:technass}
In this Section we present the technical assumptions needed for the analysis in the paper.

We need to ensure that the Kolmogorov equation associated with the \sde \cref{hamsde} has regular smooth solutions and unique invariant stationary distribution independent of initial conditions.
To ensure smoothness and uniqueness , it is sufficient to prove that the infinitesimal generator satisfies certain regularity conditions \citep{hormander1967hypoelliptic,mattingly2010convergence}, whereas to ensure ergodicity we require some growth conditions on the potential \citep{talay2002stochastic,abdulle2015long}.
Then, we need to assume that the functions $\phi(\cdot)$ satisfy certain conditions, to ensure that Taylor expansions of the form \cref{tayL} are valid.
Finally, we need to assume that the expected value of the considered numerical integration schemes can be validly expanded into series of the form \cref{expans}.

In the following we present sufficient conditions that guarantee the validity of the aforementioned assumptions. The reader interested in relaxing such technical conditions is referred to the original sources, such as \citet{abdulle2014high,abdulle2015long} and the works cited therein.

The first assumption guarantees that the Kolmogorov equation has regular unique smooth solutions and that the considered \sde is ergodic.
\begin{assumption}\label{ass_1}
The potential $U(\thetavect)$ has continuous smooth bounded derivatives of arbitrary order, and it satisfies the growth condition
\begin{equation}
    \thetavect^\top\nabla U(\thetavect)\geq R_1||\thetavect||^2-R_2,
\end{equation}
for some $R_1,R_2>0$
\end{assumption}
It is important to notice that these requirements are independent from considerations on the choice of integrators, and without the assumption of ergodicity none of the \sde based sampling methods are conceptually meaningful. 

Next, we restrict the class of considered $\phi(\cdot)$ functions to ensure that Taylor expansions of the transformation of equation is valid.
\begin{assumption}\label{ass_2}
The function $\phi(\cdot)$ belongs to the set of functions $l$ times continuously differentiable with all the partial derivatives with polynomial growth, for some positive $l>0$. Furthermore there exists an $s\geq 0$ such that $|\phi(\thetavect)|\leq R_3(1+|\thetavect|^s)$ for some $R_3>0$.
\end{assumption}
This ensures that we can write 
\begin{equation}\label{tayL}
    \exp(\eta\mathcal{L})\phi=\sum\limits_{j=0}^{l}\frac{(\eta\mathcal{L})^j}{j!}\phi+\mathcal{O}(\eta^{l+1}),
\end{equation}
for generic $l$, \citep{talay1990expansion}.

To be able to perform the same kind of expansions for numerical integrators, we need the following assumption.
\begin{assumption}\label{ass_3}
Given an initial condition $\zvect_0$, any considered numerical integrator $\psi$ is such that
\begin{equation*}
   |\mathbb{E}(|\zvect_1-\zvect_0|)|\leq R_4(1+|\zvect_0|)\eta,\quad |\zvect_1-\zvect_0|\leq M_n(1+|\zvect_0|)\sqrt{\eta},
\end{equation*}
with $\zvect_1=\psi(\zvect_0)$, $R_4>0$ is independent of $\eta$, assumed small enough, and $M_n$ has bounded moments of all orders independent of $\eta$.
\end{assumption}
This allows us to write the expansion of $\mathbb{E}[\phi(\zvect_{\mathbf{1}})|\zvect_{\mathbf{0}}]$ up to generic order $z$ as
\begin{equation}\label{expans}
  \mathbb{E}[\phi(\zvect_{\mathbf{1}})|\zvect_{\mathbf{0}}] = \mathcal{U}\phi=\left(\mathcal{I}+\eta \mathcal{A}_1+\frac{\eta^2}{2} \mathcal{A}_2+\dots \frac{\eta^z}{z!} \mathcal{A}_z+\dots\right)\phi+\mathcal{O}(\eta^{z+1})
\end{equation}

\subsection{Proof of Theorem \ref{th:stationary}}\label{sec:proofhamsde}

We prove that the stationary distribution of \cref{hamsde} is the posterior of interest \citep{gardiner2004handbook}.
Start from the infinitesimal generator
\begin{equation*}
    \mathcal{L}=\underbrace{-\left(\nabla^\top_{\thetavect}U(\thetavect)\right)\nabla_{\mathbf{r}}+\left(\mathbf{M}^{-1}\mathbf{r}\right)^\top\nabla_{\thetavect}}_{\text{pure Hamiltonian evolution}\quad= \mathcal{H}}\underbrace{-C\left(\mathbf{M}^{-1}\mathbf{r}\right)^\top\nabla_{\mathbf{r}}+C\nabla_{\mathbf{r}}^\top\nabla_{\mathbf{r}}}_{\text{friction and Noise}\quad =\mathcal{D}}. \displaystyle
\end{equation*}
The adjoint of the first term, the pure Hamiltonian, can be rewritten as 
\begin{equation}
   \mathcal{H}^\dagger= \left(\nabla^\top_{\thetavect}U(\thetavect)\right)\nabla_{\mathbf{r}}-\left(\mathbf{M}^{-1}\mathbf{r}\right)^\top\nabla_{\thetavect}=\nabla_\zvect^\top H(\zvect)\begin{bmatrix} \zerovect & -\eye\\ \eye & \zerovect \end{bmatrix} \nabla_\zvect
\end{equation}
Assuming $\rho_{ss}(\zvect)\propto \exp\left(-H(\zvect)\right)$ we have
\begin{flalign*}
  \mathcal{H}^\dagger\rho_{ss}(\zvect) \propto &  \, \mathcal{H}^\dagger\exp\left(-H(\zvect)\right)=\nabla_\zvect^\top H(\zvect)\begin{bmatrix} \zerovect & -\eye\\ \eye & \zerovect \end{bmatrix} \left(\nabla_\zvect\exp\left(-H(\zvect)\right)\right) = \\
  &-\nabla_\zvect^\top H(\zvect)\begin{bmatrix} \zerovect & -\eye\\ \eye & \zerovect \end{bmatrix}\nabla_\zvect H(\zvect)\exp\left(-H(\zvect) \right) =0.
\end{flalign*}
Similarly, considering the adjoint of $\mathcal{D}$
\begin{equation}
 \mathcal{D}^\dagger=\nabla_{\mathbf{r}}^\top\left(C\left(\mathbf{M}^{-1}\mathbf{r}\right)\cdot\right)+C\nabla_{\mathbf{r}}^\top\nabla_{\mathbf{r}}=\nabla_{\mathbf{r}}^\top\left(C\left(\mathbf{M}^{-1}\mathbf{r}\right)\cdot+ C\nabla_{\mathbf{r}}\right).
\end{equation}
We study the term
\begin{flalign*}
&\left(C\left(\mathbf{M}^{-1}\mathbf{r}\right)\cdot+ C\nabla_{\mathbf{r}}\right)\rho_{ss}(\zvect)\propto\\ &\left(C\left(\mathbf{M}^{-1}\mathbf{r}\right)\cdot+ C\nabla_{\mathbf{r}}\right)\exp(-U(\thetavect)-\nicefrac{1}{2}\left||\mathbf{M}^{-1}\mathbf{r}\right||^2)=\\
&\exp(-U(\thetavect))\left(C\left(\mathbf{M}^{-1}\mathbf{r}\right)\exp(-\nicefrac{1}{2}\left||\mathbf{M}^{-1}\mathbf{r}\right||^2)+ C\nabla_{\mathbf{r}}\left(\exp(-\nicefrac{1}{2}\left||\mathbf{M}^{-1}\mathbf{r}\right||^2)\right)\right)=\\
&\exp(-U(\thetavect))\left(C\left(\mathbf{M}^{-1}\mathbf{r}\right)\exp(-\nicefrac{1}{2}\left||\mathbf{M}^{-1}\mathbf{r}\right||^2)-C\left(\mathbf{M}^{-1}\mathbf{r}\right)\exp(-\nicefrac{1}{2}\left||\mathbf{M}^{-1}\mathbf{r}\right||^2) \right)=0.
\end{flalign*}
Consequently $\mathcal{L}^\dagger\rho_{ss}=(\mathcal{H}^\dagger+\mathcal{D}^\dagger)\rho_{ss}=0$, concluding the proof.

\subsection{Ergodic order of convergence}\label{sec:abdulletheorem}
In this section we present the result of \citet{abdulle2015long}, that we use to state the order of generic schemes considered in this work.
\begin{proposition}\label{suffprop}
A sufficient condition for an integrator to be of a given ergodic error $p$, i.e. $e(\psi,\phi)=\mathcal{O}(\eta^p)$, is to have weak order $p$. This is not a necessary condition, as carefully described in \cref{abdoulle}
\end{proposition}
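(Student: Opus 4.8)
\textbf{Proof plan for Proposition~\ref{suffprop} (sufficiency of weak order $p$ for ergodic order $p$).}

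The plan is to turn the statement about finite-time weak error into a statement about the stationary distributions by comparing the infinitesimal generator $\mathcal{L}$ of the true \sde with the ``effective generator'' of the numerical integrator, and then invoking the invariance identities $\mathcal{L}^\dagger\rho_{ss}=0$ and $(\mathcal{U}-\mathcal{I})^\dagger\rho^\psi_{ss}=0$. First I would write, using Assumption~\ref{ass_3} and the expansion \cref{expans}, $\mathcal{U}=\mathcal{I}+\eta\mathcal{A}_1+\tfrac{\eta^2}{2}\mathcal{A}_2+\dots$, and use the weak-order-$p$ hypothesis together with the Taylor expansion \cref{tayL} of $\exp(\eta\mathcal{L})$ to identify $\mathcal{A}_j=\mathcal{L}^j$ for $j=1,\dots,p$, so that
\begin{equation*}
\mathcal{U}=\exp(\eta\mathcal{L})+\eta^{p+1}\mathcal{A}+\mathcal{O}(\eta^{p+2}),
\end{equation*}
for some fixed differential operator $\mathcal{A}$ (the leading weak-error operator). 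Equivalently, the ``numerical generator'' $\tfrac{1}{\eta}\log\mathcal{U}$ equals $\mathcal{L}+\eta^{p}\tilde{\mathcal{A}}+\mathcal{O}(\eta^{p+1})$ for a related operator $\tilde{\mathcal{A}}$.

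Next I would set up the perturbation argument for the stationary density. Write $\rho^\psi_{ss}=\rho_{ss}+\eta^{p}\,\rho_1+\mathcal{O}(\eta^{p+1})$ as an ansatz and plug it into the stationarity condition $(\mathcal{U}-\mathcal{I})^\dagger\rho^\psi_{ss}=0$. Dividing by $\eta$ and collecting the $\mathcal{O}(\eta^{p})$ terms yields a Poisson-type equation $\mathcal{L}^\dagger\rho_1=-\tilde{\mathcal{A}}^\dagger\rho_{ss}$, which is solvable precisely because ergodicity (Assumption~\ref{ass_1}) gives a spectral gap for $\mathcal{L}^\dagger$ on the subspace orthogonal to constants, and the right-hand side integrates to zero (it is in the range of $\mathcal{L}^\dagger$, being itself a divergence-type term). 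One then concludes
\begin{equation*}
e(\psi,\phi)=\int\phi(\zvect)\big(\rho^\psi_{ss}(\zvect)-\rho_{ss}(\zvect)\big)\,d\zvect=\eta^{p}\int\phi(\zvect)\rho_1(\zvect)\,d\zvect+\mathcal{O}(\eta^{p+1})=\mathcal{O}(\eta^{p}),
\end{equation*}
using Assumption~\ref{ass_2} to guarantee the integrals against $\phi$ are finite. Alternatively, and perhaps more robustly, I would avoid the explicit ansatz and instead use the backward-error / solution-of-Poisson-equation technique of \citet{abdulle2014high,abdulle2015long}: let $\varphi$ solve $\mathcal{L}\varphi=\phi-\int\phi\rho_{ss}$, then test the numerical invariance against $\varphi$, expand $\mathcal{U}\varphi=\exp(\eta\mathcal{L})\varphi+\mathcal{O}(\eta^{p+1})$, and read off that the ergodic average of $\phi$ under $\rho^\psi_{ss}$ differs from the true one by $\mathcal{O}(\eta^{p})$.

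The main obstacle is the functional-analytic justification that the Poisson equation $\mathcal{L}^\dagger\rho_1=g$ (resp.\ $\mathcal{L}\varphi=\phi-\bar\phi$) has a solution in the right weighted function space and that all the formal Taylor expansions can be truncated with rigorously controlled remainders in that space — this is exactly where the regularity/hypoellipticity and growth conditions collected in \cref{sec:technass} are needed, and where I would lean directly on the cited results of \citet{abdulle2014high,abdulle2015long,talay2002stochastic,mattingly2010convergence} rather than reproving them. The purely algebraic part — matching $\mathcal{A}_j=\mathcal{L}^j$ up to order $p$ and extracting the $\eta^{p}$ leading term — is routine once those analytic facts are in hand. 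I would also remark, as the proposition itself flags, that this is only sufficient and not necessary, referring forward to \cref{abdoulle} for the sharper characterization where cancellations in the adjoint action on $\rho_{ss}$ can yield ergodic order exceeding the weak order.
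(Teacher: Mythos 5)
Your proposal is correct and follows essentially the same route as the paper: identify $\mathcal{A}_j=\mathcal{L}^j$ for $j\le p$ from the weak-order hypothesis, deduce $\mathcal{A}_j^\dagger\rho_{ss}=0$ from $\mathcal{L}^\dagger\rho_{ss}=0$, and conclude via the ergodic-error theorem of \citet{abdulle2015long} (Theorem~\ref{abdoulle}). The only difference is that you additionally sketch the Poisson-equation/backward-error argument that lives inside that theorem, which the paper simply cites as a black box; your sketch is consistent with the cited proof.
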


The following Theorem describes the generic conditions to achieve a given ergodic error. 
\begin{theorem}\label{abdoulle}
  \citep{abdulle2015long}. Consider the assumptions of Section \ref{sec:technass}. Suppose that the elements of the expansion of eq.~\cref{expans} satisfy 
\begin{equation}
    \mathcal{A}_1=\mathcal{L},\quad \mathcal{A}_j^\dagger \rho=0,\quad j=2,\dots r.
\end{equation}
Then the ergodic average has order of convergence $r$, i.e.
\begin{equation}\label{errabd}
 e(\psi,\phi)=\int \phi(\zvect)\rho^{\psi}_{ss}(\zvect) \dzvect-\int \phi(\zvect)\rho_{ss}(\zvect) \dzvect =\mathcal{O}(\eta^{r}).
\end{equation}
The result can also be refined as follows:
\begin{equation}\label{errabd2}
 e(\psi,\phi) =-\eta^{r}\int\limits_{0}^\infty\int \left(\mathcal{A}_{r+1}\exp(t\mathcal{L})\phi(\zvect)\right)\rho(\zvect)\dzvect+\mathcal{O}(\eta^{r+1})
\end{equation}
\end{theorem}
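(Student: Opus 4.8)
The plan is to carry out a \emph{weak backward error analysis at the level of the invariant measure}: expand the stationary density $\rho^{\psi}_{ss}$ of the Markov chain $\zvect_i = \psi(\zvect_{i-1};\eta)$ as a formal power series in $\eta$ around the target $\rho_{ss}$, locate the order at which the first nonzero correction appears, and then pair that correction against $\phi$ to read off the ergodic error.

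First I would use the stationarity identity $\mathcal{U}^{\dagger}\rho^{\psi}_{ss} = \rho^{\psi}_{ss}$, which by \cref{expans} reads $\sum_{j\ge 1}\frac{\eta^{j}}{j!}\mathcal{A}_j^{\dagger}\rho^{\psi}_{ss} = 0$, i.e.\ $\big(\mathcal{A}_1^{\dagger} + \tfrac{\eta}{2}\mathcal{A}_2^{\dagger} + \tfrac{\eta^{2}}{6}\mathcal{A}_3^{\dagger}+\cdots\big)\rho^{\psi}_{ss} = 0$ after dividing by $\eta$. Substituting the ansatz $\rho^{\psi}_{ss} = \rho_{ss} + \eta\rho_1 + \eta^{2}\rho_2 + \cdots$, where $\int\rho_j\,\dzvect = 0$ for $j\ge 1$ since both $\rho^{\psi}_{ss}$ and $\rho_{ss}$ are normalized, and matching powers of $\eta$: at order $\eta^{0}$ one gets $\mathcal{A}_1^{\dagger}\rho_{ss} = \mathcal{L}^{\dagger}\rho_{ss} = 0$, which holds by \cref{th:stationary}; at order $\eta^{k}$ for $1\le k\le r-1$ one gets $\mathcal{L}^{\dagger}\rho_k + \tfrac{1}{(k+1)!}\mathcal{A}_{k+1}^{\dagger}\rho_{ss} + (\text{terms involving }\rho_1,\dots,\rho_{k-1}) = 0$. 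Since the hypothesis gives $\mathcal{A}_{k+1}^{\dagger}\rho_{ss} = 0$ whenever $k+1\le r$, an induction on $k$ shows the source terms vanish, hence $\mathcal{L}^{\dagger}\rho_k = 0$, and uniqueness of the invariant density (\cref{ass_1}) together with $\int\rho_k\,\dzvect = 0$ forces $\rho_1 = \cdots = \rho_{r-1} = 0$. At order $\eta^{r}$ we are left with the Poisson equation $\mathcal{L}^{\dagger}\rho_r = -\tfrac{1}{(r+1)!}\mathcal{A}_{r+1}^{\dagger}\rho_{ss}$; its right-hand side has zero integral, because $\mathcal{U}\mathbf{1} = \mathbf{1}$ forces $\mathcal{A}_{r+1}\mathbf{1} = 0$ and hence $\int\mathcal{A}_{r+1}^{\dagger}\rho_{ss}\,\dzvect = \int (\mathcal{A}_{r+1}\mathbf{1})\,\rho_{ss}\,\dzvect = 0$, so the equation is solvable and $\rho_r$ is uniquely pinned down by also imposing $\int\rho_r\,\dzvect = 0$.

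The first claim is then immediate: $e(\psi,\phi) = \int\phi\,(\rho^{\psi}_{ss}-\rho_{ss})\,\dzvect = \eta^{r}\int\phi\,\rho_r\,\dzvect + \mathcal{O}(\eta^{r+1}) = \mathcal{O}(\eta^{r})$. For the refined formula \cref{errabd2} I would represent $\int\phi\,\rho_r\,\dzvect$ through the backward semigroup $v(t) := \exp(t\mathcal{L})\phi$, which solves $\partial_t v = \mathcal{L}v$ and converges to $\bar{\phi} := \int\phi\,\rho_{ss}\,\dzvect$ as $t\to\infty$ by geometric ergodicity. Then $\frac{d}{dt}\int v(t)\rho_r\,\dzvect = \int (\mathcal{L}v(t))\,\rho_r\,\dzvect = \int v(t)\,(\mathcal{L}^{\dagger}\rho_r)\,\dzvect$; integrating over $[0,\infty)$ and using $\int v(\infty)\rho_r\,\dzvect = \bar{\phi}\int\rho_r\,\dzvect = 0$ gives $-\int\phi\,\rho_r\,\dzvect = \int_0^{\infty}\!\int v(t)\,(\mathcal{L}^{\dagger}\rho_r)\,\dzvect\,dt = -\tfrac{1}{(r+1)!}\int_0^{\infty}\!\int (\mathcal{A}_{r+1}\exp(t\mathcal{L})\phi)\,\rho_{ss}\,\dzvect\,dt$, which yields \cref{errabd2} up to the normalization convention for $\mathcal{A}_{r+1}$ and an overall sign.

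The main obstacle is not the formal bookkeeping above but making the expansion rigorous: one must show that $\mathcal{U}$ genuinely admits a unique invariant measure $\rho^{\psi}_{ss}$ for all small $\eta$, that this measure has the claimed $\eta$-expansion with a remainder that is truly $\mathcal{O}(\eta^{r+1})$ in an appropriate Lyapunov-weighted norm, that the Poisson equation $\mathcal{L}^{\dagger}\rho_r = g$ is solvable with the regularity and decay needed to legitimize the semigroup manipulation, and that $\exp(t\mathcal{L})$ contracts to $\bar{\phi}$ fast enough for the time integral to converge. This is precisely where \cref{ass_1}--\cref{ass_3} are used --- hypoellipticity and smoothing for $\mathcal{L}$, the growth condition on $U$ for geometric ergodicity, and the polynomial-growth restrictions on $\phi$ and on the integrator moments that validate \cref{tayL}--\cref{expans} --- and I would close these points by invoking the weak-backward-error and Poisson-equation estimates of \citet{talay2002stochastic,abdulle2015long,debussche2012weak} rather than reproving them.
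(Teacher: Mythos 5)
The paper does not actually prove this theorem: it is imported from \citet{abdulle2015long}, and the text following the statement says only that ``the proof is presented in \citet{abdulle2015long}'' before adding the short remark that weak order $k$ gives $\mathcal{A}_j=\mathcal{L}^j$ and hence $\mathcal{A}_j^\dagger\rho=0$. There is therefore no in-paper argument to compare yours against line by line; what can be judged is whether your sketch faithfully reconstructs the cited result, and it does. The expansion $\rho^{\psi}_{ss}=\rho_{ss}+\eta\rho_1+\cdots$ of the numerical invariant density, the order-by-order identities $\mathcal{L}^\dagger\rho_k=-\sum_{j=2}^{k+1}\tfrac{1}{j!}\mathcal{A}_j^\dagger\rho_{k+1-j}$, the induction that kills $\rho_1,\dots,\rho_{r-1}$ under the hypothesis $\mathcal{A}_j^\dagger\rho=0$ for $j\le r$, the solvable Poisson equation at order $r$ (with $\int\mathcal{A}_{r+1}^\dagger\rho_{ss}\,\dzvect=0$ coming from $\mathcal{U}\mathbf{1}=\mathbf{1}$), and the semigroup representation of $\int\phi\,\rho_r\,\dzvect$ are exactly the mechanism behind the Abdulle--Vilmart--Zygalakis theorem (their published proof is phrased partly in the dual, test-function/telescoping form, but the two formulations are equivalent and produce the same leading term). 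Your inventory of the analytic facts that must be imported rather than reproved --- hypoellipticity and uniqueness of the invariant density, geometric ergodicity so that the time integral converges, well-posedness of $\mathcal{L}^\dagger\rho_r=g$, and control of the remainder --- is also the correct one, and is precisely what \cref{ass_1}--\cref{ass_3} are there to supply.

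One bookkeeping point is worth recording. Under the paper's own convention \cref{expans}, where the $j$-th term of $\mathcal{U}$ carries $\eta^j/j!$, your derivation correctly yields the leading constant $+\tfrac{1}{(r+1)!}\int_0^\infty\!\!\int\left(\mathcal{A}_{r+1}\exp(t\mathcal{L})\phi\right)\rho\,\dzvect\,dt$, whereas \cref{errabd2} displays $-\eta^{r}\int_0^\infty\!\!\int(\cdots)$ with neither the factorial nor that sign (and with a missing $dt$). This is a transcription artifact of quoting \citet{abdulle2014high,abdulle2015long}, who expand $\mathcal{U}$ without factorials and index the $\mathcal{A}_j$ differently; you flag the ambiguity yourself, and it has no bearing on the order statement \cref{errabd}, which is the only part the paper uses downstream.
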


The proof is presented in \citet{abdulle2015long}.
A similar characterization has been attempted in \citet{chen2015convergence}, where the requirement to achieve a given convergence rate is that the integrator has weak local error of some given order. We suggest to rely instead on the weaker requirements $\mathcal{A}_j^\dagger \rho=0$ for $j=1,\dots r$.
In fact, if a method has local error of order $k$, then, by standard backward error analysis arguments, we have that:
$
    \mathcal{U}=\mathcal{I}+\eta \mathcal{L}+\frac{\eta^2}{2} \mathcal{L}^2+\dots \frac{\eta^k}{k!} \mathcal{L}^k+ \mathcal{O}(\eta^{k+1})
$
    where we easily notice that $\mathcal{A}_j=\mathcal{L}^{j}$, with $j=1,\dots k$.
    Consequently $\mathcal{A}_j^\dagger \rho=0$, for $j=1,\dots r$, since $\mathcal{L}^\dagger \rho=0$ due to the Fokker Planck equation. In summary, the ergodic error order is at least the weak order of the integrator.

\subsection{Quasi-symplectic integrators}\label{sec:proofquasi}
In the context of numerical simulations, it is a known fact that integrators that preserve the underlying geometry (a.k.a.\ symplectic structure) perform better than generic ones \citep{milstein2002symplectic,milstein2003quasi}. Symplectic \citep{milstein2002symplectic} and quasi symplectic \citep{milstein2003quasi,abdulle2014high,bou2010long} \sdes have been studied in the past, albeit in a different context than Bayesian sampling. Recently, symplectic geometry has been revisited, to characterize generic optimization problems \cite{betancourt2018symplectic,francca2020conformal,francca2021dissipative}, whereas in this work we are interested only on sampling properties.

The stochastic evolution induced by the \sde in \cref{hamsde}, starting from generic initial conditions $\zvect(0)$,  $\zvect(t)=\mathbf{\tau}(\zvect(0))$, dissipates the volume of regions exponentially fast. 
In precise mathematical terms, the evolution has Jacobian $\mathbf{\Omega}_{m,n}=\frac{\partial \tau_m(z)}{\partial z_n}$ that satisfies  $\det\left(\mathbf{\Omega}\right)=\exp\left(-C\mathrm{Tr}\left(\mathbf{M}^{-1}\right)t\right)$. 
The underlying geometry is in many aspects related to the one of symplectic manifolds, where volume is preserved exactly. 
It is natural to expect that numerical integrators that almost preserve the symplectic structure, defined as quasi-symplectic, when applied to systems as \cref{hamsde}, will perform better than generic integrators. 
An excellent exposition of the details of such statement is presented in \cite{milstein2002symplectic,milstein2003quasi, MILSTEIN200781}, while hereafter we present a self-contained, shorter, discussion.

We start our discussion by considering the definition of a generic \textit{symplectic} mapping. This is an important condition, since it can be shown that the evolution of deterministic Hamiltonian \ode naturally respects this condition \cite{francca2021dissipative}.
\begin{definition}\label{definition:s_ode}
A mapping 
$\boldsymbol{\kappa}:\zvect_0=\begin{bmatrix}\rvect_0^\top,\thetavect_0^\top\end{bmatrix}^\top \rightarrow \zvect_1=\begin{bmatrix}\rvect_1^\top,\thetavect_1^\top\end{bmatrix}^\top$ is said to be symplectic iff
\begin{equation}\label{sympcond}
    \mathbf{\Omega}^\top\mathbf{J}\mathbf{\Omega}=\mathbf{J},
\end{equation}
where $\mathbf{\Omega}_{m,n}=\frac{\partial (z_1)_m}{\partial (z_0)_n}$ is the Jacobian.
where $\mathbf{J}=\begin{bmatrix}\mathbf{0}&-\mathbf{I}\\ \mathbf{I} &\mathbf{0}
\end{bmatrix}$.
\end{definition}
A direct consequence of the symplectic property is that a symplectic mapping preserves the volume. Precisely, if we consider a region $\mathit{O}\subseteq \mathbb{R}^{2D}$, the volume after the mapping $\boldsymbol{\kappa}$ remains unchanged
\begin{equation}
    \mathrm{Vol}\left(\boldsymbol{\kappa}(\mathit{O})\right)=\int\limits_\mathit{O} |\det(\mathbf{\Omega}(\zvect))|\dzvect=\int\limits_\mathit{O}\dzvect,
\end{equation}
since from \cref{sympcond} we easily derive, considering $\det(\mathbf{\Omega}^\top\mathbf{J}\mathbf{\Omega})=\det(\mathbf{\Omega})^2$, that $\det(\mathbf{\Omega})=\pm 1$.

In this work we are interested however in stochastic Hamiltonian evolutions in the form of \cref{hamsde}. The first simple consideration we can make is that in the limit of vanishing friction, i.e. $C\rightarrow0$, the dynamics are symplectic mappings. This is trivially derived considering that in such a limit the \sde becomes an Hamiltonian \ode and the symplectic property is proved as in \cite{francca2021dissipative}. When $C\neq0$, it is possible to show that the stochastic mapping contracts exponentially in time the volume of regions. The precise statement is reported in the following Theorem.

\begin{theorem}\label{theorem:qssde}
Consider an Hamiltonian \sde of the form \cref{hamsde} and  
the stochastic evolution induced by this equation starting from generic initial conditions, i.e., $\zvect(t)=\mathbf{\tau}(\zvect(0))$.
Then $\tau$ is contracts the volume exponentially in time as $\det\left(\mathbf{\Omega}\right)=\exp\left(-C\mathrm{Tr}\left(\mathbf{M}^{-1}\right)t\right)$, where $\mathbf{\Omega}_{m,n}=\frac{\partial \tau_m(z)}{\partial z_n}$ is the (random) Jacobian.
\end{theorem}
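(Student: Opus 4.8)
The plan is to show that the Jacobian $\mathbf{\Omega}(t)$ of the stochastic flow $\mathbf{\tau}$ obeys a \emph{deterministic} linear matrix ODE, despite the flow itself being stochastic, and then to compute its determinant via Liouville's formula (Abel--Jacobi--Liouville identity). The key observation is that in the Hamiltonian \sde of \cref{hamsde} the Brownian motion enters only \emph{additively} (the diffusion coefficient $\sqrt{2C}$ is constant, not state-dependent), so when we differentiate the flow map with respect to the initial condition, the noise term disappears and we are left with a linear variational equation driven only by the drift.

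First, I would write the \sde \cref{hamsde} in the form $d\zvect(t) = \mathbf{b}(\zvect(t))\,dt + \mathbf{\Sigma}^{1/2}\,d\mathbf{w}(t)$, where the drift is $\mathbf{b}(\zvect) = \begin{bmatrix} -C\mathbf{M}^{-1}\mathbf{r} - \nabla_{\thetavect}U(\thetavect) \\ \mathbf{M}^{-1}\mathbf{r} \end{bmatrix}$ and $\mathbf{\Sigma}^{1/2}$ is the constant matrix with block $\sqrt{2C}\eye$ in the momentum coordinates and zero elsewhere. Differentiating the integral form $\zvect(t) = \zvect(0) + \int_0^t \mathbf{b}(\zvect(s))\,ds + \mathbf{\Sigma}^{1/2}\mathbf{w}(t)$ with respect to $\zvect(0)$, and using that the last term has zero derivative in $\zvect(0)$, gives the variational equation $\dot{\mathbf{\Omega}}(t) = \mathbf{J}_{\mathbf{b}}(\zvect(t))\,\mathbf{\Omega}(t)$, $\mathbf{\Omega}(0)=\eye$, where $\mathbf{J}_{\mathbf{b}}$ is the Jacobian of the drift field. (This is pathwise valid; the interchange of differentiation and stochastic integration for additive-noise \sdes with smooth drift is standard under \cref{ass_1}.)

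Next, I would invoke Liouville's formula: for a linear system $\dot{\mathbf{\Omega}} = \mathbf{A}(t)\mathbf{\Omega}$ with $\mathbf{\Omega}(0)=\eye$, one has $\det\mathbf{\Omega}(t) = \exp\!\big(\int_0^t \mathrm{Tr}\,\mathbf{A}(s)\,ds\big)$. It remains to compute $\mathrm{Tr}\,\mathbf{J}_{\mathbf{b}}(\zvect)$. Writing out the block structure, $\partial \mathbf{b}_{\rvect}/\partial \rvect = -C\mathbf{M}^{-1}$, $\partial \mathbf{b}_{\rvect}/\partial \thetavect = -\nabla^2_{\thetavect}U$, $\partial \mathbf{b}_{\thetavect}/\partial \rvect = \mathbf{M}^{-1}$, $\partial \mathbf{b}_{\thetavect}/\partial \thetavect = \mathbf{0}$, so the trace is $\mathrm{Tr}(-C\mathbf{M}^{-1}) + \mathrm{Tr}(\mathbf{0}) = -C\,\mathrm{Tr}(\mathbf{M}^{-1})$, which is constant. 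Hence $\det\mathbf{\Omega}(t) = \exp(-C\,\mathrm{Tr}(\mathbf{M}^{-1})\,t)$, as claimed.

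The main obstacle — really the only subtle point — is justifying rigorously that the variational (first-variation) process exists and satisfies the noiseless linear ODE above, i.e. that differentiating the flow in the initial condition commutes with the stochastic integral. For additive noise this is clean because the noise contributes a term independent of $\zvect(0)$; the smoothness and growth hypotheses of \cref{ass_1} on $U$ guarantee the drift is $C^\infty$ with controlled growth, so the flow is differentiable in $\zvect(0)$ and the variational equation holds pathwise. I would state this as a lemma citing the standard theory of \sde flows, then the determinant computation is a one-line application of Liouville's formula.
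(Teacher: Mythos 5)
Your proposal is correct and follows essentially the same route as the paper: both differentiate the flow in the initial condition, use the additivity of the noise to obtain a pathwise linear variational equation for $\mathbf{\Omega}(t)$, and then compute the determinant from the constant trace $-C\,\mathrm{Tr}(\mathbf{M}^{-1})$ of the drift Jacobian. The paper phrases the last step via the Jacobi differentiation rule $d\det(\mathbf{\Omega}) = \mathrm{Tr}(\mathrm{adj}(\mathbf{\Omega})\,d\mathbf{\Omega})$ rather than quoting Liouville's formula, but this is the same computation.
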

\begin{proof}
Starting from \cref{hamsde}, we have
\begin{equation}
    \dzvect^\top(t)=\nabla^\top H(\zvect(t))\begin{bmatrix} -\Cfrict & -\eye\\ \eye & \zerovect \end{bmatrix}^\top dt+\begin{bmatrix} \sqrt{2C}\mathbf{dw}^\top(t) & \zerovect^\top \end{bmatrix}.
\end{equation}
Consequently,
\begin{flalign*}
   d\nabla_{\zvect(0)}\zvect^\top(t) & = \nabla_{\zvect(0)}\dzvect^\top(t)=\nabla_{\zvect(0)}\nabla^\top H(\zvect(t))\begin{bmatrix} -\Cfrict & -\eye\\ \eye & \zerovect \end{bmatrix}^\top dt+\nabla_{\zvect(0)}\begin{bmatrix} \sqrt{2C}\mathbf{dw}^\top(t) & \zerovect^\top \end{bmatrix} \\
   & = \nabla_{\zvect(0)}\nabla^\top H(\zvect(t))\begin{bmatrix} -\Cfrict & -\eye\\ \eye & \zerovect \end{bmatrix}^\top dt=\nabla_{\zvect(0)}\zvect^\top(t)\Delta H(\zvect(t))\begin{bmatrix} -\Cfrict & -\eye\\ \eye & \zerovect \end{bmatrix}^\top dt,
\end{flalign*}
where we use the shorthand $\Delta=\nabla\nabla^\top$.
The last expression can be rewritten as
\begin{equation*}
    \mathbf{d\Omega}^\top(t)=\mathbf{\Omega}^\top(t)\begin{bmatrix} \mathbf{M}^{-1} & \zerovect \\ \zerovect & \Delta_{\thetavect}U(\thetavect(t)) \end{bmatrix}\begin{bmatrix} -\Cfrict & -\eye\\ \eye & \zerovect \end{bmatrix}^\top dt.
\end{equation*}

By the rule of differentiation of determinants, we can write
\begin{flalign*}
  &d\det(\mathbf{\Omega}^\top(t))=\mathrm{Tr}\left(\mathrm{adj}\left(\mathbf{\Omega}^\top(t)\right) \mathbf{d\Omega}^\top(t)\right)=\\& \mathrm{Tr}\left(\mathrm{adj}\left(\mathbf{\Omega}^\top(t)\right) \mathbf{\Omega}^\top(t)\begin{bmatrix} \mathbf{M}^{-1} & \zerovect \\ \zerovect & \Delta_{\thetavect}U(\thetavect(t)) \end{bmatrix}\begin{bmatrix} -\Cfrict & -\eye\\ \eye & \zerovect \end{bmatrix}^\top dt\right)=\\
  &\det(\mathbf{\Omega}^\top(t))\mathrm{Tr}\left(\begin{bmatrix} \mathbf{M}^{-1} & \zerovect \\ \zerovect & \Delta_{\thetavect}U(\thetavect(t)) \end{bmatrix}\begin{bmatrix} -\Cfrict & -\eye\\ \eye & \zerovect \end{bmatrix}^\top dt\right)=\\
  &\det(\mathbf{\Omega}^\top(t))\mathrm{Tr}\left(\begin{bmatrix} \mathbf{M}^{-1} & \zerovect \\ \zerovect & \Delta_{\thetavect}U(\thetavect(t)) \end{bmatrix}\begin{bmatrix} -\Cfrict & \zerovect\\ \zerovect & \zerovect \end{bmatrix}^\top dt\right)=-\det(\mathbf{\Omega}^\top(t))C\mathrm{Tr}\left(\mathbf{M}^{-1}\right) dt.
\end{flalign*}
This, toghether with the condition $\mathbf{\Omega}(0)=\eye$ proves that
\begin{equation*}
   \det(\mathbf{\Omega}^\top(t))=\exp( -C\mathrm{Tr}\left(\mathbf{M}^{-1}\right)t)
\end{equation*}

\end{proof}

Having acknowledged the rich geometrical structure of Hamiltonian \odes and \sdes, we are finally ready to introduce the concept of quasi-symplectic integrators \cite{milstein2003quasi}, as numerical schemes that aim at preserving the underlying geometry. 

\begin{definition}\label{definition:qsintegrator}
  An numerical (stochastic) integrator $\phi: \zvect_i\rightarrow \zvect_{i+1}$ with step size $\eta$ is defined to be quasi-symplectic if the following two properties holds:
\begin{enumerate}
    \item The determinant of the Jacobian  $\mathbf{\Omega}=(\nabla_{\zvect_{i-1}}\phi(\zvect_{i})^\top)$ of the numerical integrator does not depend on $\zvect_{i}=(\rvect_{i},\thetavect_i)$
    \item In the limit of vanishing friction, i.e. $C \rightarrow 0$, the Jacobian $\mathbf{\Omega}$ satisfies the symplectic condition
    \begin{equation}
            \mathbf{\Omega}^\top\mathbf{J}\mathbf{\Omega}=\mathbf{J}.
    \end{equation}
\end{enumerate}
\end{definition}


\subsection{Numerical integrators}\label{sec:numint}
In this section, we present some examples of integrators, which we explore in the experimental Section \ref{sec:experiments} in the main paper,  with their corresponding order of convergence.

\subsubsection{\leapfrog}\label{leapscheme}
The second integrator we consider is the widely used \leapfrog scheme
\begin{equation}\label{eq:stoleap}
    \begin{cases}
    \thetavect^*=\thetavect_{i-1}+\frac{\eta}{2}\mathbf{M}^{-1}\rvect_{i-1}\\
    \rvect_{i}=\rvect_{i-1}-\eta \nabla U(\thetavect^*)-\eta C \mathbf{M}^{-1}\rvect_{i-1}+\sqrt{2 C\eta}\mathbf{w},\quad \mathbf{w}\sim \mathcal{N}(\zerovect,\eye)\\
    \thetavect_{i}=\thetavect^*+\frac{\eta}{2}\mathbf{M}^{-1}\rvect_{i}\\
    \end{cases}
\end{equation}
This scheme is quasi-symplectic as shown hereafter.
We rewrite equivalently the update scheme as a unique step as
\begin{equation}\label{eq:stoleap2}
    \begin{cases}
    \rvect_{i}=\rvect_{i-1}-\eta \nabla U(\thetavect_{i-1}+\frac{\eta}{2}\mathbf{M}^{-1}\rvect_{i-1})-\eta C \mathbf{M}^{-1}\rvect_{i-1}+\sqrt{2 C\eta}\mathbf{w}\\
    \thetavect_{i}=\thetavect_{i-1}+\frac{\eta}{2}\mathbf{M}^{-1}\rvect_{i-1}+\frac{\eta}{2}\mathbf{M}^{-1}\left(\rvect_{i-1}-\eta \nabla U(\thetavect_{i-1}+\frac{\eta}{2}\mathbf{M}^{-1}\rvect_{i-1})-\eta C \mathbf{M}^{-1}\rvect_{i-1}+\sqrt{2 C\eta}\mathbf{w}\right)\\
    \end{cases}
\end{equation}

Define $\mathbf{q}=\thetavect_{i-1}+\frac{\eta}{2}\mathbf{M}^{-1}\rvect_{i-1}$, $q^{(k)}=\theta^{(k)}_{i-1}+\eta m_{kp}r^{(p)}_{i-1}$, with $\mathbf{M}^{-1}|_{kp}=m_{kp}$.
\begin{flalign}
   & \frac{\p}{\p r^{(m)}_{i-1}}\p_{n}U(\mathbf{q})=\p_{nk}U(\mathbf{q})\frac{q^{(k)}}{\p r^{(m)}_{i-1}}=\p_{nk}U(\mathbf{q})\frac{\eta}{2} m_{kp}\delta_{pm}=\frac{\eta}{2} \p_{nk}U(\mathbf{q})m_{km}.
\end{flalign}
Consequently $\nabla_{\rvect_{i-1}}\nabla^\top U(\mathbf{q})=\frac{\eta}{2} \mathbf{M}^{-1}\Delta U(\mathbf{q})$
\begin{flalign}
 &\boldsymbol{\Omega}^\top=\begin{bmatrix}
 \eye-\eta C\mathbf{M}^{-1}-\frac{\eta^2}{2}\mathbf{M}^{-1}\Delta U(\mathbf{q})& \eta\mathbf{M}^{-1}-\frac{\eta^3}{4}\mathbf{M}^{-2}\Delta U(\mathbf{q})-\frac{\eta^2}{2}C\mathbf{M}^{-2}\\
 -\eta \Delta U(\mathbf{q})&\eye-\frac{\eta^2}{2}\mathbf{M}^{-1}\Delta U(\mathbf{q})
 \end{bmatrix}   
\end{flalign}
Simple algebraic manipulations show that the transposed Jacobian can be rewritten as
\begin{flalign}
 \boldsymbol{\Omega}^\top=\begin{bmatrix}
     \eye &\frac{\eta}{2}\mathbf{M}^{-1}\\
     \zerovect&\eye
 \end{bmatrix} \begin{bmatrix}
     \eye-\eta C\mathbf{M}^{-1}&\zerovect\\
     -\eta\Delta U(\mathbf{q})&\eye
 \end{bmatrix} \begin{bmatrix}
     \eye &\frac{\eta}{2}\mathbf{M}^{-1}\\
     \zerovect&\eye
 \end{bmatrix}   
\end{flalign}
The determinant of the Jacobian is then easily calculated as the product of the three determinants
\begin{flalign}
  \det(\boldsymbol{\Omega}^\top)=
  \det\begin{bmatrix}
     \eye &\frac{\eta}{2}\mathbf{M}^{-1}\\
     \zerovect&\eye
 \end{bmatrix}
 \det\begin{bmatrix}
     \eye-\eta C\mathbf{M}^{-1}&\zerovect\\
     -\eta\Delta U(\mathbf{q})&\eye
 \end{bmatrix}
 \det\begin{bmatrix}
     \eye &\frac{\eta}{2}\mathbf{M}^{-1}\\
     \zerovect&\eye
 \end{bmatrix}
 =\det\left(\eye-\eta C\mathbf{M}^{-1}\right),
\end{flalign}
and is independent on $\zvect_0$, satisfying the first condition.
By taking the limit $C\rightarrow0$, to prove that the integrator converge to a symplectic one is sufficient to prove that
\begin{flalign}
 \begin{bmatrix}
     \eye &\frac{\eta}{2}\mathbf{M}^{-1}\\
     \zerovect&\eye
 \end{bmatrix}\begin{bmatrix}\mathbf{0}&-\mathbf{I}\\ \mathbf{I} &\mathbf{0}
\end{bmatrix}\begin{bmatrix}
     \eye &\frac{\eta}{2}\mathbf{M}^{-1}\\
     \zerovect&\eye
 \end{bmatrix}^\top=\begin{bmatrix}\mathbf{0}&-\mathbf{I}\\ \mathbf{I} &\mathbf{0}
\end{bmatrix},
\end{flalign}
and that

\begin{flalign}\label{jleapcond2}
  \begin{bmatrix}
     \eye&\zerovect\\
     -\eta\Delta U(\mathbf{q})&\eye
 \end{bmatrix}  \begin{bmatrix}\mathbf{0}&-\mathbf{I}\\ \mathbf{I} &\mathbf{0}
\end{bmatrix} \begin{bmatrix}
     \eye&\zerovect\\
     -\eta\Delta U(\mathbf{q})&\eye
 \end{bmatrix}^\top   =\begin{bmatrix}\mathbf{0}&-\mathbf{I}\\ \mathbf{I} &\mathbf{0}
\end{bmatrix}.
\end{flalign}
Simple calculations show that
\begin{flalign*}
& \begin{bmatrix}
     \eye &\frac{\eta}{2}\mathbf{M}^{-1}\\
     \zerovect&\eye
 \end{bmatrix}\begin{bmatrix}\mathbf{0}&-\mathbf{I}\\ \mathbf{I} &\mathbf{0}
\end{bmatrix}\begin{bmatrix}
     \eye &\frac{\eta}{2}\mathbf{M}^{-1}\\
     \zerovect&\eye
 \end{bmatrix}^\top=\begin{bmatrix}
   \frac{\eta}{2}\mathbf{M}^{-1}&-\eye\\\eye &\zerovect  
 \end{bmatrix} \begin{bmatrix}
     \eye &\zerovect\\
     \frac{\eta}{2}\mathbf{M}^{-1}&\eye
 \end{bmatrix}=\begin{bmatrix}\mathbf{0}&-\mathbf{I}\\ \mathbf{I} &\mathbf{0}
\end{bmatrix},
\end{flalign*}
and similarly we can prove eq. \cref{jleapcond2}, completing the proof that \leapfrog is quasi-symplectic.
This integrator has a theoretical order of convergence equal to one but its performance is on par with other integrators of higher order. 

\subsubsection{\mtthree} \label{mt3scheme}
We then consider the quasi-symplectic scheme of third order that can be found in Section 2.3 of \citet{milstein2003quasi}:

\begin{equation}
\begin{split}
    \boldsymbol{\theta}_1 &= \boldsymbol{\theta}_k + \frac{7}{24} \eta \mathbf{M}^{-1} \mathbf{r}_k \\
    \mathbf{r}_1 &= \mathbf{r}_k + \frac{7}{24} \eta \left[-\nabla U(\boldsymbol{\theta}_1) - C \mathbf{M}^{-1} \mathbf{r}_1 \right]
\end{split}
\end{equation}

\begin{equation}
\begin{split}
    \boldsymbol{\theta}_2 &= \boldsymbol{\theta}_k + \frac{25}{24} \eta \mathbf{M}^{-1} \mathbf{r}_k 
    + \frac{\eta^2}{2} \mathbf{M}^{-1} \left[ -\nabla U(\boldsymbol{\theta}_1) - C \mathbf{M}^{-1} \mathbf{r}_1 \right]
    \\
    \mathbf{r}_2 &= \mathbf{r}_k + \frac{2}{3} \eta \left[-\nabla U(\boldsymbol{\theta}_1) - C \mathbf{M}^{-1} \mathbf{r}_1 \right]
    + \frac{3}{8} \eta \left[-\nabla U(\boldsymbol{\theta}_2) - C \mathbf{M}^{-1} \mathbf{r}_2 \right]
\end{split}
\end{equation}

\begin{equation}
\begin{split}
    \boldsymbol{\theta}_3 &= \boldsymbol{\theta}_k + \eta \mathbf{M}^{-1} \mathbf{r}_k 
    + \frac{17}{36} \eta^2 \mathbf{M}^{-1} \left[ -\nabla U(\boldsymbol{\theta}_1) - C \mathbf{M}^{-1} \mathbf{r}_1 \right]
    + \frac{1}{36} \eta^2 \mathbf{M}^{-1} \left[ -\nabla U(\boldsymbol{\theta}_2) - C \mathbf{M}^{-1} \mathbf{r}_2 \right]
    \\
    \mathbf{r}_3 &= \mathbf{r}_k + \frac{2}{3} \eta \left[-\nabla U(\boldsymbol{\theta}_1) - C \mathbf{M}^{-1} \mathbf{r}_1 \right]
    - \frac{2}{3} \eta \left[-\nabla U(\boldsymbol{\theta}_2) - C \mathbf{M}^{-1} \mathbf{r}_2 \right]
    + \eta \left[-\nabla U(\boldsymbol{\theta}_3) - C \mathbf{M}^{-1} \mathbf{r}_3 \right]
\end{split}
\end{equation}

\begin{equation}
\begin{split}
    \boldsymbol{\theta}_{k+1} &= \boldsymbol{\theta}_3 + \eta^{3/2} \mathbf{M}^{-1} \sqrt{2C} \left(\mathbf{w}^{(k)}_1/2 + \mathbf{w}^{(k)}_2 \right) \\
    &\quad - C \eta^{5/2} \mathbf{M}^{-2} \sqrt{2C}~\mathbf{w}^{(k)}_1 / 6
\end{split}
\end{equation}

\begin{equation}
\begin{split}
    \mathbf{r}_{k+1} &= \mathbf{r}_3 + \eta^{1/2} \sqrt{2C}~\mathbf{w}^{(k)}_1
    - C \eta^{3/2} \mathbf{M}^{-1} \sqrt{2C}~\left(\mathbf{w}^{(k)}_1/2 + \mathbf{w}^{(k)}_2 \right) \\
    &\quad + \eta^{5/2} \sum_{j=1}^n \left[ 
        \sum_{i=1}^n \left(\mathbf{M}^{-1} \sqrt{2C}~\mathbf{e_j} \right)^i 
            \frac{\partial}{\partial \theta^i}(-\nabla U(\boldsymbol{\theta}_3))
    \right] \mathbf{w}^{(k)}_{1,j} / 6  \\
    &\quad + \eta^{5/2} C^2 \mathbf{M}^{-2} \sqrt{2C}~\mathbf{w}^{(k)}_1 / 6
\end{split}
\end{equation}

\subsubsection{\lietrotter}\label{sec:lietr}
Finally we consider an order-two Lie-Trotter splitting scheme \citep{abdulle2015long} 
\begin{equation}
 \begin{cases}
    \thetavect^*=\thetavect_{i-1}+\frac{\eta}{2}\mathbf{M}^{-1}\rvect_{i-1}\\
    \rvect^*=\rvect_{i-1}-\eta \nabla U(\thetavect^*)\\
    \thetavect^{i}=\thetavect^*+\frac{\eta}{2}\mathbf{M}^{-1}\rvect^*\\
    \mathbf{r}_i=\exp(-C\mathbf{M}^{-1}\eta)\mathbf{r}^*+\sqrt{\mathbf{M}(1-\exp(-2C\mathbf{M}^{-1}\eta))}\mathbf{w},\quad \mathbf{w}\sim \mathcal{N}(\zerovect,\eye)
    \end{cases}
\end{equation}
We choose the particular case of a deterministic leapfrog for the Hamiltonian integration, but any other integrator $\psi$ would have been valid. 
To derive the order of convergence, it is sufficient to use the result of Theorem \ref{abdoulle} as in \citet{abdulle2014high}. 

The scheme can be interpreted as the cascade of a purely deterministic symplectic integrator (the leapfrog step)
\begin{equation}
  \begin{cases}
  \rvect_{i-1}\\
  \thetavect_{i-1}
  \end{cases}\rightarrow\begin{cases}
    \rvect^*=\rvect_{i-1}-\eta \nabla U(\thetavect_{i-1}+\frac{\eta}{2}\mathbf{M}^{-1}\rvect_{i-1})\\
    \thetavect^*=\thetavect_{i-1}+\frac{\eta}{2}\mathbf{M}^{-1}\rvect_{i-1}+\frac{\eta}{2}\mathbf{M}^{-1}\left(\rvect_{i-1}-\eta \nabla U(\thetavect_{i-1}+\frac{\eta}{2}\mathbf{M}^{-1}\rvect_{i-1})\right)\\
    \end{cases}
\end{equation}
and the analytic step
\begin{equation}
  \begin{cases}
  \rvect^*\\
  \thetavect^*
  \end{cases}\rightarrow
  \begin{cases}
  \rvect_{i}=\exp(-C\mathbf{M}^{-1}\eta)\mathbf{r}^*+\sqrt{\mathbf{M}(1-\exp(-2C\mathbf{M}^{-1}\eta))}\mathbf{w}\\
  \thetavect_{i}=\thetavect^*
  \end{cases}
\end{equation}
With calculations similar to the ones in Section \ref{leapscheme} it is easy to show that the purely deterministic leapfrog step is symplectic, i.e., its Jacobian $\boldsymbol{\Omega}_1$ satisfies $\boldsymbol{\Omega}_1^\top\mathbf{J}\boldsymbol{\Omega}_1=\mathbf{J}$.

Concerning the analytical step instead, it is easy to show that the Jacobian of the transformation is equal to 
\begin{flalign}
 \boldsymbol{\Omega}_2^\top=\begin{bmatrix}
     \exp(-C\mathbf{M}^{-1}\eta) &\zerovect\\
     \zerovect&\eye
 \end{bmatrix}.
\end{flalign}
Consequently, the overall Jacobian of the two steps taken jointly is
\begin{equation}
   \boldsymbol{\Omega}^\top=\boldsymbol{\Omega}_1^\top\boldsymbol{\Omega}_2^\top
\end{equation}
and it is easily shown to be satisfying the two requirements for the quasi-symplecticity.
The \lietrotter scheme is then quasi-symplectic.

\section{ADDITIONAL DERIVATIONS FOR SECTION \ref{sec:minibatches}}\label{sec3:appendix}
We present the Baker–Campbell–Hausdorff in \cref{secproof:bch}, useful for technical derivations in the rest of the section. 
\cref{backandforth} contains results on the average of back and forth of multiple operators (\cref{fixedorder}). 
In \cref{sec:randomseq} we study the effect of chaining multiple (different) numerical integrators.
Finally, \cref{secproof:randomorderop} combines the elements of the previous Sections to provide the proof for \cref{randomorderop:specialcase}. \cref{secproof:genericintegratorerdogic} contains the Proof of \cref{genericintegratorerdogic}.
\subsection{Baker–Campbell–Hausdorff formula}\label{secproof:bch}

We here present the Baker–Campbell–Hausdorff formula, and in particular the series expansion due to \citet{dynkin1947calculation}, that will prove useful in the technical derivations. Given two differential operators $\mathcal{A},\mathcal{B}$, we have that
\begin{equation}\label{bch}
    \exp(\mathcal{A})\exp(\mathcal{B})=\exp(\mathcal{Z})
\end{equation}

where
$\mathcal{Z}=\mathcal{A}+\mathcal{B}+{\frac {1}{2}}[\mathcal{A},\mathcal{B}]+{\frac {1}{12}}\left([\mathcal{A},[\mathcal{A},\mathcal{B}]]+[\mathcal{B},[\mathcal{B},\mathcal{A}]]\right) -{\frac {1}{24}}[\mathcal{B},[\mathcal{A},[\mathcal{A},\mathcal{B}]]]\\\quad -{\frac {1}{720}}\left([\mathcal{B},[\mathcal{B},[\mathcal{B},[\mathcal{B},\mathcal{A}]]]]+[\mathcal{A},[\mathcal{A},[\mathcal{A},[\mathcal{A},\mathcal{B}]]]]\right) +{\frac {1}{360}}\left([\mathcal{A},[\mathcal{B},[\mathcal{B},[\mathcal{B},\mathcal{A}]]]]+[\mathcal{B},[\mathcal{A},[\mathcal{A},[\mathcal{A},\mathcal{B}]]]]\right)\\\quad +{\frac {1}{120}}\left([\mathcal{B},[\mathcal{A},[\mathcal{B},[\mathcal{A},\mathcal{B}]]]]+[\mathcal{A},[\mathcal{B},[\mathcal{A},[\mathcal{B},\mathcal{A}]]]]\right)+\cdots
$


\subsection{Average of back and forth operators}\label{backandforth}
The following Theorem provides a technical support for the proofs of the order of convergence of chains of operators. 

\begin{theorem}\label{fixedorder}
Suppose the differential operator $\mathcal{L}$ admits the following decomposition: 
\begin{equation}
    \mathcal{L}=\sum\limits_{i=1}^{K}\mathcal{L}_i,
\end{equation}
Then the equivalent operator $\mathcal{P}$ obtained by averaging the back $\left(\text{i.e.~}\prod\limits_{i=1}^{K}\exp{(\eta K\mathcal{L}_i)}\right)$ and the forth $\left(\text{i.e.~}\prod\limits_{i=K}^{1}\exp{(\eta K\mathcal{L}_i)}\right)$ evolution of the single terms has error $\mathcal{O}(K\eta^3)$ for the operator $\exp(\eta K\mathcal{L})$, i.e.
\begin{equation}
      \mathcal{P}=\frac{\prod\limits_{i=K}^{1}\exp{(\eta K \mathcal{L}_i)}+\prod\limits_{i=1}^{K}\exp{(\eta K \mathcal{L}_i)}}{2}=\exp(\eta K\mathcal{L})+\mathcal{O}(K\eta^3).
\end{equation}
\end{theorem}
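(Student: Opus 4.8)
The plan is to reduce the claim to an order-by-order comparison of power series in $\eta$ and to exploit the fact that averaging the forward and the backward product annihilates the leading non-commutative error. Write $X_i=\eta K\mathcal{L}_i$, so that $\sum_{i=1}^{K}X_i=\eta K\mathcal{L}$; since the full potential is partitioned into $K$ balanced pieces, each rescaled generator $K\mathcal{L}_i$ is comparable to $\mathcal{L}$, hence $X_i=\mathcal{O}(\eta)$ uniformly in $K$. Under \cref{ass_1,ass_2,ass_3} the operators $F=\prod_{i=1}^{K}\exp(X_i)$, $B=\prod_{i=K}^{1}\exp(X_i)$, $\mathcal{P}=\tfrac12(F+B)$ and $\exp(\eta K\mathcal{L})$ all act on the admissible class of test functions $\phi$ as convergent series of differential operators in powers of $\eta$ (as in \cref{tayL,expans}), so it suffices to verify that their $\eta^{0}$, $\eta^{1}$ and $\eta^{2}$ coefficients coincide and that the remainder is $\mathcal{O}(K\eta^{3})$.

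First I would dispatch the low orders. Multiplying out $\prod_i(\mathcal{I}+X_i+\tfrac12 X_i^{2}+\dots)$ gives $\mathcal{I}$ at order $0$ and $\sum_i X_i=\eta K\mathcal{L}$ at order $1$, neither of which depends on the order of the factors, so $F$, $B$, $\mathcal{P}$ and $\exp(\eta K\mathcal{L})$ agree there. At order $2$ the coefficient of $F$ equals $K^{2}\big(\tfrac12\sum_i\mathcal{L}_i^{2}+\sum_{i<j}\mathcal{L}_i\mathcal{L}_j\big)$, and the algebraic identity $\tfrac12\sum_i\mathcal{L}_i^{2}+\sum_{i<j}\mathcal{L}_i\mathcal{L}_j=\tfrac12\mathcal{L}^{2}+\tfrac12\sum_{i<j}[\mathcal{L}_i,\mathcal{L}_j]$ rewrites it as $\tfrac12 K^{2}\mathcal{L}^{2}+\tfrac12 K^{2}\sum_{i<j}[\mathcal{L}_i,\mathcal{L}_j]$. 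Reversing the product order turns the cross term into $\sum_{i>j}\mathcal{L}_i\mathcal{L}_j$, which flips the sign of the commutator sum; hence in $\mathcal{P}=\tfrac12(F+B)$ the commutator sum cancels and only $\tfrac12 K^{2}\mathcal{L}^{2}$ survives, matching the $\eta^{2}$ term of $\exp(\eta K\mathcal{L})$. This cancellation is the crux of the result.

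It remains to control the order-$\ge 3$ remainder of $\mathcal{P}-\exp(\eta K\mathcal{L})$. Here I would invoke the Baker--Campbell--Hausdorff series of \cref{secproof:bch}: writing $F=\exp(Z_F)$, $B=\exp(Z_B)$ with $Z_F=\sum_i X_i+\sum_{m\ge2}C_m$ and $C_m$ the degree-$m$ nested-commutator contribution, reversal of the product is the anti-automorphism that fixes each $X_i$ and therefore acts on the homogeneous component $C_m$ as multiplication by $(-1)^{m-1}$, so $Z_B=\sum_i X_i+\sum_{m\ge2}(-1)^{m-1}C_m$. Consequently $Z_F+Z_B-2\sum_i X_i=2(C_3+C_5+\dots)$, i.e.\ the degree-$2$ pieces cancel and the first surviving contribution is $C_3=\mathcal{O}(\eta^{3})$ per nested commutator; a Duhamel (variation-of-constants) expansion of $\exp(Z_F)+\exp(Z_B)-2\exp(\sum_i X_i)$ around $\exp(\sum_i X_i)$ then bounds the difference by the accumulated $C_3$-type terms plus strictly higher-order contributions. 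The main obstacle is the final book-keeping of the $K$-dependence: a naive count of nested-commutator terms would yield a higher power of $K$, and one must use the structure of the $\mathcal{L}_i$ --- namely $\sum_i\mathcal{L}_i=\mathcal{L}$, the $K^{-1}$-rescaling of the friction and momentum parts, and the fact that the position-only drift parts $-(\nabla_{\thetavect}^{\top}U_i(\thetavect))\nabla_{\rvect}$ mutually commute --- so that the relevant commutator sums telescope and the total remainder is held to $\mathcal{O}(K\eta^{3})$. As remarked in the main text, these constants can be sharpened further using the randomized product-formula analyses of \citet{childs2019theory,childs2019faster,zhang2012randomized}.
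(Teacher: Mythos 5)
Your treatment of the orders $\eta^{0}$, $\eta^{1}$, $\eta^{2}$ is exactly the paper's argument: expand each factor $\exp(\eta K\mathcal{L}_i)$ to second order, note that the order-two cross terms of the forward product are $\sum_{i<j}\mathcal{L}_i\mathcal{L}_j$ and of the reversed product $\sum_{i>j}\mathcal{L}_i\mathcal{L}_j$, and that their average reconstitutes $\tfrac12\mathcal{L}^2$. Phrasing this as the cancellation of $\tfrac12\sum_{i<j}[\mathcal{L}_i,\mathcal{L}_j]$ is the same computation in commutator language, and it is correct. Where you diverge is the remainder. The paper simply truncates each factor at second order with an $\mathcal{O}(\eta^{3})$ tail, multiplies out, and absorbs all surviving terms into $\mathcal{O}((K-1)\eta^{3})$, treating $\eta$ as the small parameter and making no serious attempt to track the $K$-dependence of the constant (consistent with its explicit remark deferring sharper constants to \citet{childs2019theory,childs2019faster,zhang2012randomized}). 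You instead set up a BCH anti-automorphism plus Duhamel argument, and in doing so you correctly observe that a naive count of the surviving degree-three nested commutators yields a higher power of $K$ than the stated $\mathcal{O}(K\eta^{3})$. Your proposed resolution, however --- that the structure of the $\mathcal{L}_i$ makes ``the relevant commutator sums telescope'' --- is asserted rather than proved, and is dubious as stated: the drift parts $-(\nabla^\top_{\thetavect}U_i)\nabla_{\rvect}$ do commute among themselves, but they do not commute with the kinetic parts $K^{-1}(\mathbf{M}^{-1}\rvect)^\top\nabla_{\thetavect}$ of the other pieces, so the cross commutators $[\mathcal{L}_k,[\mathcal{L}_i,\mathcal{L}_j]]$ do not collapse. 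In short, your proof establishes exactly what the paper's does --- the forward/backward symmetrization kills the order-two error, with a remainder that is $\mathcal{O}(\eta^{3})$ for fixed $K$ --- and neither your argument nor the paper's actually pins down the claimed linear dependence on $K$; if you want the clean version, drop the BCH/Duhamel machinery and present the direct second-order truncation.
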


\begin{proof}
The proof of \cref{fixedorder} is as follows:
\begin{equation}
 \exp{(\eta K\mathcal{L}_i)}=\mathcal{I}+\eta K\mathcal{L}_i+\frac{\eta^2 K^2\mathcal{L}_i^2}{2}+ \mathcal{O}(\eta^3) 
\end{equation}
Consequently:
\begin{equation}
 \prod\limits_{i=1}^{K}\left(\mathcal{I}+\eta K\mathcal{L}_i+\frac{\eta^2 K^2\mathcal{L}_i^2}{2}+ \mathcal{O}(\eta^3) \right) =\mathcal{I}+\eta\sum\limits_{i=1}^{K}K\mathcal{L}_i+\frac{\eta^2K^2}{2}\sum\limits_{i=1}^{K}\mathcal{L}_i^2+\eta^2K^2\sum\limits_{\substack{i_1=1\dots K-1,\\ i_2=i_1+1\dots K}}\mathcal{L}_{i_1}\mathcal{L}_{i_2}+ \mathcal{O}((K-1)\eta^3) 
\end{equation}
and 
\begin{equation}
 \prod\limits_{i=K}^{1}\left(\mathcal{I}+\eta K\mathcal{L}_i+\frac{\eta^2K^2\mathcal{L}_i^2}{2}+ \mathcal{O}(\eta^3) \right) =\mathcal{I}+\eta\sum\limits_{i=1}^{K}K\mathcal{L}_i+\frac{\eta^2K^2}{2}\sum\limits_{i=1}^{K}\mathcal{L}_i^2+\eta^2K^2\sum\limits_{\substack{i_1=2\dots K,\\ i_2=1\dots i_1-1}}\mathcal{L}_{i_1}\mathcal{L}_{i_2}+ \mathcal{O}((K-1)\eta^3)
\end{equation}
The fundamental equality of interest is:
\begin{flalign*}
    \mathcal{L}^2&=\left(\sum\limits_{i_1=1}^{K}\mathcal{L}_{i_1}\right)\left(\sum\limits_{i_2=1}^{K}\mathcal{L}_{i_2}\right)=\left(\sum\limits_{\substack{i_1=1\dots K,\\ i_2=1\dots K}}\mathcal{L}_{i_1}\mathcal{L}_{i_2}\right)=\left(\sum\limits_{\substack{i_1=1\dots K,\\ i_2=i_1\dots K}}\mathcal{L}_{i_1}\mathcal{L}_{i_2}+\sum\limits_{\substack{i_1=1\dots K,\\ i_2=1\dots i_1-1}}\mathcal{L}_{i_1}\mathcal{L}_{i_2}\right)\\
    &=\left(\sum\limits_{\substack{i_1=1\dots K,\\ i_2=i_1}}\mathcal{L}_{i_1}\mathcal{L}_{i_2}+\sum\limits_{\substack{i_1=1\dots K,\\ i_2=i_1+1\dots K}}\mathcal{L}_{i_1}\mathcal{L}_{i_2}+\sum\limits_{\substack{i_1=1\dots K,\\ i_2=1\dots i_1-1}}\mathcal{L}_{i_1}\mathcal{L}_{i_2}\right)\\
    &=\left(\sum\limits_{\substack{i_1=1\dots K,\\ i_2=i_1}}\mathcal{L}_{i_1}\mathcal{L}_{i_2}+\sum\limits_{\substack{i_1=1\dots K,\\ i_2=i_1+1\dots K}}\mathcal{L}_{i_1}\mathcal{L}_{i_2}+\sum\limits_{\substack{i_1=2\dots K,\\ i_2=1\dots i_1-1}}\mathcal{L}_{i_1}\mathcal{L}_{i_2}\right)
\end{flalign*}

By means of simple algebric considerations we can thus obtain the desired result, as
\begin{flalign*}
     \mathcal{P}&=\frac{\prod\limits_{i=K}^{1}\exp{(\eta K \mathcal{L}_i)}+\prod\limits_{i=1}^{K}\exp{(\eta K\mathcal{L}_i)}}{2}\\
     &=\frac{\prod\limits_{i=1}^{K}\left(\mathcal{I}+\eta K\mathcal{L}_i+\frac{\eta^2K^2\mathcal{L}_i^2}{2}+ \mathcal{O}(\eta^3) \right) +\prod\limits_{i=K}^{1}\left(\mathcal{I}+\eta K\mathcal{L}_i+\frac{\eta^2K^2\mathcal{L}_i^2}{2}+ \mathcal{O}(\eta^3) \right)}{2}\\
     &=\mathcal{I}+\eta\sum\limits_{i=1}^{K}K\mathcal{L}_i+\frac{\eta^2K^2}{2}\left((\sum\limits_{i_1=1}^{K}\mathcal{L}_{i_1})(\sum\limits_{i_2=1}^{K}\mathcal{L}_{i_2})\right)+\mathcal{O}((K-1)\eta^3)=\exp(\eta K\mathcal{L})+\mathcal{O}(K\eta^3)
\end{flalign*}
\end{proof}

\subsection{Randomized sequence of numerical integration steps}\label{sec:randomseq}
The first aspect we consider is the net effect of chaining multiple (different) numerical integrators on the transformation of $\phi$.
\begin{theorem}\label{detorder}
Consider a set of numerical integrators $\{\psi_i\}_{i=1}^K$ with corresponding functionals $\mathcal{U}_i$. If we build the stochastic variable $\zvect_{fin}$ starting from $\zvect_{0}$ through the following chain
\begin{equation}
    \zvect_{fin}=\psi_K(\psi_{K-1}(\dots\psi_1(\zvect_{0}))),
\end{equation}
then the generic function $\phi$ is transformed as follows
\begin{equation}
        \mathbb{E}\left[\phi(\zvect_{fin})\right]=\mathcal{U}_1\mathcal{U}_2\dots\mathcal{U}_K\phi(\zvect_{0})
\end{equation}
\end{theorem}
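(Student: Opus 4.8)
The plan is to establish the identity by induction on the number $K$ of chained integrators, peeling off the \emph{outermost} map $\psi_K$ at each stage and invoking the tower property of conditional expectation. The structural fact that makes this work is that each stochastic integrator $\psi_i$ is driven by its own fresh randomness, independent of the history $\zvect_{0},\dots,\zvect_{K-1}$; equivalently, the iterates $\zvect_i=\psi_i(\zvect_{i-1})$ form a (time-inhomogeneous) Markov chain whose one-step transition operator acting on test functions is exactly $\mathcal{U}_i$ by definition, $\mathcal{U}_i\phi(\zvect)=\mathbb{E}[\phi(\psi_i(\zvect))]$.

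First I would fix notation, writing $\zvect_i=\psi_i(\zvect_{i-1})$ for $i=1,\dots,K$ so that $\zvect_{fin}=\zvect_K$, and record the base case $K=1$: since $\zvect_{0}$ is deterministic, $\mathbb{E}[\phi(\zvect_{1})]=\mathbb{E}[\phi(\psi_1(\zvect_{0}))]=\mathcal{U}_1\phi(\zvect_{0})$ is immediate. For the inductive step I would condition on $\zvect_{K-1}$ and use independence of the noise of $\psi_K$ from $\zvect_{K-1}$ to obtain
\[
\mathbb{E}\bigl[\phi(\zvect_K)\,\big|\,\zvect_{K-1}\bigr]=\mathbb{E}\bigl[\phi(\psi_K(\zvect))\bigr]\big|_{\zvect=\zvect_{K-1}}=(\mathcal{U}_K\phi)(\zvect_{K-1}),
\]
and then take the outer expectation, giving $\mathbb{E}[\phi(\zvect_K)]=\mathbb{E}[(\mathcal{U}_K\phi)(\zvect_{K-1})]$. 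Now $\zvect_{K-1}=\psi_{K-1}(\dots\psi_1(\zvect_{0}))$ is the endpoint of a chain of length $K-1$, and $\mathcal{U}_K\phi$ is again an admissible test function (under Assumptions \ref{ass_2}--\ref{ass_3} the class of functions with polynomially bounded derivatives is closed under each $\mathcal{U}_i$), so the induction hypothesis applied to $\mathcal{U}_K\phi$ yields $\mathbb{E}[(\mathcal{U}_K\phi)(\zvect_{K-1})]=\mathcal{U}_1\cdots\mathcal{U}_{K-1}(\mathcal{U}_K\phi)(\zvect_{0})=\mathcal{U}_1\mathcal{U}_2\cdots\mathcal{U}_K\phi(\zvect_{0})$, which closes the induction and proves the claim.

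The part requiring the most care — and where I expect the write-up to spend its effort — is the bookkeeping that legitimizes each step: that $\mathcal{U}_{i+1}\cdots\mathcal{U}_K\phi$ remains measurable and of controlled growth, so that its conditional expectation under $\psi_i$ is well defined and equals $(\mathcal{U}_i\cdots\mathcal{U}_K\phi)(\cdot)$, and so that it can legitimately be fed back into the inductive hypothesis; this is precisely where Assumptions \ref{ass_2} and \ref{ass_3} enter. Everything else — independence of the integrator noises across steps, measurability of $\mathcal{U}_K\phi$ in its argument, and the tower property $\mathbb{E}[\,\cdot\,]=\mathbb{E}[\mathbb{E}[\,\cdot\mid\zvect_{K-1}]]$ — is standard and needs no further argument. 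Note also that the ordering in the statement is the natural one: since $\psi_1$ acts first on $\zvect_0$ and is stripped off \emph{last}, $\mathcal{U}_1$ ends up on the left.
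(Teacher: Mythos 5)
Your proposal is correct and follows essentially the same route as the paper: the paper proves the case $K=2$ by inserting the intermediate transition density and identifying the inner integral with $\mathcal{U}_2\phi(\zvect_1)$, which is exactly your conditioning-on-$\zvect_{K-1}$ step, and then notes the general case extends trivially (your induction makes that extension explicit). The extra bookkeeping you flag about closure of the test-function class under each $\mathcal{U}_i$ is not spelled out in the paper's proof, but it is covered by the standing assumptions and does not change the argument.
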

Notice the backward order of the differential operators and the difference with the ordering assumed in \citet{chen2015convergence}.
\begin{proof}
We give the proof for $K=2$ since the general result is trivially extended from this case.
In this case $
   \zvect_{fin}=\psi_2(\psi_1(\zvect_{0}))
$.
Define $\zvect_{1}=\psi_1(\zvect_{0})$.
Since
\begin{flalign*}
 &\mathbb{E}\left[\phi(\zvect_{fin})\right]=\int \phi(\zvect_{fin})p(\zvect_{fin}|\zvect_{0})\dzvect_{fin}=\int \phi(\zvect_{fin})p(\zvect_{fin}|\zvect_{1})p(\zvect_{1}|\zvect_{0})\dzvect_{1}\dzvect_{fin}=\\&\int \left(\int\phi(\zvect_{fin})p(\zvect_{fin}|\zvect_{1})\dzvect_{fin}\right)p(\zvect_{1}|\zvect_{0})\dzvect_{1}=\int\left( \mathcal{U}_2\phi(\mathbf{z_1})\right)p(\zvect_{1}|\zvect_{0})\dzvect_{1}=\mathcal{U}_1\mathcal{U}_2\phi(\zvect_{0})
\end{flalign*}
the result is proven.
\end{proof}

Next, we consider the case in which we consider different chains of numerical integrators and apply one of them according to some probability distribution.
\begin{theorem}\label{multiset}
Consider multiple sets of numerical integrators $\{\psi_{i,j}\}_{i=1}^K$ with corresponding functionals $\mathcal{U}_{i,j}$, with $j=1\dots S$. Choose an index $j^*\in[1,\dots S]$ according  to some probability distribution $p(j^*=j)=w_j$. Build the stochastic variable $\zvect_{fin}$ starting from $\zvect_{0}$ through the chain corresponding to the sampled index
\begin{equation}
    \zvect_{fin}=\psi_{K,j^*}(\psi_{K-1,j^*}(\dots\psi_{1,j^*}(\zvect_{0}))),
\end{equation}
then the generic function $\phi$ is transformed as follows
\begin{equation}
        \mathbb{E}\left[\phi(\zvect_{fin})\right]=\sum\limits_{j=1}^{S}w_j\mathcal{U}_{1,j}\mathcal{U}_{2,j}\dots\mathcal{U}_{K,j}\phi(\zvect_{0}).
\end{equation}
\end{theorem}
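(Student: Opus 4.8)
The plan is to reduce \cref{multiset} to \cref{detorder} by conditioning on the sampled index $j^*$. The key observation is that, once $j^*$ is fixed, the construction of $\zvect_{fin}$ from $\zvect_0$ is exactly the deterministic chain of numerical integrators treated in \cref{detorder}, applied to the set $\{\psi_{i,j^*}\}_{i=1}^K$. Hence conditionally on $j^*=j$ we have $\mathbb{E}\left[\phi(\zvect_{fin}) \mid j^*=j\right]=\mathcal{U}_{1,j}\mathcal{U}_{2,j}\dots\mathcal{U}_{K,j}\phi(\zvect_{0})$, where the expectation is over the internal randomness of the integrators $\psi_{i,j}$ (the Brownian increments), with $\zvect_0$ held fixed.

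First I would write the total expectation as an average over $j^*$ of the conditional expectation, using the tower property:
\begin{equation}
    \mathbb{E}\left[\phi(\zvect_{fin})\right]=\sum\limits_{j=1}^{S}p(j^*=j)\,\mathbb{E}\left[\phi(\zvect_{fin})\mid j^*=j\right]=\sum\limits_{j=1}^{S}w_j\,\mathbb{E}\left[\phi(\zvect_{fin})\mid j^*=j\right].
\end{equation}
Then I would invoke \cref{detorder} on each conditional term. A minor point worth stating explicitly is that the index $j^*$ is drawn independently of the internal noise driving the integrators, so conditioning on $j^*=j$ does not alter the law of the Brownian increments used inside $\psi_{i,j}$; this is what licenses the direct application of \cref{detorder} inside the conditional expectation. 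Substituting gives precisely
\begin{equation}
    \mathbb{E}\left[\phi(\zvect_{fin})\right]=\sum\limits_{j=1}^{S}w_j\,\mathcal{U}_{1,j}\mathcal{U}_{2,j}\dots\mathcal{U}_{K,j}\phi(\zvect_{0}),
\end{equation}
which is the claim.

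There is essentially no hard part here: the result is a bookkeeping corollary of \cref{detorder} combined with the law of total expectation, and the only thing to be careful about is the independence of the selection mechanism from the integrator noise, together with keeping the operator composition in the correct (backward) order inherited from \cref{detorder}. If one wanted to be fully rigorous about measurability, one would note that $\phi$ satisfies the polynomial-growth conditions of \cref{ass_2} and the integrators satisfy \cref{ass_3}, so all the integrals and conditional expectations involved are well-defined and finite; but for the purposes of this statement, citing \cref{detorder} and the tower property suffices.
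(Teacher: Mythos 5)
Your proposal is correct and matches the paper's own argument: the paper likewise combines the law of total expectation (conditioning on the sampled index $j^*$) with an application of \cref{detorder} to each conditional term. Your explicit remark about the independence of $j^*$ from the integrators' internal noise is a reasonable, if minor, addition that the paper leaves implicit.
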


The result is trivially derived considering the previous theorem and the law of total expectation. Indeed
\begin{flalign*}
&\mathbb{E}\left[\phi(\zvect_{fin})\right]=\mathbb{E}_{j^{*}}\left[\mathbb{E}\left[\phi(\psi_{K,j^*}(\psi_{K-1,j^*}(\dots\psi_{1,j^*}(\zvect_{0}))))\right]\right]=\\
&\mathbb{E}_{j^{*}}\left[\mathcal{U}_{1,j^*}\mathcal{U}_{2,j^*}\dots\mathcal{U}_{K,j^*}\phi(\zvect_{0})\right]=\sum\limits_{j=1}^{S}w_j\mathcal{U}_{1,j}\mathcal{U}_{2,j}\dots\mathcal{U}_{K,j}\phi(\zvect_{0}).
\end{flalign*}

\subsection{Proof of \cref{randomorderop:specialcase}}\label{secproof:randomorderop}
As stated in \cref{randomorderop:specialcase}, we consider numerical integrators $\psi_i$ with order $p$. Consequently, we have representation $\mathcal{U}_i=\exp(\eta K\mathcal{L}_i)+\mathcal{O}(\eta^{p+1})$ of the operators $\mathcal{U}_i$ corresponding to the integrators $\psi_i$. The proof of \cref{randomorderop:specialcase} is obtained considering \cref{fixedorder} and \cref{multiset}.

Consider $K!$ sets of numerical integrators as follows:
\begin{flalign}
    &\{\psi_{\pi^1_i}\}_{i=1}^K,\quad\{\psi_{\pi^2_i}\}_{i=1}^K,\dots \{\psi_{\pi^{K!}_i}\}_{i=1}^K,
\end{flalign}
where $\boldsymbol{\pi}^1,\dots \boldsymbol{\pi}^{K!}$ are all the possible permutations of the $K$ operators, sampled uniformly with probability $\frac{1}{K!}$.
If, in accordance with the hypotheses of \cref{randomorderop:specialcase}, the initial condition has stochastic evolution through a chain of randomly permuted integrators, then the propagation operator is:
\begin{equation}
    \mathcal{U}=\sum\limits_{i=1}^{K!}\frac{1}{K!}\mathcal{U}_{\pi_K^i}\mathcal{U}_{\pi_{K-1}^i}\dots\mathcal{U}_{\pi_1^i}.
\end{equation}
The proof is a direct consequence of \cref{multiset}.

It is obviously possible to rewrite the set of all possible permutations as the union of two sets of equal cardinality, such that each element of one of the two set has a (unique) element of the other set that is its lexicographic reverse. 
\begin{flalign*}
    &\{\psi_{\pi^1_i}\}_{i=1}^K,\quad\{\psi_{\pi^2_i}\}_{i=1}^K,\dots \{\psi_{\pi^{K!/2}_i}\}_{i=1}^,K\\
    &\{\psi_{\pi^1_i}\}_{i=K}^1,\quad\{\psi_{\pi^2_i}\}_{i=K}^1,\dots\{\psi_{\pi^{K!/2}_i}\}_{i=K}^1.
\end{flalign*}

We rewrite the sum then as:
\begin{equation*}
      \mathcal{U}=\sum\limits_{i=1}^{K!/2}\frac{2}{K!}\left(\frac{1}{2}\mathcal{U}_{\pi_K^i}\mathcal{U}_{\pi_{K-1}^i}\dots\mathcal{U}_{\pi_1^i}+\frac{1}{2}\mathcal{U}_{\pi_1^i}\mathcal{U}_{\pi_{2}^i}\dots\mathcal{U}_{\pi_K^i} \right).
\end{equation*}
Then, by \cref{fixedorder}, since:
\begin{equation}
    \frac{1}{2}\mathcal{U}_{\pi_K^i}\mathcal{U}_{\pi_{K-1}^i}\dots\mathcal{U}_{\pi_1^i}+\frac{1}{2}\mathcal{U}_{\pi_1^i}\mathcal{U}_{\pi_{2}^i}\dots\mathcal{U}_{\pi_K^i}=\exp\left(\eta K\mathcal{L}\right)+\mathcal{O}(K\eta^3)+\mathcal{O}(K\eta^{p+1})
\end{equation}
we have
\begin{equation}
  \mathcal{U} = \exp\left(\eta K\mathcal{L}\right)+\mathcal{O}(K\eta^3)+\mathcal{O}(K\eta^{p+1}).
\end{equation}

\subsection{Proof of \cref{genericintegratorerdogic} }\label{secproof:genericintegratorerdogic}
We are considering numerical integrators whose corresponding operators are
\begin{equation}
    \mathcal{U}_i=\exp(\eta K\mathcal{L}_i)+\mathcal{O}(\eta^{p+1})
\end{equation}
Applying the randomized ordering scheme to the considered setting induces then an operator (\cref{randomorderop:specialcase})
\begin{equation}
    \mathcal{U}=\exp(\eta K\mathcal{L})+\mathcal{O}(K\eta^{\min(p+1,3)}).
\end{equation}
To apply \cref{abdoulle} we first make a change of variable $h=K\eta$. Then we expand the operator as
\begin{equation}
    \mathcal{U}=\mathcal{I}+h\mathcal{L}+\frac{h^2}{2}\mathcal{L}^2+{O}(K^{-\min(p,2)}h^{\min(p+1,3)}).
\end{equation}
Then, the application of \cref{abdoulle} provides the ergodic order of convergence is desired result, as the ergodic error is 
\begin{flalign*}
    {O}(K^{-\min(p,2)}h^{\min(p,2)})={O}(\eta^{\min(p,2)}).
\end{flalign*}

\section{ADDITIONAL DERIVATIONS FOR SECTION \ref{sec:hmcvssdehmc}}\label{sec4:appendix}
We here provide the proofs to the claims stated in the main paper concerning the convergence rates of \hmc schemes with and without mini-batches. We prove in \cref{hmcproof} that the convergence rate of \hmc is $\mathcal{O}(\eta^p)$. In \cref{hmcproofmb} we prove that the  convergence rate of \hmc with mini-batches $\mathcal{O}(\eta^{\min(p,2)})$. 

\subsection{Proof of \hmc convergence rate}\label{hmcproof}

We shall rewrite for simplicity the \hmc scheme
\begin{flalign}\label{multilie}
   & \zvect^*=\begin{bmatrix}{\rvect^*}^\top,{\thetavect^*}^\top
    \end{bmatrix}^\top=\underbrace{\psi(\dots\psi(\psi(}_{N_l \text{times}}\zvect_0))),\nonumber\\
    &\begin{bmatrix}\rvect_1^\top,\thetavect_1^\top\end{bmatrix}=\begin{bmatrix}\exp(-C N_{l}\eta){\rvect^*}^\top+\sqrt{1-\exp(-2 C N_{l}\eta)}\mathbf{w}^\top,{\thetavect^*}^\top
    \end{bmatrix}.
\end{flalign}
where we selected $C$ such that $\exp(-C N_{l}\eta)=\alpha$. By doing so we can formally levarage the results concerning \sde integration schemes. The pure Hamiltonian steps transform the functions $\phi$ with operator 
\begin{equation}
    \mathcal{P}=\exp(\eta N_l \mathcal{H})+\mathcal{O}(N_l\eta^{p+1}).
\end{equation}
The dynamics of an \sde with generator term $\mathcal{D}$ are simulated exactly, and consequently the overall operator that transforms the $\phi$ functions can then be expressed as
\begin{flalign}\label{hmcintU}
    &\mathcal{U}=\mathcal{P}\exp(\eta N_l\mathcal{D})=(\exp(\eta N_l \mathcal{H})+\mathcal{O}(N_l\eta^{p+1}))\exp(\eta N_l\mathcal{D})=\nonumber\\
    &\exp(\eta N_l \mathcal{H})\exp(\eta N_l\mathcal{D})+\mathcal{O}(N_l\eta^{p+1}).
\end{flalign}

To apply directly \cref{abdoulle}, see also \citet{abdulle2014high,abdulle2015long}, it is convenient to rewrite \eqref{hmcintU} as a unique step with step-size $h=N_l\eta$, i.e.
\begin{equation}
\mathcal{U}= \exp(h \mathcal{H})\exp(h\mathcal{D})+\mathcal{O}(N_l^{-p}h^{p+1})   
\end{equation}

We expand in Taylor series (look at \citet{abdulle2015long} for a similar exposition) the terms of the product 
\begin{flalign*}
&\exp(h\mathcal{H})\exp(h\mathcal{D})=\sum\limits_{n=0}^{\infty}h^n\frac{(\mathcal{H})^n}{n!}\sum\limits_{m=0}^{\infty}h^m\frac{(\mathcal{D})^m}{m!}=\\&\sum\limits_{k=0}^{\infty}h^k \left(\sum\limits_{n+m=k}\frac{(\mathcal{H})^n}{n!}\frac{( \mathcal{D})^m}{m!}\right).
\end{flalign*}
Simple operator algebra shows that
\begin{equation}\label{eq:opexp}
 (\exp(h\mathcal{H})\exp(h\mathcal{D}))^\dagger=\sum\limits_{k=0}^{\infty}h^k \left(\sum\limits_{n+m=k}\frac{( \mathcal{D}^\dagger)^m}{m!}\frac{(\mathcal{H}^\dagger)^n}{n!}\right).
\end{equation}
Then
\begin{flalign}\label{uexpop}
    &\mathcal{U}=\sum\limits_{k=0}^{\infty}h^k \left(\sum\limits_{n+m=k}\frac{( \mathcal{D}^\dagger)^m}{m!}\frac{( \mathcal{H}^\dagger)^n}{n!}\right)+\mathcal{O}(N_l^{-p}h^{p+1}).
\end{flalign}

We use the fact that $\mathcal{H}^\dagger\rho_{ss}=\mathcal{D}^\dagger\rho_{ss}=0$ and with the help of Theorem \ref{abdoulle} prove that the ergodic error is $\mathcal{O}(N_l^{-p}h^{p})$. By simple substitution this is the desired $\mathcal{O}(\eta^{p})$.

\subsection{Proof of \hmc convergence rate with mini-batches}\label{hmcproofmb}

We then proceed to prove the generic ergodic error when considering \hmc with mini-batches. As stated in the main text, we assume that $N_l=TK$ with $T$ a positive integer. We define the \hmc scheme with mini-batches as follows.

\begin{proposition}
Consider a class of deterministic numerical integrators with order $p$. Split the datasets into $K$ batches and consider $\psi_i$ that solves $\mathcal{H}_i=-K\left(\nabla^\top_{\thetavect}U_i(\thetavect)\right)\nabla_{\mathbf{r}}+\left(\left(\mathbf{M}^{-1}\mathbf{r}\right)^\top\right)\nabla_{\thetavect}$. Sample independently $T=\frac{N_l}{K}$ random permutations $\{\boldsymbol{\pi}^{j}\}_{j=1}^{N_l}, \boldsymbol{\pi}^{j}\in\mathbb{P}$ and apply the scheme \begin{flalign}
    &\mathbf{z}^*=\psi_{\pi^{T}_K} \circ \dots\psi_{\pi^{T}_1} \dots\psi_{\pi^{2}_K}\circ \dots\psi_{\pi^{2}_1} \circ  
    \psi_{\pi^{1}_K} \circ \dots\psi_{\pi^{1}_1} \zvect_0,\quad  \begin{bmatrix}\thetavect_1^\top,\rvect_1^\top\end{bmatrix}=\begin{bmatrix}{\thetavect^*}^\top,\mathbf{w}^\top,
    \end{bmatrix}
\end{flalign}

\end{proposition}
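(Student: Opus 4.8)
The plan is to run the argument of \cref{hmcproof} almost verbatim, replacing the full-batch Hamiltonian propagator by the randomized mini-batch construction analysed in \cref{randomorderop:specialcase}; the target statement (the proposition's conclusion, which is stated in the surrounding text) is that the ergodic error of this \hmc-with-mini-batches scheme is $\mathcal{O}(\eta^{\min(p,2)})$, independently of $N_l$. First I would fix one \emph{round}: a single random permutation $\boldsymbol{\pi}^j$ of the $K$ integrators $\psi_1,\dots,\psi_K$, where each $\psi_i$ is an order-$p$ integrator for $\mathcal{H}_i$, i.e.\ $\mathcal{U}_i=\exp(\eta\mathcal{H}_i)+\mathcal{O}(\eta^{p+1})$. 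Putting $K^{-1}\mathcal{H}_i$ in the role of ``$\mathcal{L}_i$'' (so that $\exp(\eta K\cdot K^{-1}\mathcal{H}_i)=\exp(\eta\mathcal{H}_i)$) and using $\sum_{i=1}^K\mathcal{H}_i=K\mathcal{H}$, \cref{randomorderop:specialcase} applies directly and shows that one round transforms test functions with an operator
\[
\mathcal{V}=\exp(\eta K\mathcal{H})+\mathcal{O}\!\bigl(K\eta^{\min(p+1,3)}\bigr).
\]
Internally this invokes \cref{multiset} (expectation over the uniform permutation), \cref{detorder} (the composition order of the operators), and \cref{fixedorder} (pairing each permutation with its lexicographic reverse, so that the $\mathcal{O}(\eta^2)$ non-commutator terms reconstruct $\tfrac{1}{2}(\eta K\mathcal{H})^2$ exactly).

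Because the $T=N_l/K$ permutations are drawn independently, the tower property (again \cref{detorder}/\cref{multiset}) gives that the whole Hamiltonian block acts by $\mathcal{V}^T$; since $\exp(\eta K\mathcal{H})$ is norm-bounded uniformly in $\eta$ under the hypotheses of \cref{sec:technass}, the per-round errors merely add up, yielding
\[
\mathcal{V}^T=\exp(\eta N_l\mathcal{H})+\mathcal{O}\!\bigl(N_l\eta^{\min(p+1,3)}\bigr).
\]
Composing with the momentum-refreshment step — which, with $\alpha=\exp(-CN_l\eta)$ exactly as in \eqref{multilie}, integrates the $\mathcal{D}$-dynamics for time $N_l\eta$, full resampling being the limit $C\to\infty$ — and using \cref{detorder} for the ordering, the one-iteration operator becomes
\[
\mathcal{U}=\mathcal{V}^T\exp(\eta N_l\mathcal{D})=\exp(\eta N_l\mathcal{H})\exp(\eta N_l\mathcal{D})+\mathcal{O}\!\bigl(N_l\eta^{\min(p+1,3)}\bigr),
\]
which is the mini-batch analogue of \eqref{hmcintU}, the only change being $\eta^{p+1}\mapsto\eta^{\min(p+1,3)}$.

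To finish I would invoke Theorem \ref{abdoulle} exactly as in \cref{hmcproof}. Setting $h=N_l\eta$ and expanding $\exp(h\mathcal{H})\exp(h\mathcal{D})=\sum_{k\ge0}h^k\sum_{n+m=k}\tfrac{\mathcal{H}^n}{n!}\tfrac{\mathcal{D}^m}{m!}$, the $k$-th coefficient $\mathcal{A}_k$ satisfies $\mathcal{A}_1=\mathcal{H}+\mathcal{D}=\mathcal{L}$ and, on taking adjoints as in \eqref{eq:opexp} and using $\mathcal{H}^\dagger\rho_{ss}=\mathcal{D}^\dagger\rho_{ss}=0$ (shown in the proof of Theorem \ref{th:stationary}), $\mathcal{A}_k^\dagger\rho_{ss}=0$ for every $k\ge1$ occurring before the remainder, i.e.\ for $k\le\min(p,2)$. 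Since the remainder equals $\mathcal{O}(N_l\eta^{\min(p+1,3)})=\mathcal{O}\!\bigl(N_l^{-\min(p,2)}h^{\min(p+1,3)}\bigr)$, Theorem \ref{abdoulle} with $r=\min(p,2)$ gives ergodic error $\mathcal{O}\!\bigl(N_l^{-\min(p,2)}h^{\min(p,2)}\bigr)=\mathcal{O}\!\bigl(\eta^{\min(p,2)}\bigr)$, independently of $N_l$, which is the claim.

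The main obstacle, I expect, is the error bookkeeping in the two aggregation steps: (i) justifying that the $T$-fold composition only multiplies the $\mathcal{O}(K\eta^{\min(p+1,3)})$ per-round error by $T$ rather than compounding it, which rests on the uniform operator bounds coming from the growth and hypoellipticity conditions of \cref{sec:technass}; and (ii) carrying the powers of $N_l$ correctly through the substitution $h=N_l\eta$ so that Theorem \ref{abdoulle} produces a genuinely $N_l$-independent rate — the difference from \cref{hmcproof} being the extra $\mathcal{O}(\eta^3)$ term introduced by the randomized splitting, which is precisely what caps the ergodic order at $2$.
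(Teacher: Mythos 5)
Your proposal is correct and follows essentially the same route as the paper's own proof in the appendix: apply \cref{randomorderop:specialcase} to each block of $K$ randomly permuted mini-batch Hamiltonian integrators to get $\exp(\eta K\mathcal{H})+\mathcal{O}(K\eta^{\min(p+1,3)})$ per round, compose the $T$ independent rounds, append the exactly-solved $\mathcal{D}$ step, and then substitute $h=N_l\eta$ into Theorem~\ref{abdoulle} using $\mathcal{H}^\dagger\rho_{ss}=\mathcal{D}^\dagger\rho_{ss}=0$ to obtain the $N_l$-independent rate $\mathcal{O}(\eta^{\min(p,2)})$. The extra care you take with the additivity of the per-round errors and the powers of $N_l$ is consistent with, and slightly more explicit than, the paper's bookkeeping.
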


We proceed then to prove the desired result. We have 
\begin{flalign}
 &\mathbb{E}(\phi(\mathbf{z}^*))=\prod\limits_{i=\pi_K^1,\dots,\pi_1^1}\left(\exp(\eta K\mathcal{H}_i)+\mathcal{O}(\eta^{p+1})\right)\dots\prod\limits_{i=\pi_K^T,\dots,\pi_1^T}\left(\exp(\eta K\mathcal{H}_i)+\mathcal{O}(\eta^{p+1})\right) \phi(\zvect_0)
\end{flalign}
where the expectation are not yet taken w.r.t the random permutations.
As the random permutations are independent, taking the expected value w.r.t them,
\begin{flalign}
 &\mathbb{E}[\phi(\mathbf{z}^*)]=\mathbb{E}_{\boldsymbol{\pi}^1}[\prod\limits_{i=\pi_K^1,\dots,\pi_1^1}\left(\exp(\eta K\mathcal{H}_i)+\mathcal{O}(\eta^{p+1})\right)]\dots\mathbb{E}_{\boldsymbol{\pi}^T}[\prod\limits_{i=\pi_K^T,\dots,\pi_1^T}\left(\exp(\eta K\mathcal{H}_i)+\mathcal{O}(\eta^{p+1})\right)] \phi(\zvect_0)=\\
 &\left(\exp(\eta K\mathcal{H})+\mathcal{O}(K\eta^{\min(p+1,3)})\right)\dots\left(\exp(\eta K\mathcal{H})+\mathcal{O}(K\eta^{\min(p+1,3)})\right)\phi(\zvect_0)=\\
 &\exp(\eta TK \mathcal{H})+\mathcal{O}(TK\eta^{\min(p+1,3)})\phi(\zvect_0).
\end{flalign}
The $\mathcal{D}$ step can be solved analytically, then the overall operator that transform functions is
\begin{flalign*}
    &\mathcal{S}=\exp(\eta TK \mathcal{D})\left(\exp(\eta TK\mathcal{H})+\mathcal{O}(TK \eta^{\min(p+1,3)})\right)=\exp(\eta T K\mathcal{D})\exp(\eta TK \mathcal{H})+\mathcal{O}(TK\eta^{\min(p+1,3)})
\end{flalign*}
Similarly to the full batch case, we define $h=TK\eta$, then the operator is rewritten as
\begin{flalign*}
    &\mathcal{S}=\exp(h \mathcal{D})\exp(h \mathcal{H})+\mathcal{O}((TK)^{-\min(p,2)}h^{\min(p+1,3)})
\end{flalign*}
The ergodic error is then of $\mathcal{O}(\eta^{\min(p,2)})$. 
\section{TOY EXAMPLE}\label{sec:toyexapp}


We remark that Theorem 4 in \citet{chen2015convergence} is not aligned with the results we present in this work. Indeed, the work in \citet{chen2015convergence} suggests that the only effects on the asymptotic invariant measure of an \sgmcmc algorithm are due to the order of the numerical integrators, independently of mini-batching. Instead, our results in \cref{genericintegratorerdogic}, show that an explicit bottleneck of order two is present due to mini-batches, i.e. due to the stochastic nature of gradient.

In \cref{sec:toypexposition}, we present a simple toy example with known analytic solution, which contradicts the claims of \citet{chen2015convergence}, while our geometrical interpretation is compatible nevertheless.
We consider an analytical integrator that corresponds to an arbitrarily high order numerical integration. We show that when considering mini-batches the stationary distribution is not the desired one, hinting at the presence of a bottleneck in accordance with our theory. 
\subsection{Model details}\label{sec:toypexposition}
In this section we introduce a toy example -- whose dynamics has closed form solutions -- to understand the role of mini-batching. For simplicity we consider a one dimensional case.

The prior distribution for the parameter $\theta$ is
\begin{equation}
    p(\theta)=\mathcal{N}(\theta;0,\sigma_\theta^2),
\end{equation}
while the likelihood for any given observation is
\begin{equation}
    p(x|\theta)=\mathcal{N}(x;\theta,\sigma_x^2),
\end{equation}
where $\sigma_\theta,\sigma_x$ are arbitrary positive constants.

We consider the simplest case in which the dataset is composed by two datapoints, i.e. $N=2$, $\dataset=\{x_1,x_2\}$.
Simple calculations show that the posterior distribution is of the form
\begin{equation}\label{eq:toyp}
    p(\theta|\dataset)=\mathcal{N}\left(\theta;\frac{x_1+x_2}{v},\sigma_l^2\right),
\end{equation}
where $v=\frac{\sigma_x^2}{\sigma_\theta^2}+2$ and $\sigma_l^2=\left(\frac{1}{\sigma_\theta^2}+\frac{2}{\sigma_x^2}\right)^{-1}$.

The potential $U(\theta)$ corresponding to the distribution \cref{eq:toyp} has expression
\begin{equation}\label{eq:pot0}
    U(\theta)=\frac{1}{2\sigma_l^2}\left(\theta-\left(\frac{x_1+x_2}{v}\right)\right)^2+\text{const}.
\end{equation}
Notice that, up to constants of $\theta$, the potential can be rewritten as
\begin{equation}\label{eq:pot}
U(\theta)=\frac{1}{4\sigma_l^2}\left(\theta-\frac{2x_1}{v}\right)^2+\frac{1}{4\sigma_l^2}\left(\theta-\frac{2x_2}{v}\right)^2. 
\end{equation}
Consequently the gradient has form
\begin{equation}\label{eq:grapot}
    \nabla U(\theta)=\frac{1}{\sigma_l^2}\left(\theta-\frac{x_1+x_2}{v}\right)=\frac{1}{2\sigma_l^2}\left(\theta-\frac{2x_1}{v}\right)+\frac{1}{2\sigma_l^2}\left(\theta-\frac{2x_2}{v}\right).
\end{equation}
The potential and gradient with a sampled mini-batch corresponding to \cref{eq:pot} and \cref{eq:grapot} are respectively
\begin{equation}\label{eq:potmb}
U_i(\theta)=\frac{1}{2\sigma_l^2}\left(\theta-\frac{2x_i}{v}\right)^2,
\end{equation}
and
\begin{equation}\label{eq:grapotmb}
    \nabla U_i(\theta)=\frac{1}{\sigma_l^2}\left(\theta-\frac{2x_i}{v}\right).
\end{equation}

As shown in \cref{sec:toyanalyticintegr}, it is possible to build analytical integrators, that provide perfect simulation of paths. These integrators, that correspond to arbitrarily high order numerical integrators, are used for a numerical simulation with $\eta=0.4, C=2,\sigma_x^2=2,\sigma_\theta^2=0.5$ and two sampled datapoints $x_1=4,x_2=-3.2$.  The resulting posterior distributions are summarized in the histograms (100K samples) of \cref{fig:toyminibach}. Even considering a perfect integrator, when considering mini-batches, the stationary distribution is not the desired one. Importantly, this is a different result from the one suggested in \citet{chen2015convergence} where it is claimed that asymptotically in time only the order of the numerical integrator determines the convergence rate.



\begin{figure}
    \centering
    \includegraphics[width=0.8\textwidth]{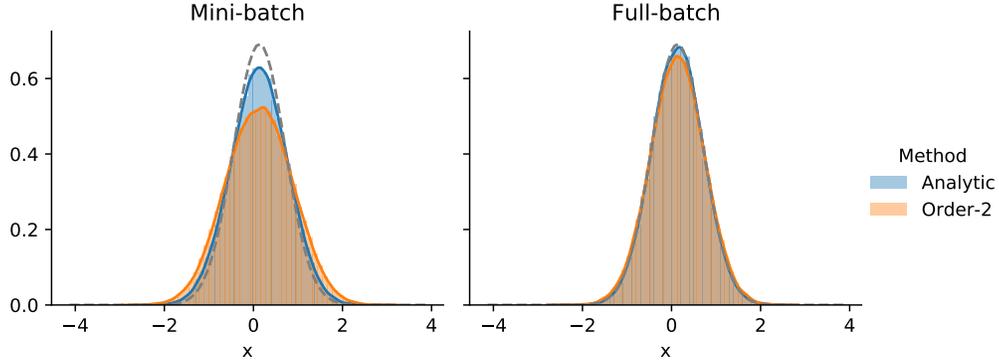}
    \caption{Histograms of stationary distributions (100K samples). The grey dotted line denotes the true posterior density. Irrespectively of order, mini-batching prevents from convergence to true posterior.}
    \label{fig:toyminibach}
\end{figure}

We further support our numerical evidence with a through theoretical exploration in \cref{sec:toyanalyticintegr}.

\subsection{Derivation of the analytical integrators}\label{sec:toyanalyticintegr}
We consider the dynamics of \cref{hamsde}, where for simplicity $\mathbf{M}=1$. We then have, remembering $\zvect(t)=\begin{bmatrix}r(t)\\ \theta(t)\end{bmatrix}$, the following \sde

\begin{equation}
    \dzvect(t)=\begin{bmatrix}-C&-1\\1&0
    \end{bmatrix}\begin{bmatrix}r(t)\\ {\sigma_l^{-2}}(\theta(t)-\frac{x_1+x_2}{v})\end{bmatrix}dt+\begin{bmatrix}\sqrt{2C}dw(t)\\0
    \end{bmatrix},
\end{equation}
that we rewrite in the more convenient form 
\begin{equation}\label{fullb}
    \dzvect(t)=\begin{bmatrix}-C&-{\sigma_l^{-2}}\\1&0
    \end{bmatrix}\left(\zvect(t)-\begin{bmatrix}0\\ \bar{x}\end{bmatrix}\right)dt+\begin{bmatrix}\sqrt{2C}dw(t)\\0
    \end{bmatrix},
\end{equation}
where $\bar{x}=\frac{x_1+x_2}{v}$.

Starting from initial conditions $\zvect(0)=\zvect_0$, we can shown that the system has analytic solution
\begin{equation}\label{eq:ansol}
    \zvect(t)=\exp\left(t\mathbf{A}\right)\left(\zvect_0-\begin{bmatrix}0\\ \bar{x}\end{bmatrix}\right)+\begin{bmatrix}0\\ \bar{x}\end{bmatrix}+\int\limits_{s=0}^{t}\exp\left((t-s)\mathbf{A}\right)\begin{bmatrix}\sqrt{2C}dw(s)\\0
    \end{bmatrix}
\end{equation}
where for simplicity we have defined $\mathbf{A}=\begin{bmatrix}-C&-{\sigma_l^{-2}}\\1&0\end{bmatrix}$.
Considering that ${\sigma_l^{-2}},C>0$ simple calculations show that the eigenvalues of $\mathbf{A}$ have negative real part, ensuring that as $t\rightarrow\infty$ we have $\exp\left(t\mathbf{A}\right)\rightarrow\zerovect$. 

A probabilistically equivalent representation of eq. \cref{eq:ansol} is the following
\begin{equation}\label{eq:algosol}
    \zvect(t)=\exp\left(t\mathbf{A}\right)\left(\zvect_0-\begin{bmatrix}0\\ \bar{x}\end{bmatrix}\right)+\begin{bmatrix}0\\ \bar{x}\end{bmatrix}+\mathbf{n},
\end{equation}
where
\begin{equation}
    \mathbf{n}\sim\mathcal{N}\left(\zerovect, \int\limits_{s=0}^{t}\exp\left((t-s)\mathbf{A}\right)\begin{bmatrix}2C&0\\0&0
    \end{bmatrix}\exp\left((t-s)\mathbf{A}^\top\right)ds\right).
\end{equation}
Equation \cref{eq:algosol} guarantees that it is possible to build a numerical integrator for any chosen step size $\eta$ that simulates exactly the dynamics, corresponding to an arbitrarily high order of weak integration.

\subsubsection*{Convergence to the posterior}

We then explore the transition probability starting from a given initial state $\zvect_0$ induced by such a numerical integrator with step size $\eta$
\begin{flalign}\label{eq:transp}
    &p(\zvect(\eta)|\zvect(0)=\zvect_0)=\nonumber\\
    &\mathcal{N}\left(\zvect(\eta);\exp\left(\eta\mathbf{A}\right)\left(\zvect_0-\begin{bmatrix}0\\ \bar{x}\end{bmatrix}\right)+\begin{bmatrix}0\\ \bar{x}\end{bmatrix},\int\limits_{s=0}^{\eta}\exp\left((\eta-s)\mathbf{A}\right)\begin{bmatrix}2C&0\\0&0
    \end{bmatrix}\exp\left((\eta-s)\mathbf{A}^\top\right)ds\right).
\end{flalign}

As shown in \cref{sec:proofcov}, the covariance matrix of the distribution \cref{eq:transp} can be expressed as
\begin{equation}\label{eqexp}
 \int\limits_{s=0}^{\eta}\exp\left((\eta-s)\mathbf{A}\right)\begin{bmatrix}2C&0\\0&0
    \end{bmatrix}\exp\left((\eta-s)\mathbf{A}^\top\right)ds=\begin{bmatrix}1&0\\0&{\sigma_l^{2}}\end{bmatrix}-\exp\left(\eta\mathbf{A}\right)\begin{bmatrix}1&0\\0&{\sigma_l^{2}}\end{bmatrix}\exp\left(\eta\mathbf{A}^\top\right),
\end{equation}
and consequently, we can rewrite \cref{eq:transp} as
\begin{flalign}
&p(\zvect(\eta)|\zvect(0)=\zvect_0) =  \nonumber\\& \mathcal{N}\left(\zvect(\eta);\exp\left(\eta\mathbf{A}\right)\left(\zvect_0-\begin{bmatrix}0\\ \bar{x}\end{bmatrix}\right)+\begin{bmatrix}0\\ \bar{x}\end{bmatrix},\mathbf{\Sigma}(\eta)\right),
\end{flalign}
where we introduced the $\mathbf{\Sigma}(\eta)=\begin{bmatrix}1&0\\0&{\sigma_l^{2}}\end{bmatrix}-\exp\left(\eta\mathbf{A}\right)\begin{bmatrix}1&0\\0&{\sigma_l^{2}}\end{bmatrix}\exp\left(\eta\mathbf{A}^\top\right)$.

Importantly, we have that $\rho_{ss}(\zvect)$ is the stationary distribution of the stochastic process, if and only if the following holds:
\begin{equation}\label{eigenker}
  \rho_{ss}(\zvect^*)=\int p(\zvect(\eta)=\zvect^*|\zvect(0)=\zvect)\rho_{ss}(\zvect)\dzvect
\end{equation}
where obviously the equality must hold for all $\eta$. To check that eq. \cref{eigenker} holds with:
\begin{equation}
\rho_{ss}(\zvect)=\mathcal{N}(r;0,1)\, \mathcal{N}\left(\theta;\frac{x_1+x_2}{v},\sigma_l^2\right)=\mathcal{N}\left(\zvect;\begin{bmatrix}0\\\bar{x}\end{bmatrix},\begin{bmatrix}1&0\\0&\sigma_l^2\end{bmatrix}\right)
\end{equation} we substitute directly:
\begin{flalign*}
&\int p(\zvect(\eta)=\zvect^*|\zvect(0)=\zvect)\rho_{ss}(\zvect)\dzvect=\\&\int \mathcal{N}\left(\zvect^*;\exp\left(\eta\mathbf{A}\right)\left(\zvect-\begin{bmatrix}0\\ \bar{x}\end{bmatrix}\right)+\begin{bmatrix}0\\ \bar{x}\end{bmatrix},\mathbf{\Sigma}(\eta)\right)
\mathcal{N}\left(\zvect;\begin{bmatrix}0\\\bar{x}\end{bmatrix},\begin{bmatrix}1&0\\0&\sigma_l^2\end{bmatrix}\right)\dzvect\\
&=\mathcal{N}\left(\zvect^*;\exp\left(\eta\mathbf{A}\right)\left(\begin{bmatrix}0\\ \bar{x}\end{bmatrix}-\begin{bmatrix}0\\ \bar{x}\end{bmatrix}\right)+\begin{bmatrix}0\\ \bar{x}\end{bmatrix},\mathbf{\Sigma}(\eta) +\exp\left(\eta\mathbf{A}\right)\begin{bmatrix}1&0\\0&{\sigma_l^{2}}\end{bmatrix}\exp\left(\eta\mathbf{A}^\top\right)\right)\\
&=\mathcal{N}\left(\zvect^*;\begin{bmatrix}0\\\bar{x}\end{bmatrix},\begin{bmatrix}1&0\\0&{\sigma_l^{2}}\end{bmatrix}\right)
\end{flalign*}
proving that the stationary distribution is indeed the desired one.

\subsubsection*{Failure of convergence in the case of mini-batches}

In the following, we show that when considering mini-batches instead, we fail to converge to the true posterior. Importantly this is true even with the analytical solution for the steps of the integrator, stressing that numerical integration and mini-batching have two independent effects.
With mini-batches, at every step of the integration a $x_1,x_2$ are sampled with probability $\frac{1}{2}$. 

Instead of simulating eq. \cref{fullb} at every step of the numerical integration, the following stochastic process is then considered:
\begin{equation}\label{minib}
    \dzvect(t)=\begin{bmatrix}-C&-{\sigma_l^{-2}}\\1&0
    \end{bmatrix}\left(\zvect(t)-\begin{bmatrix}0\\ \bar{x}_i\end{bmatrix}\right)dt+\begin{bmatrix}\sqrt{2C}dw(t)\\0
    \end{bmatrix}
\end{equation}
where, $\bar{x}_i=\frac{2x_i}{v}$, and $i=1$ or $i=2$ is sampled randomly. 

Similarly to the previous case it is possible to construct a numerical integrator that solves exactly the dynamics, this time with the potential computed using randomly only one of the two datapoints. The transition probability induced by such an integrator will then be equal to: 
\begin{flalign}
    p(\zvect(\eta)=\zvect^*|\zvect(0)=\zvect_0)&=\frac{1}{2}\mathcal{N}\left(\zvect^*;\exp\left(\eta\mathbf{A}\right)\left(\zvect_0-\begin{bmatrix}0\\ \bar{x}_1\end{bmatrix}\right)+\begin{bmatrix}0\\ \bar{x}_1\end{bmatrix},\mathbf{\Sigma}(\eta) \right)\nonumber\\
    &+\frac{1}{2}\mathcal{N}\left(\zvect^*;\exp\left(\eta\mathbf{A}\right)\left(\zvect_0-\begin{bmatrix}0\\ \bar{x}_2\end{bmatrix}\right)+\begin{bmatrix}0\\ \bar{x}_2\end{bmatrix},\mathbf{\Sigma}(\eta) \right)\\
    &=\frac{1}{2}f_1(\zvect^*,\eta,\zvect_0)+\frac{1}{2}f_2(\zvect^*,\eta,\zvect_0),
\end{flalign}
where $f_1,f_2$ are the two Gaussian transition probabilities.
Importantly, this allows to state the following central statement: the distribution $p(\theta|\mathcal{D})$ is not the stationary distribution of the process. 
This claim can be proven by contradiction.
Suppose that the stationary distribution is the one of interest, $\mathcal{N}\left(\zvect;\begin{bmatrix}0\\\bar{x}\end{bmatrix},\begin{bmatrix}1&0\\0&{\sigma_l^{2}}\end{bmatrix}\right)$. Performing one step of integration the new distribution will be of the form:
\begin{flalign}\label{transminib}
    \rho_{new}(\zvect^*) &= \int \left(\frac{1}{2}f_1(\zvect^*,\eta,\zvect)+\frac{1}{2}f_2(\zvect^*,\eta,\zvect)\right)\rho_{ss}(\zvect)\dzvect\\ 
    &=\nonumber\frac{1}{2}\mathcal{N}\left(\zvect^*;\exp\left(\eta\mathbf{A}\right)\left(\begin{bmatrix}0\\ \bar{x}\end{bmatrix}-\begin{bmatrix}0\\ \bar{x}_1\end{bmatrix}\right)+\begin{bmatrix}0\\ \bar{x}\end{bmatrix},\begin{bmatrix}1&0\\0&{\sigma_l^{2}}\end{bmatrix}\right)\\
    &+\frac{1}{2}\mathcal{N}\left(\zvect^*;\exp\left(\eta\mathbf{A}\right)\left(\begin{bmatrix}0\\ \bar{x}\end{bmatrix}-\begin{bmatrix}0\\ \bar{x}_2\end{bmatrix}\right)+\begin{bmatrix}0\\ \bar{x}\end{bmatrix},\begin{bmatrix}1&0\\0&{\sigma_l^{2}}\end{bmatrix}\right)\\&\neq \rho_{ss}(\zvect^*).
\end{flalign}
By definition, we should have the functional equivalence between $\rho_{new}$ and $\rho_{ss}$.
Equation \cref{transminib} implies however that after the mixture transition probability the new density $\rho_{new}$ will be a mixture of Gaussian distributions, and consequently $\rho_{new}\neq \rho_{ss}$. This concludes the demonstration that the stationary distribution is not the one of interest. 



\subsubsection{Derivation of eq. \cref{eqexp} }\label{sec:proofcov}
Define:
\begin{equation}
    \mathbf{K}(t)=\int\limits_{s=0}^{t}\exp\left((t-s)\mathbf{A}\right)\begin{bmatrix}2C&0\\0&0
    \end{bmatrix}\exp\left((t-s)\mathbf{A}^\top\right)ds,
\end{equation}
and
\begin{equation}
    \mathbf{S}(t)=\begin{bmatrix}1&0\\0&{\sigma_l^{2}}\end{bmatrix}-\exp\left(t\mathbf{A}\right)\begin{bmatrix}1&0\\0&{\sigma_l^{2}}\end{bmatrix}\exp\left(t\mathbf{A}^\top\right).
\end{equation}

Immediately we see that $\mathbf{K}(0)=\mathbf{S}(0)=\zerovect$.

The derivative of $\mathbf{K}(t)$ has expression:
\begin{flalign*}
    \frac{d\mathbf{K}(t)}{dt}&=\exp\left((t-t)\mathbf{A}\right)\begin{bmatrix}2C&0\\0&0
    \end{bmatrix}\exp\left((t-t)\mathbf{A}^\top\right)\\
    &+\mathbf{A}\int\limits_{s=0}^{t}\exp\left((t-s)\mathbf{A}\right)\begin{bmatrix}2C&0\\0&0
    \end{bmatrix}\exp\left((t-s)\mathbf{A}^\top\right)ds\\
    &+\int\limits_{s=0}^{t}\exp\left((t-s)\mathbf{A}\right)\begin{bmatrix}2C&0\\0&0
    \end{bmatrix}\exp\left((t-s)\mathbf{A}^\top\right)ds\mathbf{A}^\top\\
    &=\begin{bmatrix}2C&0\\0&0
    \end{bmatrix}+\mathbf{A}\mathbf{K}(t)+\mathbf{K}(t)\mathbf{A}^\top.
\end{flalign*}

Similarly we have:
\begin{flalign*}
    \frac{d\mathbf{S}(t)}{dt}&=-\mathbf{A}\exp\left(t\mathbf{A}\right)\begin{bmatrix}1&0\\0&{\sigma_l^{2}}\end{bmatrix}\exp\left(t\mathbf{A}^\top\right)-\exp\left(t\mathbf{A}\right)\begin{bmatrix}1&0\\0&{\sigma_l^{2}}\end{bmatrix}\exp\left(t\mathbf{A}^\top\right)\mathbf{A}^\top\\
    &=-\mathbf{A}\left(\begin{bmatrix}1&0\\0&{\sigma_l^{2}}\end{bmatrix}-\mathbf{S}(t)\right)-\left(\begin{bmatrix}1&0\\0&{\sigma_l^{2}}\end{bmatrix}-\mathbf{S}(t)\right)\mathbf{A}^\top.
\end{flalign*}

Since $\mathbf{A}\begin{bmatrix}1&0\\0&{\sigma_l^{2}}\end{bmatrix}=\begin{bmatrix}-C&-1\\1&0\end{bmatrix}$, we can prove
\begin{equation*}
  \frac{d\mathbf{S}(t)}{dt}=  \begin{bmatrix}2C&0\\0&0
    \end{bmatrix}+\mathbf{A}\mathbf{S}(t)+\mathbf{S}(t)\mathbf{A}^\top.
\end{equation*}

Definining the difference matrix $\mathbf{H}(t)=\mathbf{K}(t)-\mathbf{S}(t)$ we recognize that it satisfies the differential equation
\begin{equation*}
  \frac{d\mathbf{H}(t)}{dt}=  \mathbf{A}\mathbf{H}(t)+\mathbf{H}(t)\mathbf{A}^\top
\end{equation*}
that has explicit solution
\begin{equation}
    \mathbf{H}(t)=\exp\left(t\mathbf{A}\right)\mathbf{H}(0)\exp\left(t\mathbf{A}^\top\right)
\end{equation}
Since $\mathbf{H}(0)=\zerovect$ we conclude $\mathbf{H}(t)=\zerovect$ proving the desired equality.

\section{EXPERIMENTS: ADDITIONAL DETAILS}\label{sec:experiments_appendix}

\subsection{Experimental setup}
\label{ssec:experiment_setup_appendix}

Throughout our experimental campaign, we work on a number of datasets from the UCI repository\footnote{\url{https://archive.ics.uci.edu/ml/index.php}}.
For each dataset, we have considered $5$ random splits into training and test sets; for the regression datasets, we have adopted the splits in \citet{gal2018w} which are available online\footnote{\url{https://github.com/yaringal/DropoutUncertaintyExps}}  under the Creative Commons licence.
The primary point of interest is in this work is the predictive distribution over the test set and how this deviates from the true posterior.
\Cref{tab:regression} and \cref{tab:classification} summarize the details of the regression and classification datasets in this work.
In the same tables we report some error metrics; the latter are not a central part of the evaluation of this paper, but they are simply indicative of the performance of \sdehmc.
Our implementation is loosely based on the PYSGMCMC framework\footnote{\url{https://github.com/MFreidank/pysgmcmc}}.

\paragraph{Network architecture.}
Throughout this experimental campaign, we consider \bnns featuring $L$ layers, where each layer is defined as follows:
\begin{equation}
f_{l}(\mathbf{x}) = \frac{1}{\sqrt{D_{l-1}}} \mathbf{W}_{l}\, \varphi(f_{l-1}(\mathbf{x}))  + \mathbf{b}_{l}, \quad l \in \{1, ..., L+1\},
\end{equation}
where $\varphi$ is a non-linear activation function.
The model parameters are summarized as $\thetavect \equiv \{\mathbf{W}_{l} \in \mathbb{R}^{D_l \times D_{l-1}}, \mathbf{W}_{l} \in \mathbb{R}^{D_l \times D_{l-1}}\}$, and they denote the matrix of weights and the vector of biases for layer $l$.
Note that we divide by the square root of the input dimension $D_{l-1}$; this scheme is known as the \emph{NTK parameterization} \citep{jacot2018neural,lee2020finite}, and it ensures that the asymptotic variance neither explodes nor vanishes.
In this context, we place a standard Gaussian prior for both weights and biases, i.e.\ $p(\thetavect) = \mathcal{N}(0, I)$.

\paragraph{Regression and classification tasks.}
For regression tasks we consider \bnns with 4 layers and 50 nodes per layer with \relu activation.
\Cref{tab:regression} outlines the predictive performance of \sdehmc for four regression datasets (\boston, \concrete, \energy and \yacht), as well as the noise variance $\sigma^2$ used in each case.
We have drawn $200$ samples from the posterior and the regression performance is evaluated in terms of \emph{root mean squared error} (\rmse) and \emph{mean negative log-likelihood} \mnll.
In order to put these values into perspective, we also show the results for the adaptive \sghmc scheme from \citet{springenberg2016bayesian} featuring a network of identical structure as in our setup.
In both cases, the full-batch gradient was used.

\begin{table*}[ht]
\caption{Regression results for \bnns with 4 layers and 50 nodes per layer with \relu activation. For \sghmc we follow \citet{springenberg2016bayesian}.} \label{tab:regression}
\begin{center}
{\footnotesize
\begin{tabular}{lrrr||cc|cc}
                 & \multirow{2}{0.8cm}{Training Size} & \multirow{2}{0.5cm}{Test Size} 
                 &
                 &\multicolumn{2}{c|}{\textbf{Test \rmse}$(\downarrow)$} &\multicolumn{2}{c}{\textbf{Test \mnll}$(\downarrow)$} \\
\textbf{DATASET} &     &     & $\sigma^2$ & \sdehmc        & \sghmc         &\sdehmc         & \sghmc \\
\hline \\
Boston           &455  &51   & 0.2        & 2.821$\pm$0.61 & 2.825$\pm$0.63 & 2.600$\pm$0.09 & 2.600$\pm$0.09 \\
Concrete         &927  &103  & 0.05       & 4.907$\pm$0.39 & 4.833$\pm$0.46 & 2.971$\pm$0.07 & 2.948$\pm$0.09 \\
Energy           &691  &77   & 0.01       & 0.501$\pm$0.07 & 0.489$\pm$0.07 & 1.191$\pm$0.07 & 1.120$\pm$0.03 \\
Yacht            &277  &31   & 0.005      & 0.420$\pm$0.12 & 0.436$\pm$0.13 & 1.180$\pm$0.02 & 1.176$\pm$0.02 \\
\end{tabular}
}
\end{center}
\end{table*}

For classification we consider \iono and \vehicle from UCI, for which we sample from \relu \bnns with 2 layers and 50 nodes per layer.
We have drawn $200$ samples from the posterior, and the predictive accuracy and test \mnll of \sdehmc can be seen in \cref{tab:classification}, where we also show the performance of the adaptive version of \sghmc \citep{springenberg2016bayesian}.
The results have been very similar for these two cases; this has been expected, as both methods are supposed to converge to the same posterior.

\begin{table*}[ht]
\caption{Classification results for \bnns with 2 layers and 50 nodes per layer with \relu activation. For \sghmc we follow \citet{springenberg2016bayesian}.} \label{tab:classification}
\begin{center}
{\footnotesize 
\begin{tabular}{lrrr||cc|cc}
                 &
                 & \multirow{2}{0.8cm}{Training Size} & \multirow{2}{0.5cm}{Test Size} 
                 &\multicolumn{2}{c|}{\textbf{Test accuracy}$(\uparrow)$} &\multicolumn{2}{c}{\textbf{Test \mnll}$(\downarrow)$} \\
\textbf{DATASET} & Classes &     &    & \sdehmc        & \sghmc         &\sdehmc         & \sghmc \\
\hline \\
Ionosphere       & 2       &301  &34  & 0.920$\pm$0.03 & 0.916$\pm$0.02 & 0.294$\pm$0.04 & 0.295$\pm$0.04 \\
Vehicle          & 4       &646  &200 & 0.777$\pm$0.02 & 0.783$\pm$0.02 & 0.588$\pm$0.03 & 0.588$\pm$0.03 \\
\end{tabular}
}
\end{center}
\end{table*}

\newcommand{\wvec}{\mathbf{w}}

\paragraph{Synthetic regression example.}
We also consider as simple regression model applied on the synthetic one-dimensional dataset of \cref{fig:synthetic}.
In this case we choose a linear model with fixed basis functions, in order to demonstrate \sdehmc convergence in a fully controlled environment where the true posterior can be calculated analytically.
We consider $D=256$ trigonometric basis functions: $f(x) = \sqrt{\nicefrac{2}{D}}\, \wvect^\top \cos(\omega x-\pi/4)$, where $\wvect \in \mathbb{R}^{D\times1}$ contains the weights of $D$ features and $\omega \in \mathbb{R}^{D\times1}$ is a vector of fixed frequencies.
In the experiments that follow, we have a Gaussian likelihood with variance $0.1$ and prior $p(\wvect) = \mathcal{N}(0, I_D)$.
The predictive posterior can be seen in \cref{fig:synthetic}, where the test set consists of $200$ uniformly distributed points.

\begin{figure}
    \centering
    \includegraphics[width=0.5\textwidth]{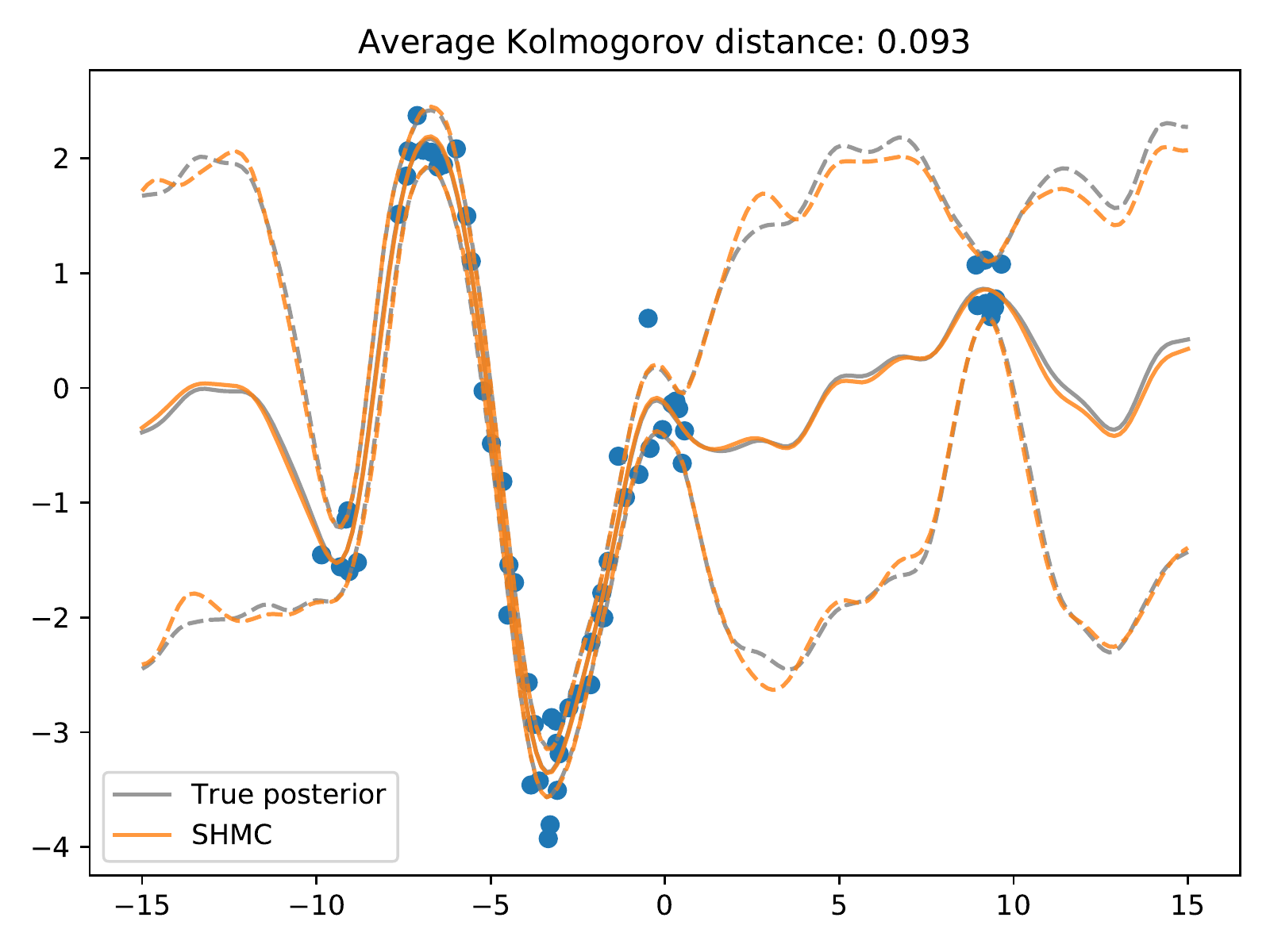}
    \caption{True and \sdehmc predictive posterior on a synthetic dataset.}
    \label{fig:synthetic}
\end{figure}

\subsection{Comparison framework and convergence to posterior}
\label{ssec:framework_appendix}

\paragraph{Comparing predictive distributions.}
In this work, we explore the behavior of a number of methods by comparing the predictive distribution given by a particular setting to the distribution of an oracle.
The comparison is performed in terms of one-dimensional predictive marginals using $200$ samples: for each test point we evaluate the Kolmogorov distance between the predictive distribution and the oracle.
We then report the average distance over the test set.
The Kolmogorov distance takes values between 0 and 1.
In \cref{fig:synthetic} we show an one-dimensional regression example where the average distance values over the test set is smaller than $0.1$.

\paragraph{Self-distance.}
When we compare the distance between empirical distribution, this will never be exactly zero.
In order to determine whether a given sample is a sufficiently good approximation for a target distribution, we have to compare the corresponding distance value to the \emph{self-distance}.
The latter can be evaluated by resampling from the oracle posterior distribution: for each oracle we consider $5$ independent runs producing $200$ samples each, which are used to estimate the Kolmogorov self-distance.
In the results of \cref{fig:methods_all}, \cref{fig:generalizedLT_all} and \cref{fig:friction_all}, the shaded area denotes the 0.05 and 0.95-quantiles of the self-distance distribution.

\paragraph{Methods summary.}
The main focus of this experimental campaign is to explore how \sdehmc is affected by changes of step size and batch size, and to observe whether the introduction of mini-batches induces a convergence bottleneck in practice or not.
For \sdehmc, we consider $4$ integrators: \leapfrog which is provably of order 1, \symmetric \citep{chen2015convergence} and \lietrotter (\cref{sec:hmcvssdehmc}) which are of order 2, and \mtthree \citep{milstein2003quasi} which is a third order integrator.
We also compare against \sghmc \citep{chen2014stochastic}.


\paragraph{Computational summary.}
At this point, we note that each block in \cref{fig:methods_all}, \cref{fig:generalizedLT_all} and \cref{fig:friction_all} corresponds to $5$ random splits or seeds.
For most datasets, drawing 200 samples for the smallest step size (i.e.\ $0.001$) required 15--20 minutes.
Such a small step size has been necessary in order to demonstrate convergence for small batch sizes.
A full exploration of the methods and integrators for a single split (or seed) of a single dataset required approximately one day of computation on a computer cluster featuring Intel\textregistered~Xeon\textregistered~CPU @2.00GHz.
It has not been possible to do this kind of exploration in a meaningful way for larger datasets, yet we believe that the existing results are sufficient to demonstrate our theoretical claims.

\paragraph{Convergence.}
In order to reason about convergence in the experiments that follow, we have taken great care to approximate the true posterior with \sghmc, which we treat as oracle.
For the oracles we consider full-batch and $\eta=0.005$ in all cases, except for the linear model where the true posterior has been analytically tractable.
The simulation time has been determined by adjusting the thinning parameter so that lag-1 autocorrelation (\acf) was considered as acceptable.
In practice we keep one sample every $500$ steps (i.e.\ thinning), and we discard the first $2000$ steps.
Then for each oracle, the simulation time was doubled one more time, but that resulted in no significant difference in the predictive distribution.
The summary of \acf values that correspond to the oracles used in this work can be found in \cref{tab:acf}.
As a final remark, we note that for each combination of dataset and model, all random walks cover the same simulation time.
The CPU time is thus determined by the step size $\eta$: a larger value for $\eta$ implies that a smaller number of steps is required to simulate a certain system, as the thinning parameter is adjusted accordingly.

\begin{table*}[ht]
\caption{Lag 1 autocorrelation for the \sghmc algorithms used as oracles.} \label{tab:acf}
\begin{center}
{\footnotesize
\begin{tabular}{lr}
DATASET          & \acf \\
\hline \\
Boston           & 0.09 \\
Concrete         & 0.23 \\
Energy           & 0.05 \\
Yacht            & 0.05 \\
Ionosphere       & 0.17 \\
Vehicle          & 0.18 \\
\end{tabular}
}
\end{center}
\end{table*}

\subsection{Extended regression and classification results}
\label{ssec:extended_results}

In this section we present the results of step size and batch size exploration for the entirety of datasets considered.
In all cases, we set $C=5$; we find this to be a reasonable choice, as we see in the exploration of \cref{ssec:explore_C}.

\Cref{fig:methods_all} focuses on the comparison between \sdehmc with different integrators (\leapfrog, \symmetric, \lietrotter and \mtthree) and \sghmc.
This is an extended version of \cref{fig:methods} of the main paper; we include all the datasets considered, as well as a more fine grained exploration of the batch size.

\Cref{fig:generalizedLT_all} is an extended version of \cref{fig:generalizedLT} of the main paper.
It contains a complete account of the exploration of the integration length $N_l$ for the (generalized) \lietrotter integrator.

\begin{figure}[h]
    \centering
    \includegraphics[width=\textwidth]{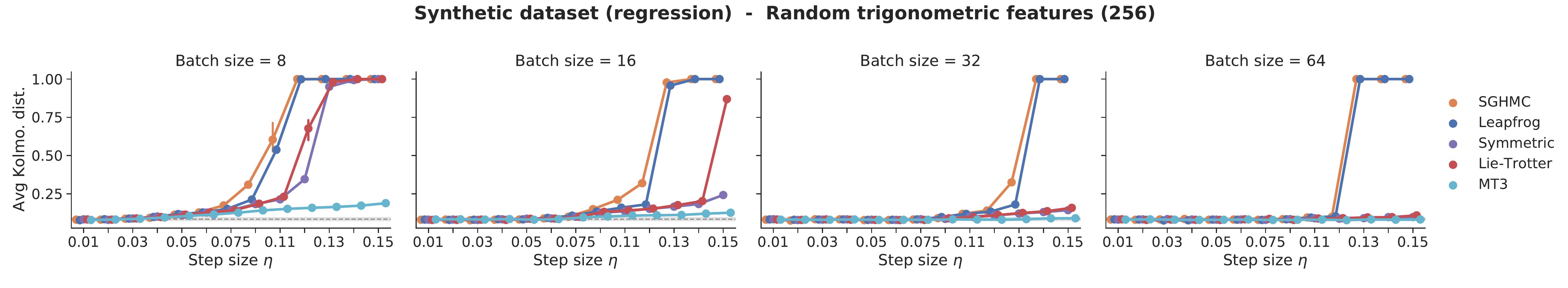}
    \includegraphics[width=\textwidth]{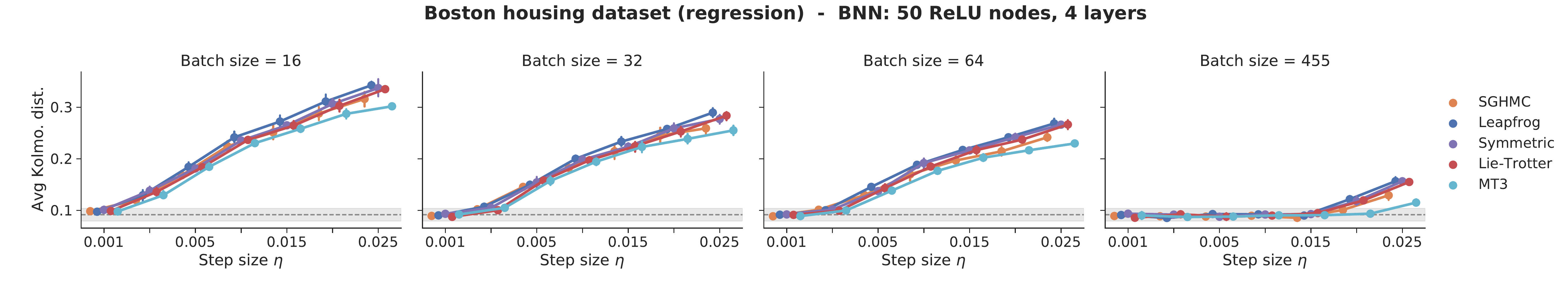}
    \includegraphics[width=\textwidth]{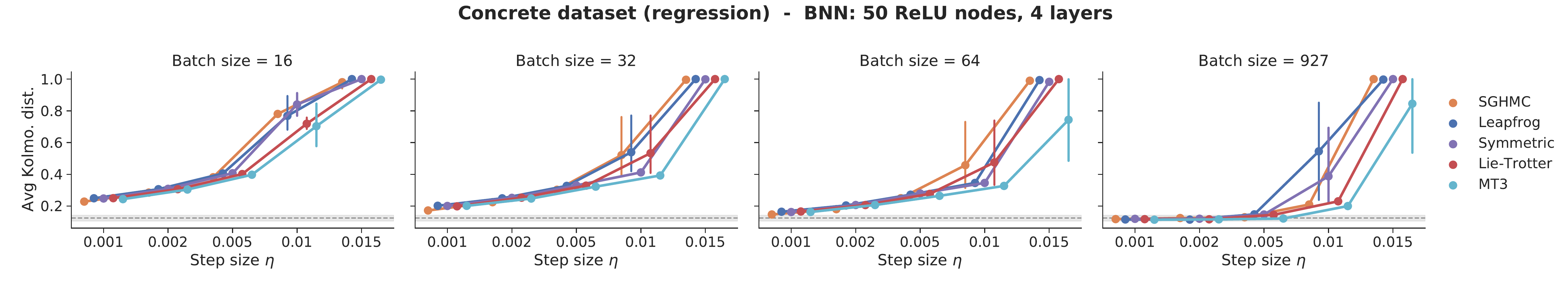}
    \includegraphics[width=\textwidth]{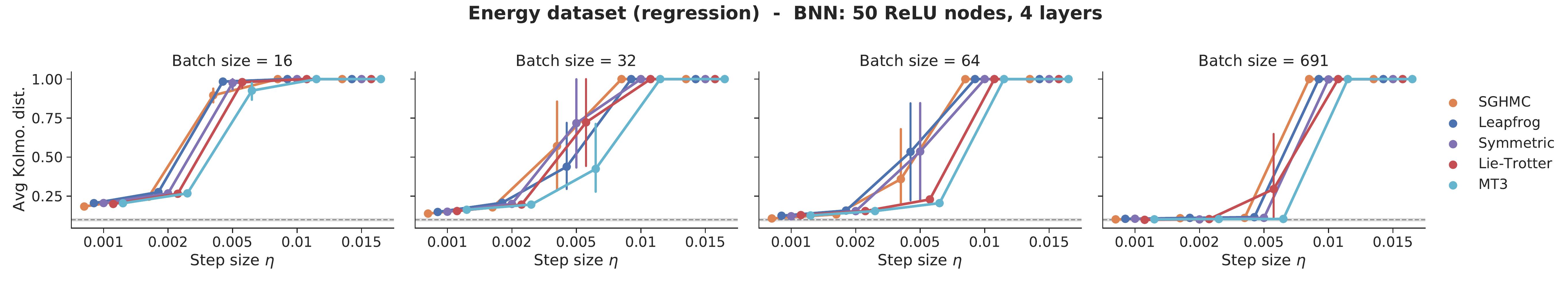}
    \includegraphics[width=\textwidth]{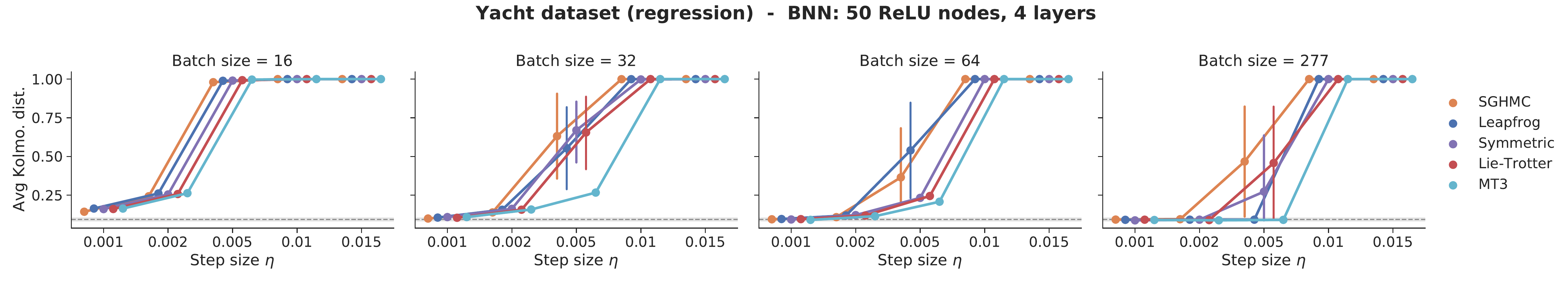}
    \includegraphics[width=\textwidth]{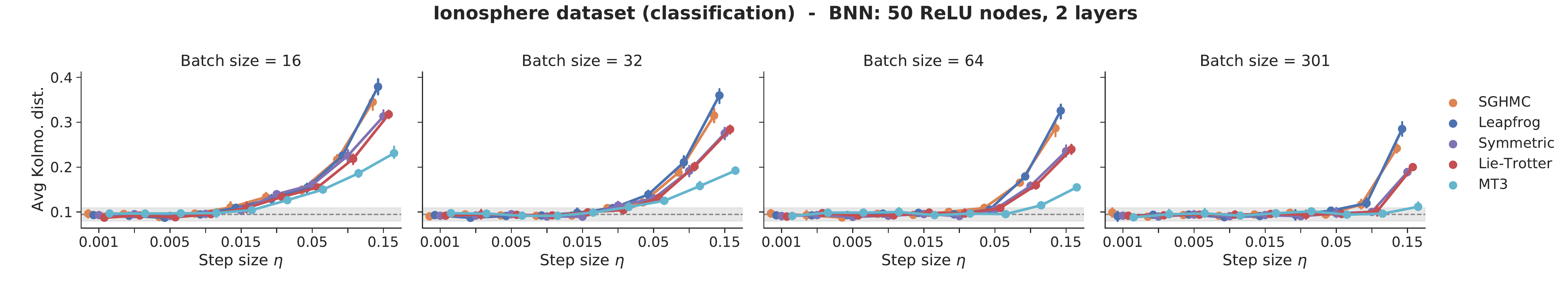}
    \includegraphics[width=\textwidth]{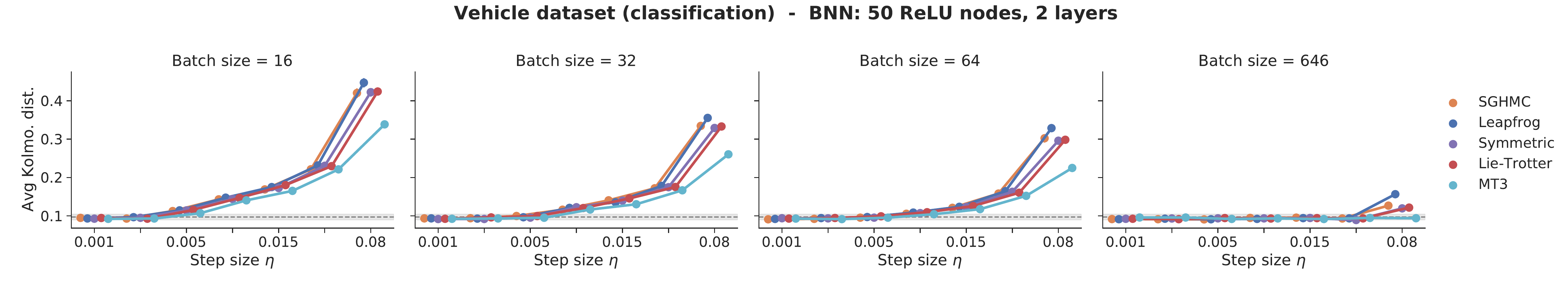}
    \caption{Exploration of step size and batch size for different Hamiltonian-based methods; the grey dotted line denotes the self-distance for the distribution of the oracle.}
    \label{fig:methods_all}
\end{figure}

\begin{figure}[h]
    \centering
    \includegraphics[width=\textwidth]{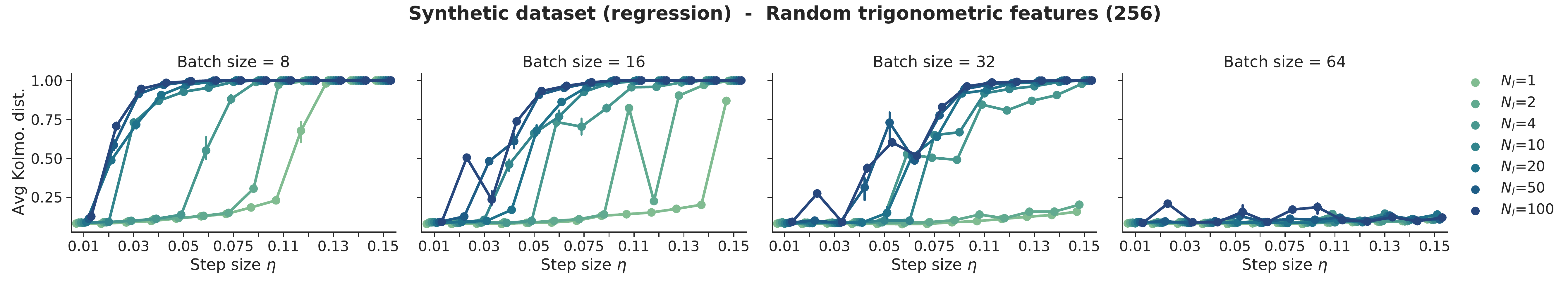}
    \includegraphics[width=\textwidth]{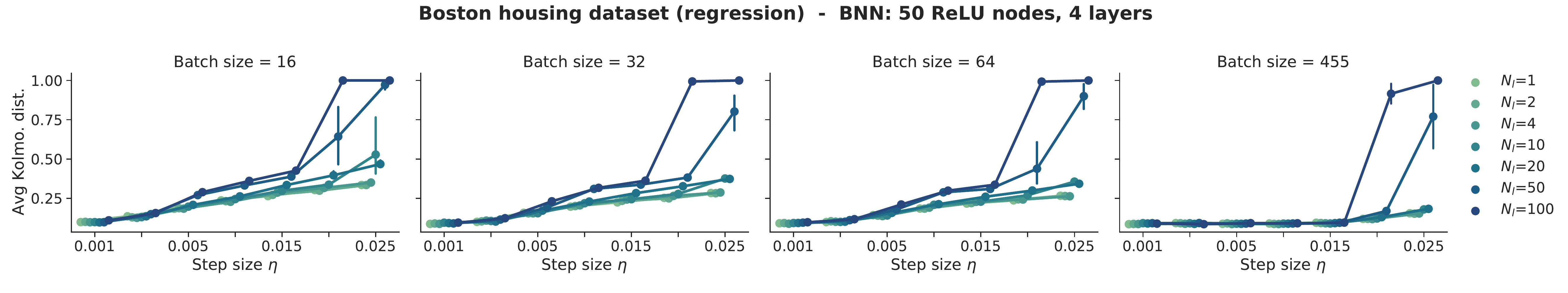}
    \includegraphics[width=\textwidth]{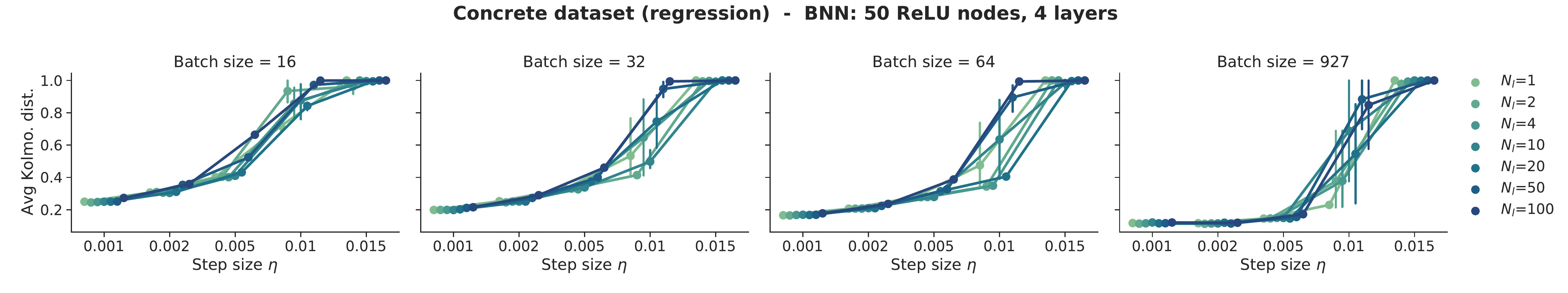}
    \includegraphics[width=\textwidth]{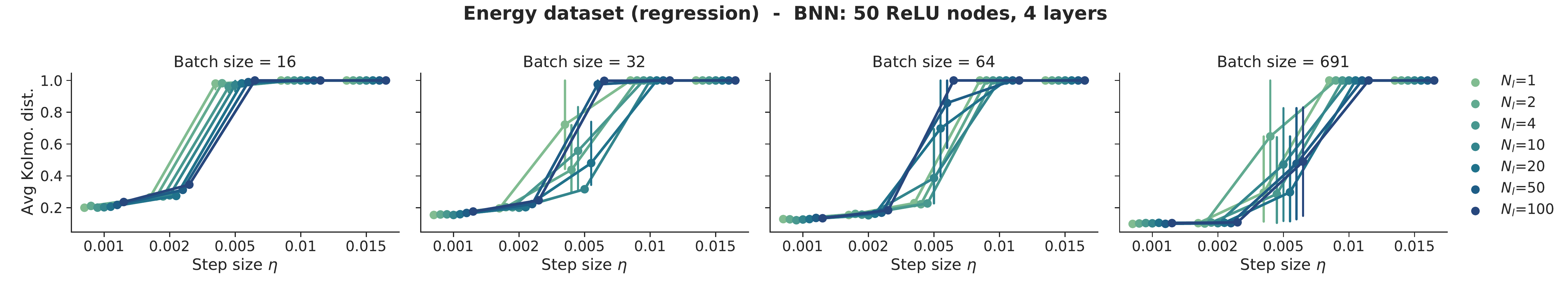}
    \includegraphics[width=\textwidth]{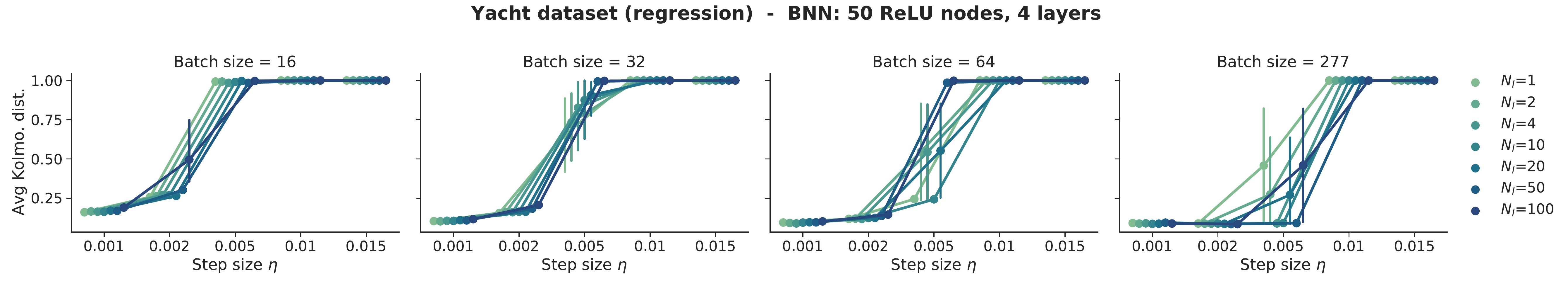}
    \includegraphics[width=\textwidth]{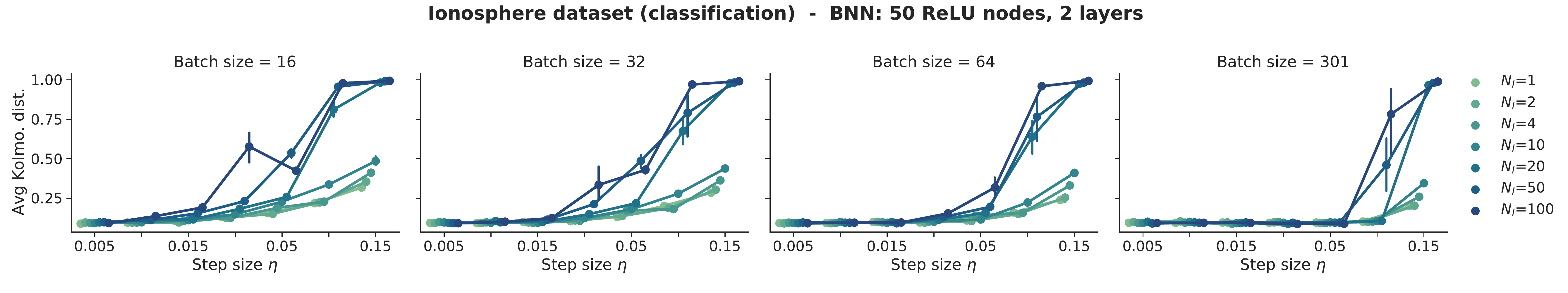}
    \includegraphics[width=\textwidth]{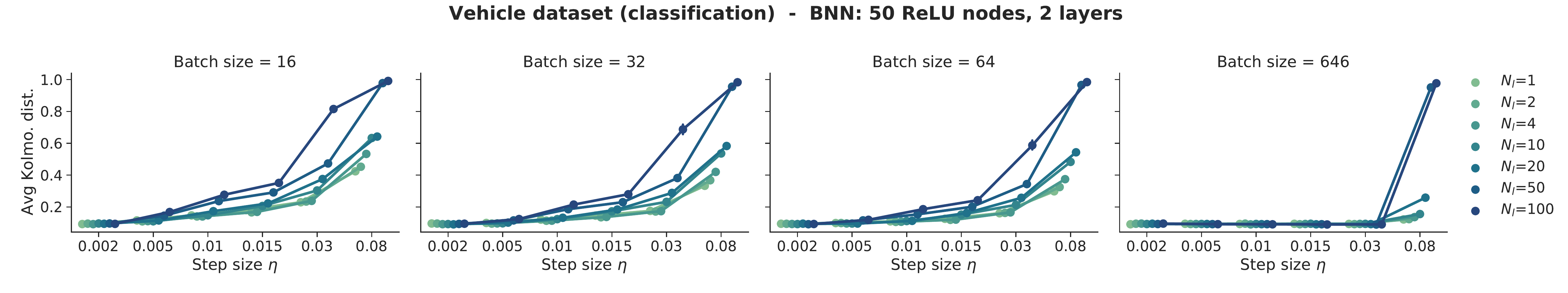}
    \caption{Generalized \lietrotter: Exploration of step size and batch size for different values of the (deterministic) integration length $N_l$. The grey dotted line denotes the self-distance for the distribution of the oracle.}
    \label{fig:generalizedLT_all}
\end{figure}

\subsection{Exploration of the friction constant $C$}
\label{ssec:explore_C}

The user-specified constant $C$ appears twice in \cref{hamsde}: in the friction term and in the stochastic diffusion term of the \sde.
Although \cref{th:stationary} implies that any choice for $C>0$ will maintain the desired stationary distribution (i.e.\ $\rho_{ss}(\thetavect) \propto \exp(-U(\thetavect))$), the transient dynamics of the \sde do change as we have seen in the sample paths of \cref{fig:sample_paths}. 
It is generally expected that as $C$ approaches $0$, the sample paths approach the deterministic behavior of an \ode.
On the other hand, if $C$ is too large, the stochastic process degenerates to the standard Brownian motion.
It is expected that either of these extremes would hurt the usability of \sde simulation as a sampling scheme, but the exact effect of $C$ is not apparent.

In this section, we experimentally examine how a certain choice for $C$ affects practical convergence to the desired posterior in conjunction with the step size $\eta$ as well as the mini-batch size.
\Cref{fig:friction_all} summarizes an extensive exploration of $C$ for all the models and datasets considered in this work using the \leapfrog integrator, with $C$ varying from $0.5$ to $100$.
Again, we measure the average Kolmogorov distance from the true predictive posterior for the test points, and we want to see whether the curve for varying $\eta$ and batch size approaches the band of self-distance.

As a general remark, the desired convergence properties are not too sensitive to the constant $C$.
Although, the optimal value for $C$ does depend on the dynamics induced by a particular dataset and model, it appears that there is a wide range of values that can be considered as acceptable.
For most of the cases considered, a value between $5$ and $10$ seems to produce reasonable results.
Nevertheless, we think that some manual exploration would still be required for a new dataset.

\begin{figure}[h]
    \centering
    \includegraphics[width=\textwidth]{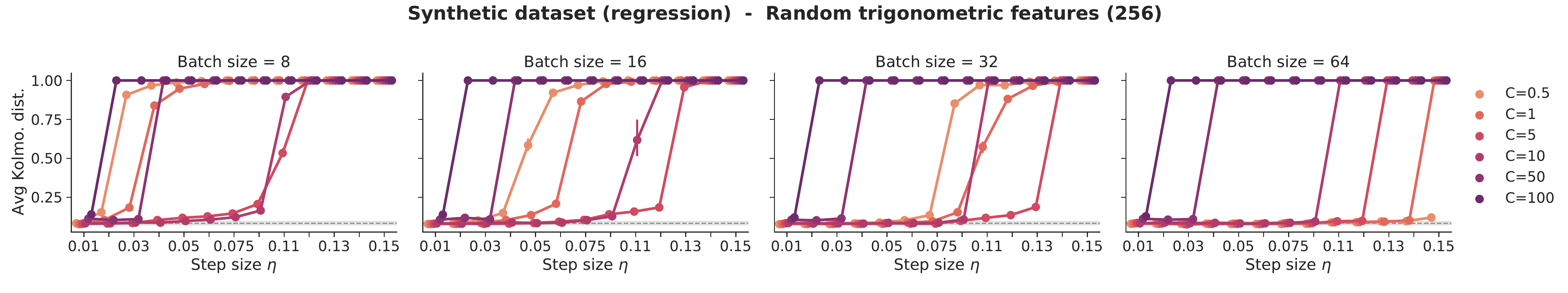}
    \includegraphics[width=\textwidth]{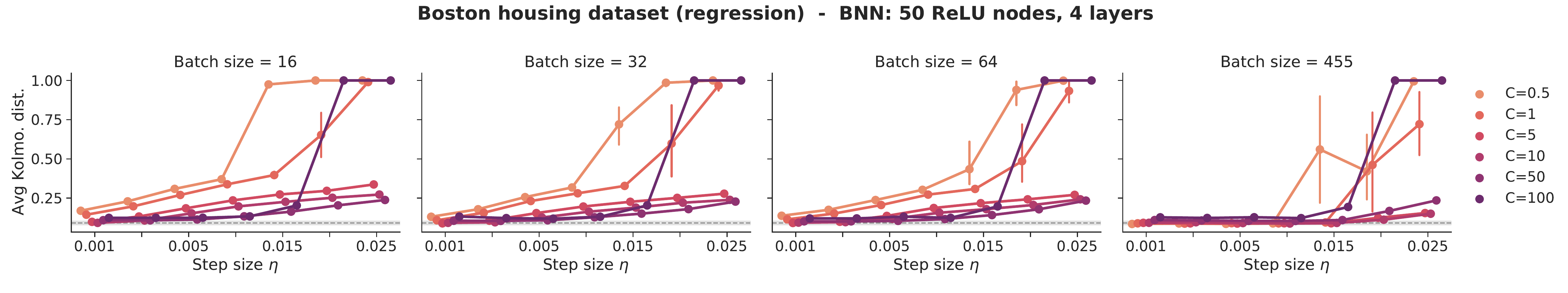}
    \includegraphics[width=\textwidth]{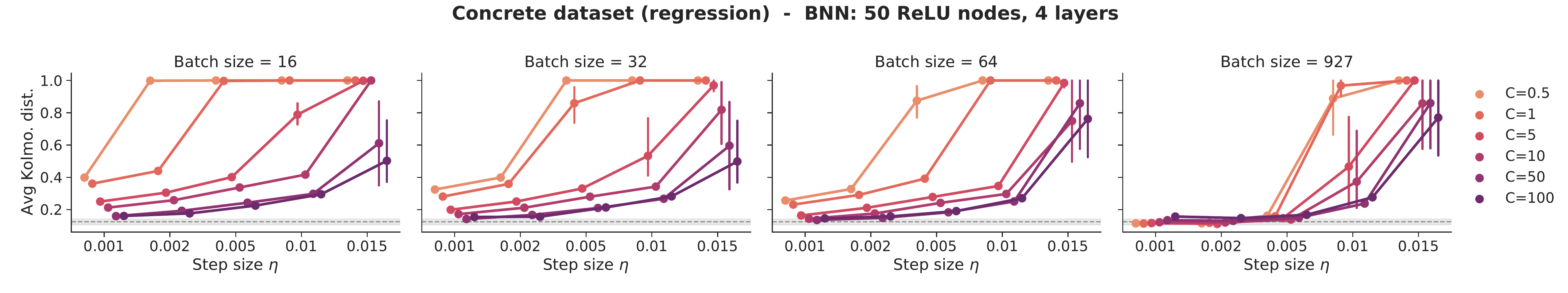}
    \includegraphics[width=\textwidth]{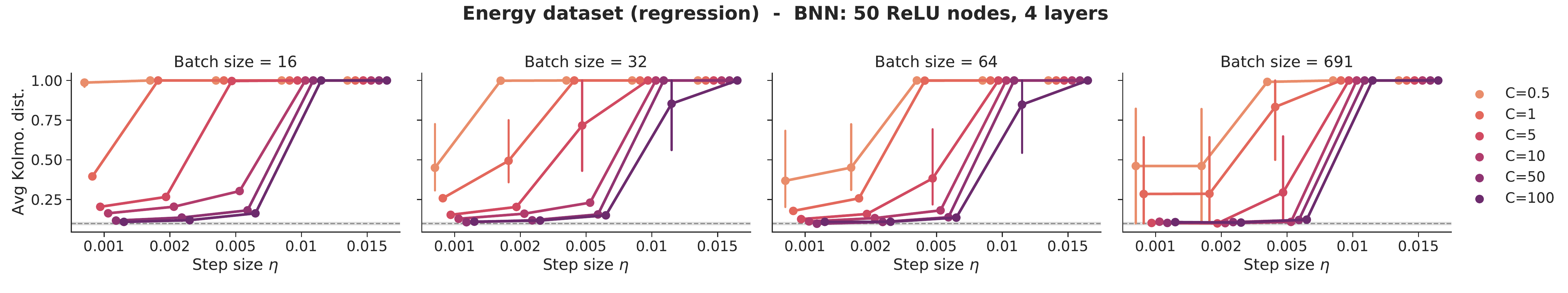}
    \includegraphics[width=\textwidth]{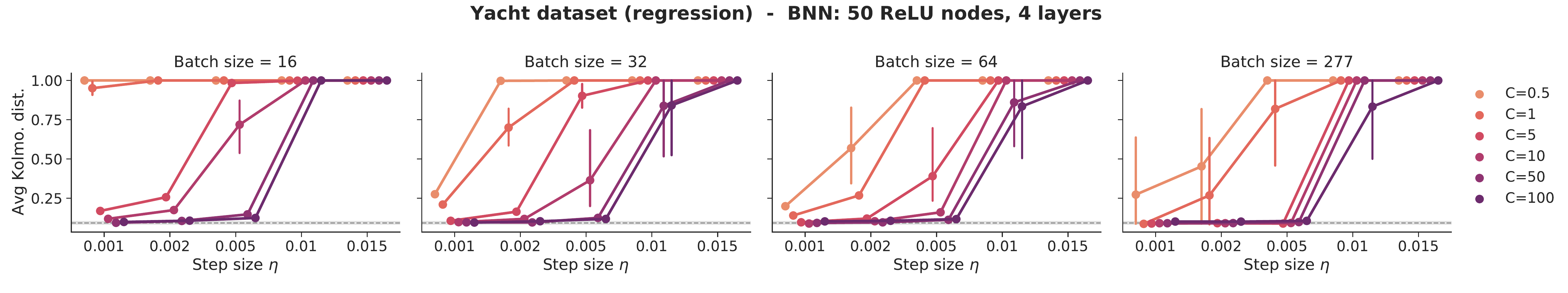}
    \includegraphics[width=\textwidth]{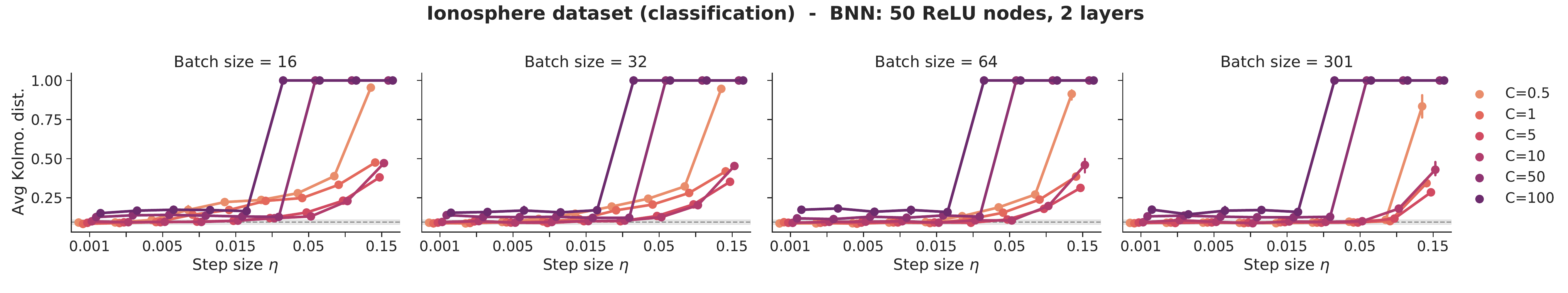}
    \includegraphics[width=\textwidth]{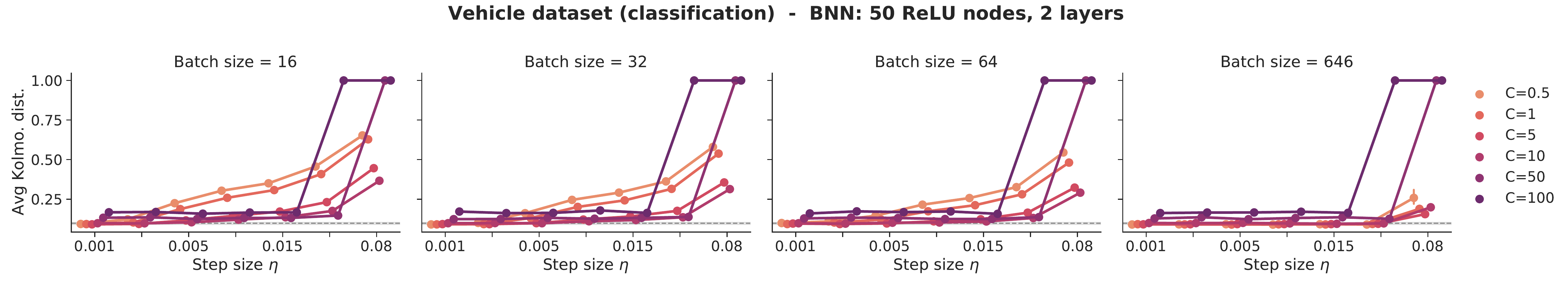}
    \caption{Exploration of step size and batch size for different values of a scalar friction coefficient $C$. The grey dotted line denotes the self-distance for the distribution of the oracle. \leapfrog is used in all cases.}
    \label{fig:friction_all}
\end{figure}

\newpage

\end{document}